\newcommand{\newadd}[1]{{\color{black}#1}}
\newcommand{\revadd}[1]{{\color{black}#1}}
\newcommand{\numArms}{K}
\newcommand{\arm}{k}
\newcommand{\meanReward}{\mu}
\newcommand{\Index}{I}
\newcommand{\pulls}{n}
\newcommand{\slot}{t}
\newcommand{\gap}{\Delta}
\newcommand{\estimateMean}{\hat{\phi}}
\newcommand{\expectedPseudoReward}{\phi}
\newcommand{\reward}{r}
\newcommand{\estimateReward}{s}
\newcommand{\optimistGap}{\tilde{\Delta}}
\newcommand{\indicator}{\mathbbm{1}}
\def \OO {\mathrm{O}}
\newtheorem{lem}{Lemma}
\newtheorem{thm}{Theorem}
\newtheorem{defn}{Definition}
\newtheorem{coro}{Corollary}
\newtheorem{rem}{Remark}
\newtheorem{fact}{Fact}
\newcommand{\E}[1]{\mathbb{E}\left[{#1}\right]}
\DeclareMathOperator*{\argmax}{arg\,max}
\newcommand{\constant}{\zeta}
\newcommand{\B}{b}
\newcommand{\Confidence}{B}
\crefname{equation}{}{}
\Crefname{equation}{}{}
\crefname{thm}{theorem}{theorems}
\Crefname{thm}{Theorem}{Theorems}
\crefname{clm}{claim}{claims}
\Crefname{clm}{Claim}{Claims}
\Crefname{coro}{Corollary}{Corollaries}
\Crefname{lem}{Lemma}{Lemmas}
\Crefname{sec}{Section}{Sections}
\crefname{app}{appendix}{appendices}
\Crefname{app}{Appendix}{Appendices}
\crefname{prop}{proposition}{propositions}
\Crefname{prop}{Proposition}{Propositions}
\Crefname{propty}{Property}{Properties}
\crefname{figure}{figure}{figures}
\Crefname{figure}{Figure}{Figures}
\crefname{fig}{figure}{figures}
\Crefname{fig}{Figure}{Figures}
\crefname{defn}{definition}{definitions}
\Crefname{defn}{Definition}{Definitions}
\crefname{fact}{fact}{facts}
\Crefname{fact}{Fact}{Facts}
\crefname{appendix}{appendix}{appendices}
\Crefname{appendix}{Appendix}{Appendices}
\crefname{algo}{algorithm}{algorithms}
\Crefname{algo}{Algorithm}{Algorithms}
\crefname{algorithm}{algorithm}{algorithms}
\Crefname{algorithm}{Algorithm}{Algorithms}
\crefname{conj}{conjecture}{conjectures}
\Crefname{conj}{Conjecture}{Conjectures}
\crefname{obs}{observation}{observations}
\Crefname{obs}{Observation}{Observations}
\begin{document}

\title{Best-Arm Identification in Correlated Multi-Armed Bandits}

\author{\name Samarth Gupta \email samarthg@andrew.cmu.edu \\
 \addr Carnegie Mellon University\\
 Pittsburgh, PA 15213 
 \AND
 \name Gauri Joshi \email gaurij@andrew.cmu.edu \\
 \addr Carnegie Mellon University\\
 Pittsburgh, PA 15213 
 \AND
 \name Osman Ya\u{g}an \email oyagan@andrew.cmu.edu\\
 \addr Carnegie Mellon University\\
 Pittsburgh, PA 15213}

\editor{No editors}
\maketitle
          
\begin{abstract}
In this paper we consider the problem of best-arm identification in multi-armed bandits in the fixed confidence setting, where the goal is to identify, with probability $1-\delta$ for some $\delta>0$, the arm with the highest mean reward in minimum possible samples from the set of arms $\mathcal{K}$. Most existing best-arm identification algorithms and analyses operate under the assumption that the rewards corresponding to different arms are independent of each other. We propose a novel correlated bandit framework that captures domain knowledge about correlation between arms in the form of upper bounds on expected conditional reward of an arm, given a reward realization from another arm. Our proposed algorithm C-LUCB, which generalizes the LUCB algorithm utilizes this partial knowledge of correlations to sharply reduce the sample complexity of best-arm identification. More interestingly, we show that the total samples obtained by C-LUCB are of the form $\OO\left(\sum_{k \in \mathcal{C}} \log\left(\frac{1}{\delta}\right)\right)$ as opposed to the typical $\OO\left(\sum_{k \in \mathcal{K}} \log\left(\frac{1}{\delta}\right)\right)$ samples required in the independent reward setting. The improvement comes, as the $\OO(\log(1/\delta))$ term is summed only for the set of \emph{competitive} arms $\mathcal{C}$, which is a subset of the original set of arms $\mathcal{K}$. The size of the set $\mathcal{C}$, depending on the problem setting, can be as small as $2$, and hence using C-LUCB in the correlated bandits setting can lead to significant performance improvements. Our theoretical findings are supported by experiments on the Movielens and Goodreads recommendation datasets.
\end{abstract}

\section{Introduction}
The \emph{multi-armed bandit} (MAB) problem falls under the class of sequential decision making problems. In the classical multi-armed bandit setting, the player is asked to sample one of the $K$ arms at every round $t=1, 2, \ldots$. Upon sampling arm  $k_{t}$ at round $t$, the player receives a {\em random} reward $R_{t}$ drawn from the reward distribution of arm $k_{t}$. These reward distributions are assumed to be {\em unknown} to the player, and the most commonly studied objective  is to maximize the {\em long-term} cumulative reward; e.g., see the early work by Lai and Robbins \cite{lai1985asymptotically}. Since then, the reward maximization problem has received attention in both classical settings \cite{auer2002finite, agrawal2013further} and in variants of the classical multi-armed bandits such as linear \cite{abbasi2011improved}, contextual \cite{li2010contextual}, structured bandits \cite{combes2017minimal} etc.

\begin{figure}[t]
    \centering
    \includegraphics[width = 0.7\textwidth]{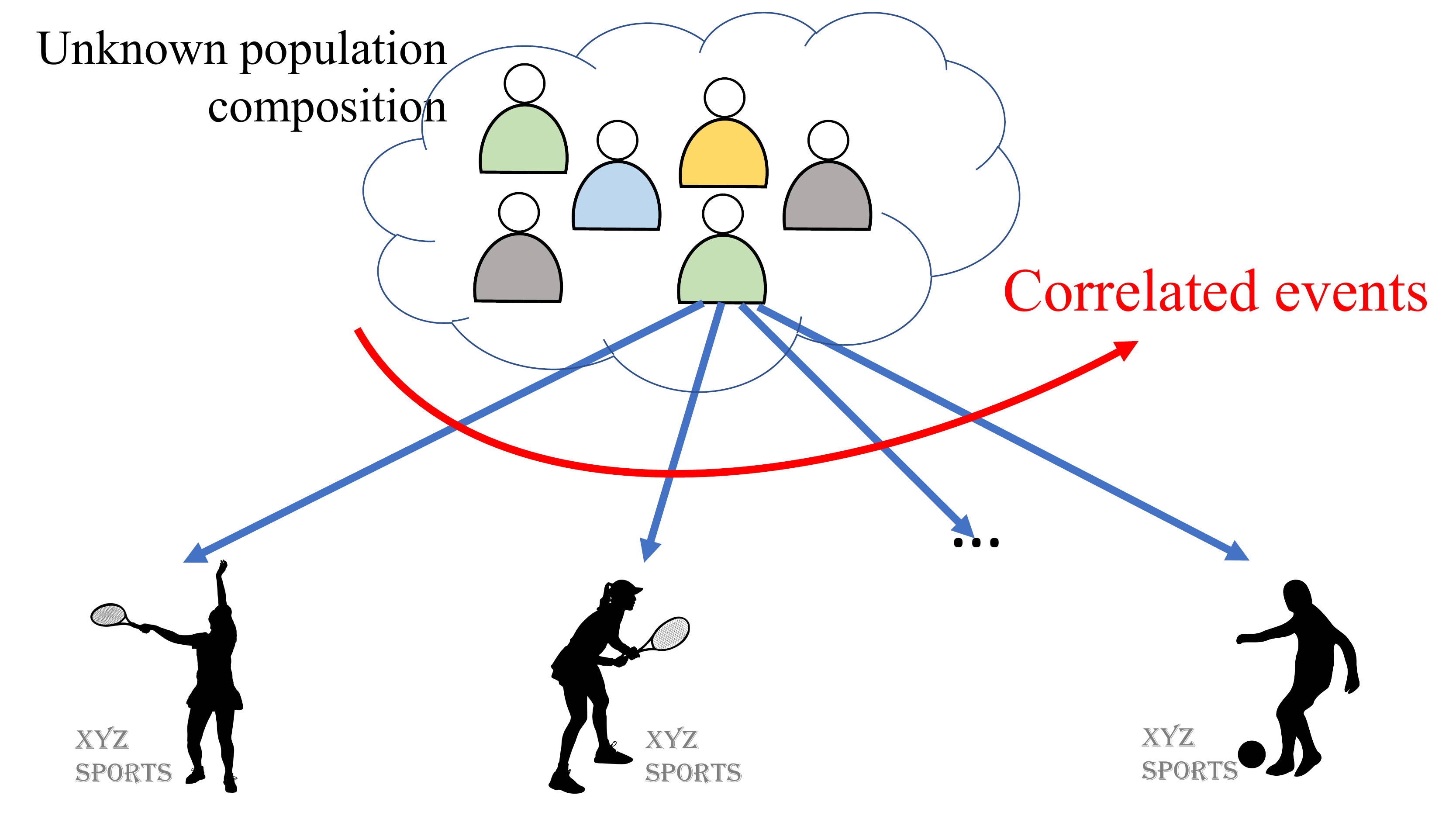}
    \caption{The ratings of a user corresponding to different versions of the same ad are likely to be correlated. For example, if a person likes first version, there is a good chance that they will also like the 2nd one as it also related to tennis. However, the population composition is unknown, i.e., the fraction of people liking the first/second or the last version is unknown.} 
    \label{fig:clooneyEx}
    \vspace{-0.2cm}
\end{figure}

\vspace{0.1cm} 
\noindent
\textbf{Best-arm Identification in Bandits with Independent Arms.} Instead of maximizing the cumulative reward, an alternative objective in the Multi-Armed Bandit setting is to identify the \emph{best arm} (i.e., the arm with the largest mean reward) from as few samples as possible. While reward maximization has been studied extensively, the best-arm identification problem is seldom explored in settings outside of the {\em classical MAB framework}, i.e., the setting where rewards corresponding to different arms are independent of each other. The best-arm identification problem can be formulated in two different ways, namely fixed confidence \cite{jamieson2014best} and fixed budget \cite{bubeck2009pure}.
In the fixed confidence setting, the player is provided with a {\em confidence parameter} $\delta$ and their goal is to achieve the fastest (i.e., with the least number of samples) possible identification of the best arm with a probability of at least $1 - \delta$. In the fixed budget setting, the number of samples that the player can receive is fixed, and the goal is to identify the best arm with the highest possible confidence. 
In this paper, we focus on the fixed confidence setting.

The best arm identification problem has been explored in the classical MAB framework
\cite{jamieson2014lil, kaufmann2013information, tanczos2017kl, simchowitz2017simulator,kalyanakrishnan2012pac, bechhofer1958sequential, even2002pac} and three distinct approaches have shown promise, namely, the racing/successive elimination, law of iterated logarithm upper confidence bound (lil'UCB) and lower and upper confidence bound (LUCB) based approaches. These algorithms maintain upper and lower confidence bound indices for each arm and usually stop once the lower confidence index of one arm becomes larger than upper confidence bound of all other arms (discussed in more detail in \Cref{sec:priorWork}). These three approaches differ in their approach of sampling arms. The successive elimination approach samples arms in a round robin manner, lil'UCB samples the arm with the largest upper confidence bound index at round $t$ and LUCB samples two distinct arms at each round, first it samples the arm with the largest empirical mean and then amongst the rest it samples an arm with the largest upper confidence bound index.

These best-arm identification algorithms have found their use in a wide variety of application settings, such as clinical trials \cite{villar2015multi} , ad-selection campaigns \cite{white2012bandit} , crowd-sourced ranking \cite{tanczos2017kl} and hyperparameter optimization \cite{li2017hyperband} by treating different different drugs/treatments, advertisements, items to be ranked and hyperparameters as the arms in the multi-armed bandit problem. 

\begin{figure}[t]
    \centering
    \includegraphics[width = 0.6\textwidth]{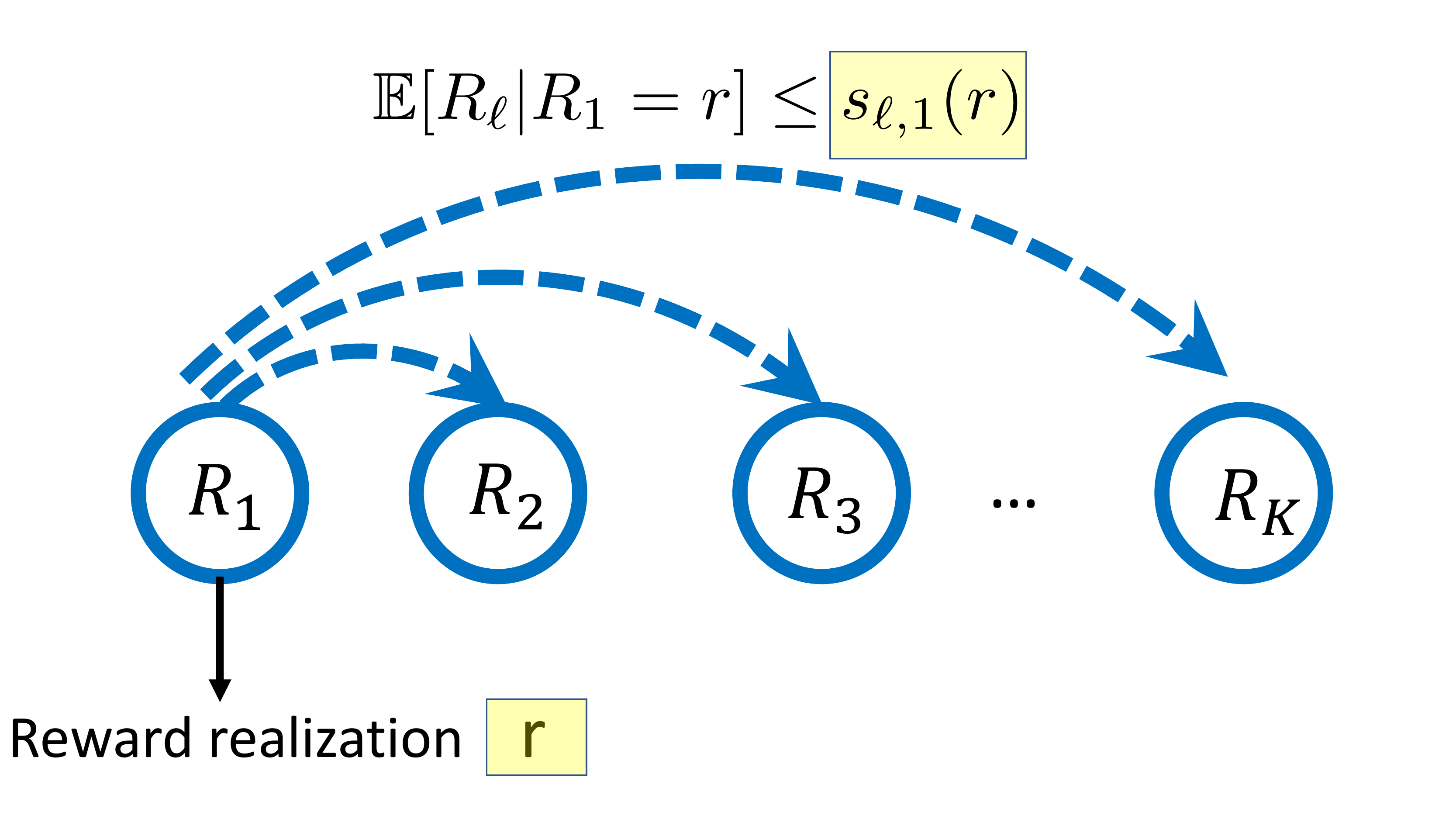}
    \caption{ Upon observing a reward $r$ from an arm $k$, pseudo-rewards $s_{\ell,k}(r),$ give us an upper bound on the conditional expectation of the reward from arm $\ell$ given that we observed reward $r$ from arm $k$. These pseudo-rewards models the correlation in rewards corresponding to different arms. 
    }
    \label{fig:pseudoModel}
\end{figure}

\vspace{0.2cm} \noindent
\textbf{Best-arm Identification when Rewards are Correlated across arms.} The aforementioned best-arm identification algorithms all operate under the assumption that the rewards  from different arms are independent of each other; e.g., at a given round $t$, the reward obtained from arm $k$ does not provide any information about the reward that one might have received if they sampled another arm $\ell$. However, this may not be the case in many applications of MABs.
For instance,  the response of a user for different advertisements in an ad-campaign is likely to be correlated as the ad designs may be related or starkly different with each other (see \Cref{fig:clooneyEx}). \newadd{One  way to learn these correlations would be to pull multiple arms at each round $t$. Since this is not allowed in the standard MAB setup, we assume that {\em partial} information about such correlations is available a priori.} In practice, the presence of such correlations may be known beforehand either through domain expertise or through controlled studies where each user is presented with multiple arms.  For example, before starting ad campaign, partial information may be known about the {\em expected} reward we would receive from a user by showing that ad version $\ell$, given their response to version $k$. 
A similar argument can be made in the application domain of clinical trials, namely in identifying the best drug for an unknown disease. There, the effect of different drugs on an individual may be correlated if the drugs share similar or contrasting components among them. In this context, the correlations would be expected to be known by the domain expertise of the physicians involved. The current best-arm identification algorithms cannot leverage these correlations to reduce the number of samples required in identifying the best arm. This papers  aims to fill this gap in the literature through a new MAB model introduced next.

\vspace{0.2cm} 
\noindent
\textbf{A Novel Correlated MAB model.} Motivated by this, we consider a multi-armed bandit framework where rewards corresponding to different arms are correlated. We model the \emph{partial} knowledge of correlations through \emph{pseudo-rewards} that represent  upper bounds on the conditional mean rewards. The pseudo-rewards provide us an upper bound on the expected reward from arm $\ell$, given that the response from arm $k$ was $r$ (See \Cref{fig:pseudoModel}), i.e.,
\begin{equation}
  \E{R_{\ell} | R_k = r} \leq s_{\ell,k}(r). 
\end{equation}
A key advantage of this model is that pseudo-rewards are just \emph{upper bounds} on the conditional expected reward and they can be arbitrarily loose. In the case where all bounds are trivial, our framework reduces to that of the classical Multi-armed bandit setting. This model was first proposed by us in \cite{gupta2019multi}, where we studied the problem of reward maximization. Two seemingly related models are the structured \cite{gupta2018unified, huang2017structured} and contextual \cite{li2010contextual} multi-armed bandit models.

\noindent
\textbf{Comparison with Contextual and Structured bandits: } In contextual bandits,  the context features of the user (i.e., the user to whom ad is recommended) are assumed to be known, and the goal is to learn a mapping from the context features to the expected rewards so that  each user can be given a {\em personalized} recommendation. In contrast, our model focuses on a setting where context features of the users are {\em not} known and the goal is to find a {\em single} recommendation for the {\em entire} demographic. 

Our work falls under the class of structured bandits, which in its full generality, poses restrictions on the joint probability distribution of rewards. To the best of our knowledge, existing work on best-arm identification   in structured bandits focus on settings where mean rewards of the arms are related to one another through a hidden parameter $\theta$. In particular, the mean reward of arm $k$ is $\mu_k(\theta)$, where $\theta$ is a hidden parameter common to all $K$ arms. It assumes that the mean reward mappings $\mu_k(\theta)$ are known beforehand, but the hidden parameter is unknown. While the mean rewards are related to one another in these works, the rewards are not necessarily correlated. A more detailed comparison is presented in \Cref{sec:priorWork}. In this work, we explicitly model the correlation through knowledge of pseudo-rewards.

\vspace{0.2cm} 
\noindent
\textbf{Proposed C-LUCB Algorithm and its Sample Complexity.} After establishing a correlated bandit model, we then focus on designing best-arm identification algorithms, that are able to make use of this correlation information to identify the best-arm in fewer samples than the classical best-arm identification algorithms. In particular, we propose an approach that makes use of the pseudo-reward information and extends the LUCB approach to the correlated bandit setting. Our sample complexity analysis shows that the proposed C-LUCB approach is able to explore certain arms without explicitly sampling them. Due to this, we see that these arms, termed as non-competitive contribute only an $\OO(1)$ term in the sample complexity as to the typical $\OO\left(\log\frac{1}{\delta}\right)$ contribution by each arm. As a result of this, we are able to provide better sample complexity results than LUCB in the correlated bandit setting. In particular, the LUCB algorithm stops with probability $1 - \delta$ after obtaining at most $\sum_{k \in \mathcal{K}} \frac{2\constant}{\Delta_k^2} \left(\log\left(\frac{K\log\left(\frac{1}{\Delta_k^2}\right)}{\delta}\right)\right)$ samples, where $\Delta_k = \mu_{k^*} - \mu_k$, i.e., the difference in mean reward of optimal arm $k^*$ and mean reward of arm $k$ and $\Delta_{k^*} = \min_{k \neq k^*} \Delta_k$, i.e., the gap between best and second best arm and $\constant > 0$ is a constant. The C-LUCB stops after at most $\sum_{k \in \mathcal{C}} \frac{2\constant}{\Delta_k^2} \left(\log\left(\frac{2K\log\left(\frac{1}{\Delta_k^2}\right)}{\delta}\right)\right) + \OO(1)$ samples with probability $1 - \delta$. Here, $\mathcal{C} \subseteq \mathcal{K}$ with $2 \leq |\mathcal{C}| \leq K $ depending on the problem instance. As the size of the set $\mathcal{C}$ can be smaller than $\mathcal{K},$ we improve upon the sample complexity results of standard approaches of best-arm identification. This theoretical advantage gets reflected in our experiments on two real-world recommendation datasets, namely, Movielens and Goodreads.  For instance, \Cref{fig:introFig} illustrates the performance of our proposed algorithms in a correlated bandit framework, where the goal is to identify the best movie genre from the set of 18 movie genres in the Movielens dataset. As our proposed approach utilizes the correlations in the problem, they draw fewer samples than the Racing, lil'UCB and the LUCB based approaches. 

\begin{figure}[t]
    \centering
    \includegraphics[width = 0.9\textwidth]{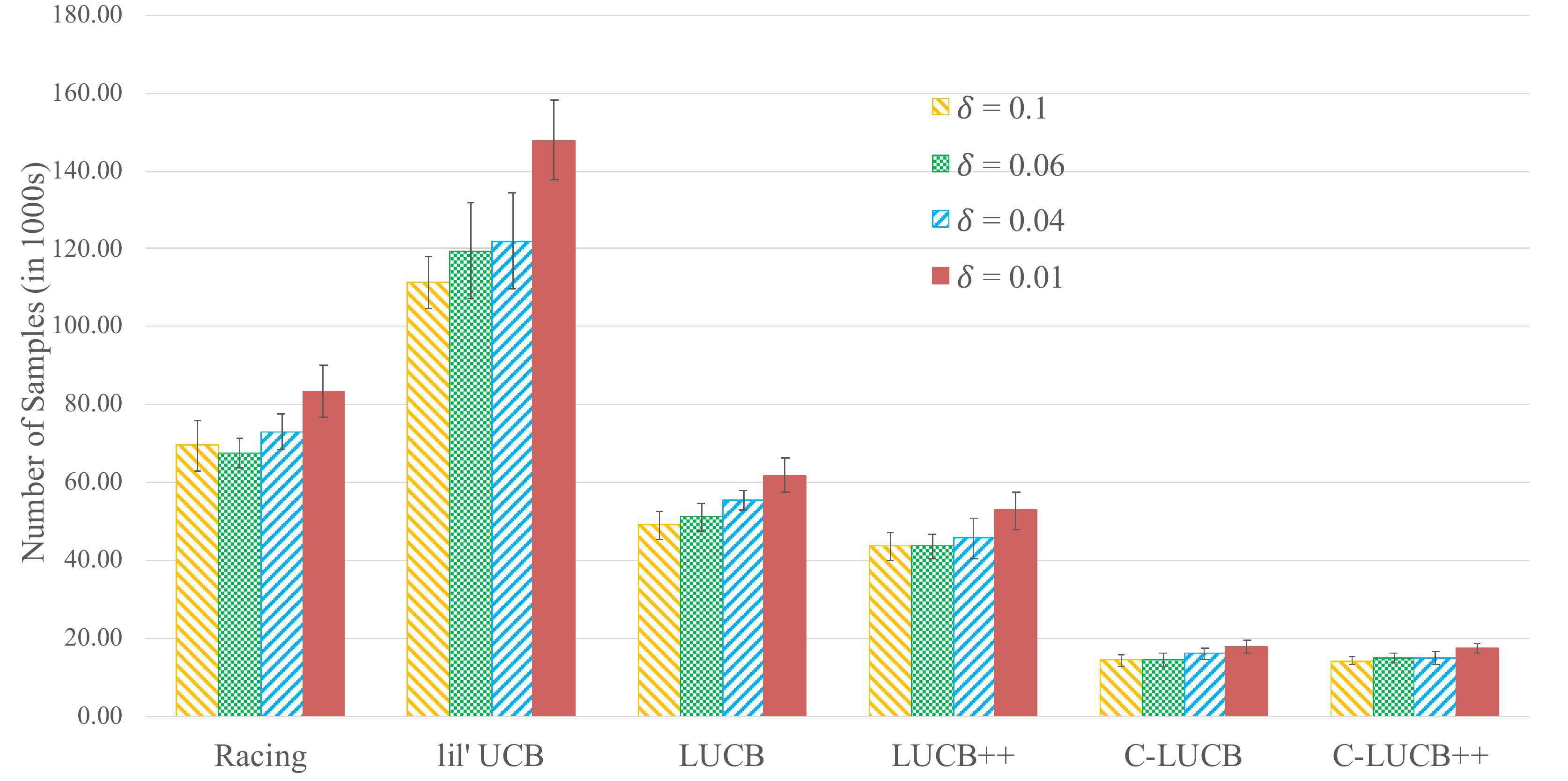}
    \caption{This plot illustrates the number of samples required by different algorithms to identify the best movie genre out of the 18 possible movie genres in the Movielens dataset with confidence $1 - \delta$. As $\delta$ decreases, the algorithms need more samples to identify the best arm. As our proposed C-LUCB and C-LUCB++ algorithms utilize correlation information, they identify the best arm in fewer samples relative to Racing, lil'UCB, LUCB and LUCB++. 
    }
    \label{fig:introFig}
\end{figure}

\vspace{0.2cm} \noindent
\textbf{Organization of the rest of the paper.}
In \Cref{sec:model} of this paper, we present a new multi-armed bandit framework, where correlation between arms is captured in the form of pseudo-rewards. We also discuss how pseudo-rewards can be computed in practical settings in \Cref{sec:model}. In \Cref{sec:priorWork}, we review state-of-the-art best-arm identification algorithms such as successive elimination (or racing), lil'UCB, and LUCB designed for the classical (independent arm) framework. We also discuss how our proposed correlated multi-armed bandit framework compares with the structured and linear bandit frameworks that have been studied previously. In \Cref{sec:algo} we propose the C-LUCB algorithm, and compare it with state-of-the-art approaches. We discuss several variants of C-LUCB in \Cref{sec:variants}. In \Cref{sec:regret} we analyze the sample complexity analysis of C-LUCB and discuss its proof technique and implications. 
This analysis reveals that utilizing correlations can lead to significant reduction in the number of samples required to identify the best-arm. Finally, in \Cref{sec:experiments} we demonstrate the practical applicability our proposed model and algorithm via extensive experiments on real-world recommendation datasets.

\section{The Correlated Multi-Armed Bandit Model}
\label{sec:model}
\subsection{Problem formulation}

\begin{table}[t]
\centering
\begin{tabular}{|l|l|l|l|l|}
\cline{1-2} \cline{4-5}
\textbf{r} & \textbf{$s_{2,1}(r)$} &  & \textbf{r} & \textbf{$s_{1,2}(r)$} \\ \cline{1-2} \cline{4-5} 
\textbf{0} & 0.7                   &  & \textbf{0} & 0.8                     \\ \cline{1-2} \cline{4-5} 
\textbf{1} & 0.4                   &  & \textbf{1} & 0.5                     \\ \cline{1-2} \cline{4-5} 
\end{tabular}
\\ \vspace{2mm}
\parbox{.45\linewidth}{
\centering
\begin{tabular}{|l|l|l|}
\hline
       & $R_1 = 0$ & $R_1 = 1$ \\ \hline
$R_2 = 0$ & 0.2       & 0.4       \\ \hline
$R_2 = 1$ &0.2 &0.2 \\ \hline
\end{tabular}
\\ \vspace{1mm} \textbf{(a)}
}
\hfill
\parbox{.45\linewidth}{
\centering
\begin{tabular}{|l|l|l|}
\hline
        & $R_1 = 0$ & $R_1 = 1$ \\ \hline
$R_2 = 0$ & 0.2       & 0.3       \\ \hline
$R_2 = 1$ & 0.4       & 0.1       \\ \hline
\end{tabular}
\\ \vspace{1mm}  \textbf{(b)}
}
\caption{The top row shows the pseudo-rewards of arms 1 and 2, i.e., upper bounds on the conditional expected rewards (which are known to the player). The bottom row depicts two possible joint probability distribution (unknown to the player). Under distribution (a), Arm 1 is optimal whereas Arm 2 is optimal under distribution (b). }
\label{tab:pseudoBin}
\vspace{-0.2cm}
\end{table}

Consider a Multi-Armed Bandit setting with $K$ arms $\{1,2, \ldots K\}$. At each round $t$, we sample an arm $k_t \in \mathcal{K}$ and receive a random reward $R_{k_t} \in [0,\B]$. Among the set of $K$ arms, we denote the arm with the largest mean reward as the \emph{best-arm} $k^*$, i.e., $k^* = \argmax_{k \in \mathcal{K}} \mu_k$. In the \emph{fixed-confidence} setting \cite{jamieson2014best}, the objective is to identify the best-arm in as few samples as possible. In particular, given $\delta>0$, the goal is to devise a sampling strategy that stops at some round $T$ (a random variable) and declares an arm $k^{\text{out}}$ as the optimal arm, where, 
$$\Pr(k^{\text{out}} = k^*) \geq 1 - \delta.$$ Put differently, we aim to find the best arm with probability at least $1-\delta$ while minimizing the total {\em number of samples} drawn from the arms. We note that the number of samples can be different from the number of rounds $T$ as some algorithms (e.g., LUCB, Racing) sample multiple arms in one round. Using the total number of samples drawn until round $T$ allows us to compare them fairly against algorithms that draw only one sample at each round $t$ 
(e.g., lil'UCB).

The classical multi-armed bandit setting implicitly assumes that the rewards $R_1, R_2, \ldots, R_K$ are independent. That is, $\Pr(R_{\ell} = r_\ell | R_k = r) = \Pr(R_{\ell} = r_\ell) \quad \forall{r_{\ell},r}$ and $\forall{\ell,k},$ which implies that, $\E{R_{\ell} | R_k = r} = \E{R_{\ell}} \quad \forall{r,\ell,k}$.  Motivated by 
the fact that rewards of a user corresponding to different arms might be correlated,
we consider a setup where 
$f_{R_\ell | R_{k}}(r_{\ell} | r_k) \neq f_{R_\ell}(r_{\ell})$, with $f_{R_\ell}(r_{\ell})$ denoting the probability distribution function of the reward from arm $\ell$. Consequently, due to such correlations, we have $\E{R_{\ell} | R_k} \neq \E{R_{\ell}}$.

In our problem setting, we consider that the player has partial knowledge about the joint distribution of correlated arms in the form of \emph{pseudo-rewards}, as defined below:

\begin{defn}[Pseudo-Reward]
Suppose we sample arm $k$ and observe reward $r$. Then the pseudo-reward of arm $\ell$ with respect to arm $k$, denoted by $s_{\ell,k}(r)$, is an upper bound on the conditional expected reward of arm $\ell$, i.e.,
\begin{equation}
 \mathbb{E}[R_\ell | R_k = r] \leq s_{\ell,k}(r).
\end{equation}
For convenience, we set $s_{\ell,\ell}(r) = r$.
\end{defn}

\begin{rem}
Note that the pseudo-rewards are upper bounds on the expected conditional reward and not hard bounds on the conditional reward itself. This makes our problem setup practical as upper bounds on expected conditional reward are easier to obtain, as illustrated below.
\end{rem}

The pseudo-reward information consists of a set of $K \times K$ functions $s_{\ell,k}(r)$ over $[0,\B]$. This information can be obtained in practice through either domain and expert knowledge or from controlled surveys. For instance, in the context of medical testing, where the goal is to identify the best drug to treat an ailment from among a set of $K$ possible options, the effectiveness of two drugs is correlated when the drugs share some common ingredients. Through domain knowledge of doctors, it is possible to answer questions such as ``what are the chances that drug $B$ would be effective given drug $A$ was not effective?", through which we can infer the pseudo-rewards. 

\begin{table}[t]
\parbox{.3\linewidth}{
\centering
Observation from Arm 1 \\
\vspace{2mm}
\begin{tabular}{|l|l|l|}
\hline
\textbf{r} & \textbf{$s_{2,1}(r)$} & \textbf{$s_{3,1}(r)$} \\ \hline
\textbf{0} & 0.7                   &  \textcolor{red}{2}                  \\ \hline
\textbf{1} & 0.8                   & 1.2                   \\ \hline
\textbf{2} & \textcolor{red}{2}                   & 1                   \\ \hline
\end{tabular}
}
\hfill
\parbox{.3\linewidth}{
\centering
Observation from Arm 2 \\
\vspace{2mm}
\begin{tabular}{|l|l|l|}
\hline
\textbf{r} & \textbf{$s_{1,2}(r)$} & \textbf{$s_{3,2}(r)$} \\ \hline
\textbf{0} & 0.5                   & 1.5                   \\ \hline
\textbf{1} & 1.3                   &       \textcolor{red}{2}              \\ \hline
\textbf{2} &    \textcolor{red}{2}                 &   0.8                 \\ \hline
\end{tabular}
}
\hfill
\parbox{.3\linewidth}{
\centering
Observation from Arm 3 \\
\vspace{2mm}
\begin{tabular}{|l|l|l|}
\hline
\textbf{r} & \textbf{$s_{1,3}(r)$} & \textbf{$s_{2,3}(r)$} \\ \hline
\textbf{0} & 1.5                    &        \textcolor{red}{2}             \\ \hline
\textbf{1} &   \textcolor{red}{2}                  & 1.3                   \\ \hline
\textbf{2} & 0.7                   & 0.75                   \\ \hline
\end{tabular}
}
\caption{
If some pseudo-reward entries are unknown (due to lack of domain knowledge), those entries can be replaced with the maximum possible reward and then used in the C-LUCB algorithm. We do that here by entering $2$ for the entries where pseudo-rewards are unknown.}
\label{tab:paddedEntries1}
\vspace{-0.2cm}
\end{table}

\vspace{0.1cm}
\noindent
\textbf{Computing Pseudo-Rewards from domain knowledge or historical data.} The pseudo-rewards can also be obtained from domain knowledge or through {\em offline} pilot surveys in which users are presented with {\em all} $K$ arms allowing us to sample $R_1, \ldots, R_K$ jointly. Through such data, we can evaluate an estimate on the conditional expected rewards. For example in \Cref{tab:pseudoBin}, we can look at all users who obtained $0$ reward for Arm 1 and calculate their average reward for Arm 2, say $\hat{\mu}_{2,1}(0)$. Since we only need an upper bound on $\E{R_2 | R_1 = 0}$, we can use any one of the following approaches to set the pseudo-reward $s_{2,1}(0)$.

\begin{enumerate}
    \item The pseudo-reward $s_{2,1}(0)$ can be set to $\hat{\mu}_{2,1}(0) + \hat{\sigma}_{2,1}(0)$, where $\hat{\mu}_{2,1}(0)$ is the empirical average of conditional rewards of $R_2$ given $R_1= 0$ and $\hat{\sigma}_{2,1}(0)$ is the empirical standard deviation. Adding the standard deviation ensures that the pseudo-reward is an upper bound on the conditional expected reward $\E{R_2 | R_1 = 0}$ with high probability.
    \item Alternately, pseudo-rewards for any unknown conditional mean reward could be set to $\B$, the maximum possible reward for the arm (recall that $R_k \in [0, \B]$). \Cref{tab:paddedEntries1} shows an example where unknown pseudo-rewards are set to $2$, the maximum possible reward. 
    \item  If through the training data, we obtain a soft upper bound $u$ on $\E{R_2|R_1 = 0}$ that holds with probability $1-\delta$, then we can translate it to the pseudo-reward $s_{2,1}(0) = u \times (1 - \delta) + 2 \times \delta$, (assuming maximum possible reward is 2). 
\end{enumerate}

\begin{rem}[Reduction to Classical Multi-Armed Bandits]
When all pseudo-reward entries are unknown, then all pseudo-reward entries can be filled with maximum possible reward for each arm, that is, $s_{\ell, k}(r) = \B$  $\forall{r,\ell,k}$. In that case, the problem framework studied in this paper reduces to the setting of the classical Multi-Armed Bandit problem.  
\end{rem}

While the pseudo-rewards are known in our setup, the underlying joint probability distribution of rewards is unknown. For instance, \Cref{tab:pseudoBin}(a) and \Cref{tab:pseudoBin}(b) show two joint probability distributions of the rewards that are both possible given the pseudo-rewards at the top of \Cref{tab:pseudoBin}.
If the joint distribution is as given in \Cref{tab:pseudoBin}(a), then Arm 1 is optimal, while Arm 2 is optimal if the joint  distribution is as given in \Cref{tab:pseudoBin}(b).

\subsection{Application for correlated multi-armed bandits}

Consider a scenario where a company needs to run a display advertising campaign \revadd{in a community} for one of their products, and their design team has proposed several different designs. The traction (i.e., the number of clicks, time spent on the ad) that the company generates is likely to be dependent on the design that is used for publicity. In order to find the best design, the company can run a best-arm identification algorithm by viewing the problem as a multi-armed bandit problem. Here, at each round $t$, a \revadd{new} user \revadd{of that community} enters the system and they show one of the $K$ designs (i.e., arms) to this user. The reward is received through the response of the user to the ad. A straightforward solution would be to treat this problem as a classical multi-armed bandit problem and use a well known best-arm identification algorithm such as lil'UCB, LUCB or successive elimination \revadd{to identify the best design for the community}. But, in practice, the rewards corresponding to different designs are likely to be correlated to one another. Consider the example shown in \Cref{fig:clooneyEx}, over there if a user reacts positively to the first design, the user is also likely to react positively to the second ad as both ads are related to tennis. Such correlations, when accounted for in the form of pseudo-rewards, can help us identify the best-arm in much fewer samples relative to algorithms such as lil'UCB, LUCB and Successive elimination that do not account for correlations in choices. 

These correlations could be known from a controlled survey or a previous advertisement campaign performed in a different demographic. For instance, from these surveys one can interpret information such as "users who like ad 1 representing tennis tend to like ad 2 that also represents tennis but not ad K which represents soccer". If a company wants to identify the best ad in a new demographic, it can use this learned correlation information to identify the best-ad in a quick manner. Note that the population composition in the two demographics may be very different, i.e., the fraction of users liking tennis may be very different, but it is likely that the correlation in choices remain consistent across the two demographics. One can also consider the example of identifying best policy to publicize for a political campaign, where users preferences towards different policies (i.e., climate change, gun control, abortion laws) are often correlated in all demographics, but the marginal distribution of people advocating for a single policy is very different in different communities. In such scenarios, transferring correlation information from one demographic to another by modeling them through pseudo-reward in our correlated bandit framework can help reduce the number of samples needed to identify the best-arm.

These pseudo-rewards can also be known from domain knowledge. Consider the problem of identifying the best drug for the treatment of an unknown disease. The effectiveness of different drugs is likely to be correlated as they often contain similar components. In such a situation, the domain expertise of doctors can tell us "what are the chances that drug y will be effective given drug x was effective?". One can use a conservative upper bound on the answer to this question to model pseudo-rewards. Alternatively, such correlation information could also be obtained on how different people react to different drugs in a community. As the effectiveness of drugs depends on underlying medical conditions of the patients, their response would be correlated. This correlation knowledge can then be transferred to identify the best treatment in a different community, where the distribution of underlying medical conditions may be very different.

\subsection{Special Case: Correlated Bandits with a Latent Random Source}
\label{subsec:specialCase}

\begin{figure}[t]
    \centering
    \includegraphics[width = 0.6\textwidth]{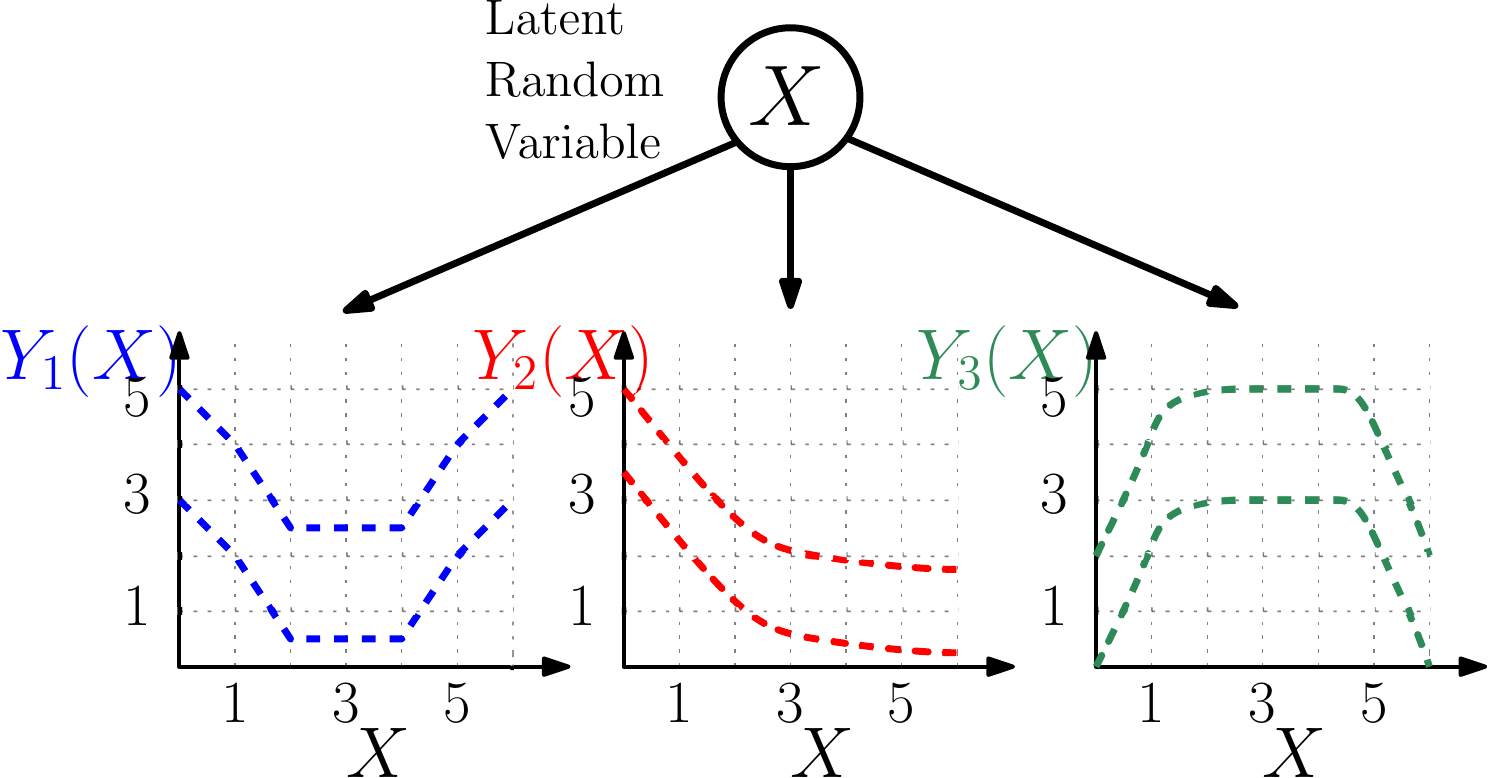}
    \caption{A special case of our proposed problem framework is a setting in which rewards for different arms are correlated through a hidden random variable X. At each round $X$ takes a realization in $\mathcal{X}$. The reward obtained from an arm $k$ is $Y_k(X)$. The figure illustrates lower bounds and upper bounds on $Y_k(X)$ (through dotted lines). For instance, when $X$ takes the realization $1$, reward of arm 1 is a random variable bounded between $2$ and $4$. }
    \label{fig:latentExample}
    \vspace{-0.2cm}
\end{figure}

The studied correlated multi-armed bandit can generalize several other interesting and unexplored multi-armed bandit problems. For example, one special case is the correlated multi-armed bandit model where rewards are correlated through a latent random source \cite{gupta2020correlated} (See \Cref{fig:latentExample}). In this problem setup, the hidden random variable $X$ takes an i.i.d. realization $X_t \in \mathcal{X}$ at round $t$ and upon pulling arm $k$ at round $t$, reward $Y_k(X_t)$ is observed. For the application setting of ad-recommendation, the random variable $X$ can represent the \textit{features} (i.e., age/occupation/income etc.) of the user. At each round a new user with feature $X_t$ enters the system, and the goal is to identify the single best ad recommendation for the whole population in as few samples as possible. The feature $X_t$ remains hidden to the player due to privacy concerns. Additionally, the reward $Y_k(X_t)$ represents the preference of the $k^{\text{th}}$ ad for the user with feature $X_t$. 

In this problem setup, the correlation information is known to the player in the form of upper and lower bounds on $Y_k(X)$, namely $\bar{g}_k(X)$ and $\underline{g}_k(X)$. These upper and lower bounds can be probabilistic, e.g., they may hold with probability $0.8$ ($80\%$ confidence). For instance, the information on prior information represents the knowledge that \textit{children of age 5-10 rate documentaries only in the range 1-3 out of 5 in $80\%$ cases}. While such prior knowledge may be known from domain expertise or previous ad-campaigns performed in a different demographic, the age distribution of the community may be unknown. Due to which, the best-arm remains unknown and it needs to be found in an online manner.

This particular correlated bandit setting can be reduced to our general framework by translating the mappings $Y_k(X)$ to pseudo-rewards $s_{\ell,k}(r)$. Recall the pseudo-rewards represent an upper bound on the conditional expectation of the rewards. In this framework, if  $\underline{g}_k(x)$ and $\bar{g}_k(x)$ are soft lower and upper bounds, i.e., $\underline{g}_k(x) \leq Y_k(x) \leq \bar{g}_k(x)$ w.p. $1 - \kappa$, we can construct pseudo-reward as follows: 
\begin{equation}
\label{eqn:psrX}
    s_{\ell,k}(r) = (1 - \kappa)^2 \times \left( \max_{\{x: \underline{g}_k(x) \leq r \leq \bar{g}_k(x)\}}  \bar{g}_{\ell}(x)  \right) + (1 - (1 - \kappa)^2) \times M,
\end{equation}
 where $M$ is the maximum possible reward an arm can provide. We evaluate this pseudo-reward by first finding the range of values within which $x$ lies based on the reward with probability $1-\kappa$. The maximum possible reward of arm $\ell$ for values of $x$ is then identified with probability $1-\kappa$. Due to this, with probability $(1 - \kappa)^2$, conditional reward of arm $\ell$ is at-most $\max_{\{x: \underline{g}_k(x) \leq r \leq \bar{g}_k(x)\}}  \bar{g}_{\ell}(x)$. As the maximum possible reward is $M$ otherwise, we get the pseudo-reward as shown in \eqref{eqn:psrX}. Once these pseudo-rewards are constructed, the problem fits in the general framework described in this paper and we can use the algorithms proposed for this setting directly.

The presented model resembles the structured bandit model studied in \cite{lattimore2014bounded} in which mean rewards of different arms, $\mu_k(\theta)$, are known as a function of a hidden parameter $\theta$, but the parameter $\theta$ is unknown. It is important to see that this presented model differs from \revadd{\cite{lattimore2014bounded}} in two key ways -- i) In \revadd{\cite{lattimore2014bounded}}, instead of a hidden random variable $X$, there is a hidden feature $\theta$ which is fixed and unknown and ii) the mean reward mappings as a function of $\theta$ are known, whereas in our model we consider the knowledge of soft upper and lower bounds on $Y_k(X)$. The model studied in \revadd{\cite{lattimore2014bounded}} is more suitable for settings where the goal is to provide personalized recommendation to a user whose features $\theta$ are hidden, whereas the latent random source model (and the general correlated bandit model) is appropriate for application settings where the goal is to identify a single recommendation for the global demographic.

Note that the model presented in this subsection requires the understanding of hidden random variable $X$. While in certain problem settings it may be possible to obtain a latent random source representation in the form of $X$. In general, these hidden features may be more complicated and one may not be able to represent them. It is important to note that our proposed model in the most general setting works without having to construct a hidden feature representation through which arms are correlated. This is a key advantage of our general model over the latent random source model and the model presented in \cite{lattimore2014bounded}, which requires modeling the problem through a hidden parameter $\theta$. Instead, our general model utilizes the available prior information directly and our algorithms adapt to the information to identify the best-arm in fewer samples relative to classical best-arm identification algorithms.

\section{Related Prior Work}
\label{sec:priorWork}
The design of best-arm identification algorithms in the fixed-confidence setting have three key design components:  i) their sampling strategy, i.e., which arm to pick at round $t$; ii) their elimination criteria, i.e., when to declare an arm as sub-optimal and remove it from the rest of the sampling procedure; and  iii) their stopping criteria, i.e., when to stop the algorithm and declare an arm as the best arm. 

In order to accomplish the task of best-arm identification, algorithms  use the empirical mean $\hat{\mu}_k(t)$ for arm $k$ at round $t$. In addition to this, upper confidence bound and lower confidence bound on the mean of arm $k$ are maintained based on the number of samples of arm $k$, $n_k(t)$, and the input confidence parameter $\delta$. In particular, the upper confidence index $U_k(n_k, \delta) = \hat{\mu}_k(t) + \Confidence(n_k, \delta)$ and lower confidence index $L_k(n_k, \delta) = \hat{\mu}_k(t) - \Confidence(n_k, \delta)$ are maintained for each arm $k \in \mathcal{K}$. Here $\Confidence(n_k, \delta) \propto \sqrt{\frac{\log\left(\frac{\log(n_k)}{\delta}\right)}{n_k}}$ is an {\em anytime} confidence bound \cite{jamieson2014lil, howard2018timeuniform} constructed such that 
\begin{equation}
\Pr\Big(\exists ~ n_k \geq 1: ~~ \mu_k \notin [ L_k(n_k, \delta), ~ U_k(n_k, \delta) ]\Big) \leq \delta.
\label{eq:anytime}
\end{equation}

Note that the anytime confidence interval bound the probability of the mean lying outside the confidence interval uniformly for all $n_k \geq 1$, i.e., the probability that the mean lies outside the confidence interval $[ L_k(n_k, \delta), ~ U_k(n_k, \delta) ]$ at \emph{any} round $t$ is upper bounded by $\delta$. In contrast to the Hoeffding bound, which are only valid for a fixed and deterministic $n_k$, the anytime confidence bound holds true uniformly for all $t \geq 1$ and for random $n_k$ as well. We refer the reader to \cite{howard2018timeuniform} for a detailed discussion and developments in anytime confidence bounds $B(n_k, \delta)$.

\begin{table}[t]
\resizebox{\textwidth}{!}{
\centering
\begin{tabular}{l l l l }
\hline
\textbf{Algorithm} & \textbf{Sampling Strategy} & \textbf{Eliminate Arm $k$ if} &\textbf{Stopping Criteria}\\ \hline
Racing & Round Robin in $\mathcal{A}_t$                   &    $U_k\left(\frac{\delta}{K}\right) < \max\limits_{\ell \in \mathcal{A}_t} L_\ell\left(\frac{\delta}{K}\right)$   & $|\mathcal{A}_t| = 1$             \\ \hline
lil'UCB & Sample $k_t$,~ $k_t = \argmax_{k} U_k( \delta)$                   & N/A & $n_{k_t} \geq \alpha \sum_{k \neq k_t} n_k$                 \\ \hline
LUCB & Sample $m_1, m_2$,  & $U_k\left(\frac{\delta}{K}\right) < \max\limits_{\ell \in \mathcal{A}_t} L_\ell\left(\frac{\delta}{K}\right)$  &    $|\mathcal{A}_t| = 1$* or              \\ 
 &       $m_1 = \argmax\limits_{k \in \mathcal{A}_t} \hat{\mu}_k(t),$   &   & $L_{m_1}\left(\frac{\delta}{K}\right) > U_{m_2}\left(\frac{\delta}{K}\right)$                 \\
 & $m_2 = \argmax\limits_{k \in \mathcal{A}_t \setminus \{ {m_1}\}} U_k\left(\frac{\delta}{K}\right)$ & & \\
 \hline
LUCB++ & Sample $m_1, m_2$,                    &    &  $L_{m_1}\left(\frac{\delta}{2K}\right) > U_{m_2}\left(\frac{\delta}{2}\right)$            \\ 
 &       $m_1 = \argmax\limits_{k \in \mathcal{K}} \hat{\mu}_k(t)$   &  N/A  &                  \\ 
 & $m_2 = \argmax\limits_{k \in \mathcal{K} \setminus \{ {m_1}\}} U_k\left(\frac{\delta}{2}\right)$& & \\\hline
 \textbf{C-LUCB} & Sample $m_1, m_2$,                    & $\tilde{U}_k\left( \frac{\delta}{2K}\right) < \max\limits_{\ell \in \mathcal{A}_t} L_\ell\left( \frac{\delta}{2K}\right)$  &    $|\mathcal{A}_t| = 1$               \\ 
\textbf{(ours)}  &    $m_1 = \argmax\limits_{k \in \mathcal{A}_t} I_k(t)$,   &   &                  \\
& $m_2 = \argmax\limits_{k \in \mathcal{A}_t \setminus \{ {m_1}\}}  \min\left(\tilde{U}_{k,k}\left(\frac{\delta}{2K}\right), I_k(t)\right)$ & & \\ \hline
\textbf{C-LUCB++} & Sample $m_1, m_2$,                    &  $\tilde{U}_k\left( \frac{\delta}{3K}\right) < \max\limits_{\ell \in \mathcal{A}_t} L_\ell\left( \frac{\delta}{3K}\right)$  &  $|\mathcal{A}_t| = 1$ or          \\ 
 \textbf{(ours)} &   $m_1 = \argmax\limits_{k \in \mathcal{A}_t} I_k(t)$,       &   &  $L_{m_1}\left(\frac{\delta}{4K}\right) > \tilde{U}_{m_2, m_2}\left(\frac{\delta}{4}\right)$                \\ 
 &$m_2 = \argmax\limits_{k \in \mathcal{A}_t \setminus \{ {m_1}\}} \min\left(\tilde{U}_{k,k}\left(\frac{\delta}{2}\right), I_k(t)\right)$ & & \\ \hline 
\end{tabular}
}
\caption{All best-arm identification algorithms have three key components, i) Sampling strategy at each round $t$, ii) elimination criteria for an arm and iii) the stopping criteria of the algorithm. We compare these for Racing, lil'UCB, LUCB and LUCB++ algorithms and see the differences in their operation. The indices used for our proposed C-LUCB and C-LUCB++ are defined in \eqref{eq:crossUCBdefn} and \eqref{eq:pseudoUCBIndexdefn}.}
\label{tab:summaryAlgoClassic}
\vspace{-0.2cm}
\end{table}

\subsection{Existing Best-Arm identification strategies}

There are three well-known approaches to the best-arm identification problem: i) Successive Elimination (also called racing) \cite{bechhofer1958sequential, paulson1964sequential, even2002pac}; ii) lil'UCB (Law of Iterated Logarithms Upper Confidence Bound) \cite{jamieson2014lil}; and iii) LUCB \cite{kalyanakrishnan2012pac, kaufmann2013information} (Lower and Upper Confidence Bound). 
Below, we briefly introduce these algorithms, and present
a summary of their arm sampling strategies and elimination and stopping criteria   in \Cref{tab:summaryAlgoClassic} \footnote{The confidence bound $C(n_k(t), \delta)$, and subsequently lower and upper confidence indices $L_k(n_k(t),\delta)$ and $U(n_k(t), \delta)$, depend on the number of rounds $t$, the number of samples of arm k till round t $n_k(t)$ and the confidence parameter $\delta$. For brevity purposes, at times we represent the confidence bound as  $C(n_k, \delta)$ or $C(\delta)$ and the LCB, UCB indices as $L_k(t, \delta), L_k(n_k, \delta)$ or $L_k(\delta)$ and $U_k(t, \delta), U_k(n_k, \delta)$ or $U_k(\delta)$ respectively.}. For more details, we refer the reader to \cite{jamieson2014best} that provides a comprehensive survey of best-arm identification in the fixed confidence setting.

\vspace{0.2cm}
\noindent
\textbf{Successive Elimination or Racing:} The successive elimination (also called racing) strategy maintains a set of active arms $\mathcal{A}_t$ at each round. It samples arms in a round-robin fashion from the set of active arms and at the end of each round, it eliminates an arm $k$ from the set of active arms if the lower confidence index of some other arm $\ell \neq k$, $L_\ell\left(n_\ell, \frac{\delta}{K}\right)$, is strictly larger than the upper confidence index of arm $k$, $U_k\left(n_k, \frac{\delta}{K}\right)$. It continues this until a single arm is left in the set $\mathcal{A}_t$ and returns that arm as the optimal arm. Two other algorithms, Exponential-gap elimination \cite{karnin2013almost} and PRISM \cite{jamieson2013finding}, build upon successive elimination to provide stronger theoretical guarantees. However, their empirical performance is not promising as noted in \cite{jamieson2014best}. 

\noindent
\textbf{lil'UCB \cite{jamieson2014lil}:} The lil'UCB algorithm samples the arm with the largest upper confidence index $U_k(n_k, \delta)$ at round $t$ and stops when an arm has been sampled more than $\frac{\alpha t}{\alpha + 1}$ times till round $t$. In practice, the value of $\alpha$ is taken to be $9$. It then declares the most sampled arm as the best-arm. 

\noindent
\textbf{LUCB \cite{kalyanakrishnan2012pac, jamieson2014best}:} The LUCB approach samples two arms $m_1(t), m_2(t)$ at each round $t$. Here, $m_1(t)$ is the arm with the largest empirical reward till round $t$, and  $m_2(t)$ is the arm with the largest UCB index $U_k\left(n_k, \frac{\delta}{K}\right)$ among the rest. The LUCB algorithm stops if the lower confidence bound of the first arm $m_1(t)$ is larger than the upper confidence index of all other arms. \footnote{Equivalently, one can eliminate an arm $k$ from $\mathcal{A}_t$ at the end of each round if the upper confidence index of arm $k$ is smaller than the lower confidence index of some other arm, and stop the algorithm when the set of active arms $|\mathcal{A}_t| = 1$. This implementation of the LUCB algorithm has the same guarantees as the one proposed in \cite{kalyanakrishnan2012pac, jamieson2014best} while obtaining similar empirical performance.} Subsequently, another algorithm LUCB++ \cite{simchowitz2017simulator, tanczos2017kl} was designed that operates in a similar manner to LUCB but constructs the upper confidence and lower confidence indices with different confidence parameters for $m_1(t), m_2(t)$. The details of the upper confidence and lower confidence indices for each of these algorithms are presented in \Cref{tab:summaryAlgoClassic}. Note that our metric for comparison is the total number of samples collectively drawn from the arms. As LUCB algorithms sample two arms at each round, the total number of samples drawn from the LUCB algorithms is two times the number of rounds $t$. By comparing the total number of samples and not the number of rounds $t$, we draw a fair comparison between the performance of LUCB and lil'UCB algorithm.

All the approaches described above work well for the case where rewards are known to be either sub-Gaussian or bounded. Furthermore, if the class of distribution is known (e.g., it is known that rewards are Gaussian with known $\sigma$ and unknown $\mu$), then there are two more approaches known in the literature, namely Top Two Thompson Sampling (TTTS) \cite{shang2020fixed} and Tracking \cite{garivier2016optimal}. In TTTS, the player computes a posterior distribution on the mean reward of each arm and then applies Thompson sampling on the posterior to obtain two samples. It stops when the posterior probability of an arm $k$ being optimal exceeds a certain threshold $\tau_k(n_k, \delta)$. The TTTS algorithm can be computationally intensive as it involves the computation of posterior probability in each round of their algorithm. In \cite{garivier2016optimal}, authors evaluate a lower bound for the Multi-Armed bandit problem in the form of an optimization problem. They propose a tracking based approach, that solves the optimization problem at each round to obtain an estimated rate at which each arm should be sampled at round $t$ and sample arms in proportion to that rate. \newadd{More recently, \cite{degenne2019non} proposed alternative approaches to the track-and-stop algorithm that do not require solving an optimization problem at each round. Instead, they view the optimization problem as an unknown game and have sampling rules based on iterative saddle point strategies. All of the approaches listed above require knowing the {\em class} of reward distribution.} Since we only assume that the rewards are bounded and not the class of distribution, we do not focus on extending TTTS or Tracking based approaches to the correlated bandit setting in this paper.

\begin{table}[t]
\resizebox{\textwidth}{!}{
\centering
\begin{tabular}{ l l l l }
\hline
\textbf{Algorithm} & \textbf{Confidence Bound $\Confidence(n_k, \delta)$} & \textbf{Type} &\textbf{Samples Drawn}\\ \hline
Succ Elimination \cite{even2002pac} & $\sqrt{\frac{\log\left(\frac{\pi^2 n_k^2}{3\delta}\right)}{2n_k}}$  & Racing   &   577209.4            \\ \hline
lil Succ Elimination \cite{jamieson2014best} & $0.85 \sqrt{\frac{\log(\log(0.2585 n_k)) + 0.96 \log(67.59/\delta)}{n_k}}$  & Racing   &   120498.5            \\ \hline
KL-Racing \cite{kaufmann2013information} & $d(\Confidence)= 2\log\left(\frac{11.1 t^{1.1}}{\delta}\right)$*  & Racing   &   147780.4             \\ \hline
Racing with \cite{howard2018timeuniform} & $0.85 \sqrt{\frac{\log(\log(0.5n_k)) + 0.72 \log(5.2/\delta)}{n_k}}$   & Racing   &   82504.7             \\ \hline
LUCB with \cite{kaufmann2013information} & $\sqrt{\frac{\log\left(\frac{405 t^{1.1}}{\delta} \log\left(\frac{405 t^{1.1}}{\delta}\right)\right)}{2n_k}}$  & LUCB   &   219510.2             \\ \hline
lil LUCB \cite{jamieson2013finding} & $0.85 \sqrt{\frac{\log(\log(0.2585 n_k)) + 0.96 \log(67.59/\delta)}{n_k}}$  & LUCB   &    90523.0             \\ \hline
KL-LUCB \cite{kaufmann2013information} & $\small{d(\Confidence) =} 2\log\left(\frac{405.5 t^{1.1}}{\delta}\right) + \log\log\left(\frac{405.5t^{1.1}}{\delta}\right)$  & LUCB   &   81154.4             \\ \hline
LUCB with \cite{howard2018timeuniform} &  $0.85 \sqrt{\frac{\log(\log(0.5n_k)) + 0.72 \log(5.2/\delta)}{n_k}}$ & LUCB   &   62533.2             \\ \hline
lil'UCB \cite{jamieson2014lil} & $0.85 \sqrt{\frac{\log(\log(0.2585 n_k)) + 0.96 \log(67.59/\delta)}{n_k}}$  & lil'UCB   &   140987.0             \\ \hline
lil-KL-LUCB \cite{tanczos2017kl} &   $d(\Confidence) = 1.86\log\left(\kappa \log_2\left(\frac{2n_k}{\delta}\right)\right)$ & LUCB++   &   92000.0             \\ \hline
LUCB++ with \cite{howard2018timeuniform} &  $0.85 \sqrt{\frac{\log(\log(0.5n_k)) + 0.72 \log(5.2/\delta)}{n_k}}$  & LUCB++   &   55138.8             \\ \hline
\end{tabular}
}
\caption{Description of the well-known best-arm identification algorithms and the confidence bound $\Confidence(n_k, \delta)$ that they use for [0,1] bounded rewards. All the three types of algorithms have evolved with time due to the development of tighter $1 - \delta$ anytime confidence intervals $\Confidence(n_k, \delta)$. We see that the algorithms perform best with the confidence bound suggested in \cite{howard2018timeuniform}, and hence we use that for all our implementations of Racing, LUCB, LUCB++ and our proposed algorithm in the rest of the paper. The reported sample complexity is for the task of identifying best movie genre from the set of 18 movie genres in the Movielens dataset. Experimental setup is described in detail in \Cref{sec:experiments}. }
\label{tab:ConfIntervals}
\vspace{-0.2cm}
\end{table}

\subsection{Developments in Confidence sequence $\Confidence(n_k, \delta)$}
It is important to note that the performance of the algorithms described above depends critically on the tightness of the confidence bound $\Confidence(n_k, \delta)$. For instance, initially the LUCB algorithm was proposed with the confidence interval $\Confidence(n_k, \delta) = \sqrt{\frac{\log\left(\frac{405 n_k^{1.1}}{\delta} \log\left(\frac{405 n_k^{1.1}}{\delta}\right)\right)}{2n_k}}$ (See \cite{kalyanakrishnan2012pac}) for $[0,1]$ bounded random variables. Subsequently tighter bounds as in \cite{jamieson2014best}, \cite{kaufmann2013information} were developed, which led to performance improvements in the LUCB algorithm. See \Cref{tab:ConfIntervals} for a comparison different confidence bound developed over time and how they affect the empirical performance of the best-arm identification algorithms\footnote{The bound proposed in \cite{kaufmann2013information, tanczos2017kl} are KL based bounds that evaluate the indices $U_k(n_k, \delta) , L_k(n_k, \delta)$ as $\inf\{j > \hat{\mu}_k: n_k(t)d_{kl}(\hat{\mu}_k, j) < d(B)\}$ and $\sup\{j < \hat{\mu}_k: n_k(t)d_{kl}(\hat{\mu}_k, j) < d(B)$. The distance $d_{kl}(x,y)$ is evaluated as $x \log(x/y) + (1-x) \log((1-x)/(1-y))$}. For a more detailed comparison of different confidence bounds $\Confidence_{k}(n_k, \delta)$, we refer the reader to Table 2 of \cite{howard2018timeuniform}. To the best of our knowledge, the tightest $1-\delta$ anytime confidence interval for bounded and sub-Gaussian random variables is proposed in \cite{howard2018timeuniform}, which constructs \begin{equation}
\Confidence(n_k, \delta) = 0.85 \sqrt{\frac{\log(\log(0.5n_k)) + 0.72 \log(5.2/\delta)}{n_k}}.
\label{eq:Ckdefn1}
\end{equation} 
Due to this observation, which is also supported by empirical evidence in \Cref{tab:ConfIntervals}, we use the bound suggested by \cite{howard2018timeuniform} in all  implementations of Successive Elimination, LUCB and our proposed algorithm. However, our algorithm and analysis extend to arbitrary $1 - \delta$ anytime confidence interval $\Confidence(n_k, \delta)$.

We would also like to highlight the fact that lil'UCB is known to have the best known theoretical sample complexity (in terms of its dependency on the number of arms $K$). The LUCB algorithm stops with probability $1 - \delta$ after obtaining at most $\sum_{k \in \mathcal{K}} \frac{2\constant}{\Delta_k^2} \left(\log\left(\frac{K\log\left(\frac{1}{\Delta_k^2}\right)}{\delta}\right)\right)$ samples, where $\Delta_k = \mu_{k^*} - \mu_k$, the difference in mean reward of optimal arm $k^*$ and mean reward of arm $k$. And $\Delta_{k^*} = \min_{k \neq k^*} \Delta_k$, the gap between best and second best arm. It is known that lil'UCB algorithm has a sample complexity $\OO\left(\sum_{k \in \mathcal{K}} \frac{1}{\Delta_k^2}\log\left(\frac{\log\left(\frac{1}{\Delta_k^2}\right)}{\delta}\right)\right)$i.e., it avoids the $\log(K)$ term in the numerator, and hence has the best known theoretical sample complexity. However, it has been observed (both in \cite{jamieson2014best} and our experiments) that its empirical performance is inferior to that of the LUCB algorithm. Due to this reason, we focus on proposing an algorithm C-LUCB that extends the LUCB approach to the correlated bandit setting. We have included the performance of lil'UCB in all our experiments.

\subsection{Algorithms outside the classical setting}
Unlike the regret-minimization problem, the best-arm identification problem is relatively unexplored outside of the classical multi-armed bandit setting. 
A rare exception is the {\em structured} bandit setting,
where mean rewards corresponding to different arms are related to one another through a hidden parameter $\theta$. The underlying value of $\theta$ is fixed and unknown, but the mean reward mappings $\theta \rightarrow \mu_k(\theta)$ are known. The {\em linear} bandit setting is a special case of  structured bandits, where mean reward mappings are of the form $x_k^\intercal \theta$ with $x_k$  known to the player.
The best-arm identification problem has been 
studied  in  \cite{soare2014best, tao2018best} for {\em linear} bandits and in \cite{huang2017structured} for the general {structured} bandit setting.  
Other special cases of  structured bandits  include global bandits \cite{ata2015global}, regional bandits \cite{wang2018regional} and the generalized linear bandits \cite{shen2018generalized}; to the best of our knowledge the best arm identification problem has not been addressed in these special cases. \newadd{Note that in the full generality, the structured bandit framework is simply a bandit problem with constraints on the joint probability distribution \cite{van2020optimal}, but that setting has only been studied for the objective of regret minimization and not best-arm identification. To the best of our knowledge, the structured bandits work studying best-arm identification \cite{soare2014best, tao2018best, huang2017structured} assume the presence of a hidden parameter $\theta$ through which mean rewards of different arms are related to one another. \revadd{Our correlated bandit framework focuses on structured bandit settings by modeling the correlations explicitly through the knowledge of pseudo-rewards.}}

\newadd{Recently, best-arm identification was studied under the spectral bandit framework \cite{kocak2020best}, which assumes that the arms are the nodes of known a weighted graph, with $w_{a,b}$ denoting the weight between arms $a$ and arms $b$. The spectral bandit framework poses a restriction on the relationship between mean rewards of individual arms by assuming that $\sum_{a, b \in \mathcal{K}} w_{a,b} \frac{(\mu_a - \mu_b)^2}{2} \leq R$, where $R$ is known to the player.}

The correlated bandit model considered in this paper is fundamentally different from the structured bandit framework as detailed below.
\begin{enumerate}
    \item The model studied here explicitly models the correlations in the rewards of different arms {\em at any given round $t$}. In structured bandits, the mean rewards are related to each other, but the reward realizations at a given round are not necessarily correlated. \newadd{Similar to structured bandits, the work on spectral bandits \cite{kocak2020best} considers a setup with constrains between mean rewards of different arms, but does not capture the correlations explicitly in their framework. }
    \item It is also possible to use the structured bandit framework for the objective of identify best global recommendation in an ad-campaign. However, there are two major challenges i) In deciding upon the hidden parameter $\theta$ that we need to use, through which the mean rewards are related to one another. ii) Secondly, in the structured bandits framework, the reward mappings from $\theta$ to $\mu_k(\theta)$ need to be {\em exact}. If they happen to be incorrect, then the algorithms for structured bandit cannot be used as they rely on the correctness of $\mu_k(\theta)$ to construct confidence intervals on the unknown parameter $\theta$. In contrast, the model studied here only relies on the pseudo-rewards being upper bounds on the conditional expectations $\E{R_{\ell}|R_k = r}$. Our proposed algorithm works even when these bounds are not tight. The lack of hidden parameter $\theta$ and pseudo-rewards being upper bounds on conditional expectations make the model studied in this paper more suitable for practical scenarios where the goal is to identify the best global recommendation. 
\end{enumerate}

\section{Proposed Correlated-LUCB Best-arm Identification Algorithm}
\label{sec:algo}

In the correlated MAB framework, the rewards observed from one arm can help estimate the rewards from other arms. Our key idea is to use this information to reduce the number of samples taken before stopping. We do so by maintaining the \emph{empirical pseudo-rewards} of all pairs of distinct arms at each round $t$.

\subsection{Empirical Pseudo-Rewards and New UCB indices}
\label{sec:pseudo_reward}

In our correlated MAB framework, pseudo-reward of arm $\ell$ with respect to arm $\arm$ provides us an estimate on the reward of arm $\ell$ through the reward sample obtained from arm $\arm$. We now define the notion of empirical pseudo-reward  which can be used to obtain an \textit{optimistic estimate} of $\mu_\ell$ through just reward samples of arm $\arm$.

\begin{defn}[Empirical and Expected Pseudo-Reward]
\label{defn:empirical_pseudo_reward}
After $\slot$ rounds, arm $\arm$ is sampled $\pulls_\arm(\slot)$ times. Using these $\pulls_\arm(\slot)$ reward realizations, we can construct the empirical pseudo-reward $\estimateMean_{\ell, \arm}(\slot)$ for each arm $\ell$ with respect to arm $\arm$ as follows. 
\begin{align}
\estimateMean_{\ell, \arm}(\slot) \triangleq \frac{\sum_{\tau=1}^{\slot} \indicator_{k_\tau = k} \ \estimateReward_{\ell, \arm}(\reward_{k_\tau})}{\pulls_\arm(\slot)}, \qquad \ell \in \{1,\ldots, K\} \setminus \{k\}. 
\end{align}
The expected pseudo-reward of arm $\ell$ with respect to arm $\arm$ is defined as
\begin{align}
\expectedPseudoReward_{\ell, \arm} \triangleq \E{\estimateReward_{\ell, \arm}(R_k)}.
\end{align}
For convenience, we set $\hat{\phi}_{k,k}(t) = \hat{\mu}_k(t)$ and $\phi_{k,k} = \mu_k$. Note that the empirical pseudo-reward $\estimateMean_{\ell, \arm}(\slot)$ is defined with respect to arm $\arm$ and it is only a function of the rewards observed by sampling arm $\arm$. 
\end{defn}

Observe that $\E{s_{\ell,k}(R_k)} \geq \E{\E{R_\ell | R_k=r}} = \mu_\ell$. Due to this, empirical pseudo-reward $\estimateMean_{\ell, \arm}(\slot)$ can serve as an estimated upper bound on $\mu_{\ell}$. Using the definitions of empirical pseudo-reward, we now define auxiliary UCB indices, namely crossUCB and pseudoUCB indices, which are used in the selection and elimination strategy of the C-LUCB algorithm.

\begin{defn}[CrossUCB Index $\tilde{U}_{\ell, k}(t, \delta)$]
At the end of round $t$, we have $n_k(t)$ samples of arm $k$. Using these, we define the CrossUCB Index of arm $\ell$ with respect to arm $k$ as 
\begin{equation}
    \tilde{U}_{\ell,k}(t, \delta) \triangleq \hat{\phi}_{\ell,k}(t) + \Confidence(n_k, \delta).
\label{eq:crossUCBdefn}
\end{equation}
Furthermore, we define $$\tilde{U}_\ell(t, \delta) = \min_{k} \tilde{U}_{\ell,k}(t, \delta),$$ i.e., the tightest of the $K$ upper bounds, $\tilde{U}_{\ell,k}(t, \delta)$, for arm $\ell$.
\end{defn}

\noindent
Note that the CrossUCB index for arm $\ell$ with respect to arm $k$, $\tilde{U}_{\ell,k}(t,\delta)$ is constructed only through the samples obtained from arm $k$. Furthermore, we have $\tilde{U}_{k,k}(t, \delta) = \hat{\mu}_k(t) + \Confidence(n_k, \delta)$, which coincides
with the standard upper confidence index used in the best-arm identification literature. We use the confidence bound suggested by \cite{howard2018timeuniform} (see \Cref{sec:priorWork}) for the construction of $\Confidence(n_k, \delta)$ for $[0,\B]$ bounded random variables, i.e., 

\begin{equation}
\Confidence(n_k, \delta) = \frac{1.7\B}{2}\sqrt{\frac{\log\left(\log\left(\frac{\B^2 n_k}{2}\right)\right) + 0.72 \log(5.2/\delta)}{n_k}}.
\label{eq:Ckdefn}
\end{equation}

\noindent
As pseudo-rewards are \emph{upper bounds} on conditional expected reward, they can only be used to construct alternative upper bounds on the mean reward of other arms and not alternative lower bounds. Due to this reason, we keep the definition of lower confidence index $L_k(t, \delta)$ the same as that in the classical multi-armed bandit setting, i.e., $L_k(t, \delta) = \hat{\mu}_k(t) - \Confidence(n_k, \delta)$. In addition to the CrossUCB and the LCB index for each arm, we now define the PseudoUCB index of arm $\ell$ with respect to arm $k$. The PseudoUCB indices prove useful for the design and analysis of our proposed algorithm.

\begin{defn}[PseudoUCB Index $I_{\ell,k}(t)$]
We define the PseudoUCB Index of arm $\ell$ with respect to arm $k$ as follows. 
\begin{equation}
    I_{\ell,k}(t) \triangleq \hat{\phi}_{\ell,k}(t) + \B\sqrt{\frac{2 \log t}{n_k(t)}}
\label{eq:pseudoUCBIndexdefn}
\end{equation}
Furthermore, we define $I_\ell(t) = \min_{k} I_{\ell,k}(t)$, the tightest of the $K$ upper bounds for arm $\ell$.
\end{defn}

\noindent
Note that the PseudoUCB Index uses a confidence bound, $\B\sqrt{\frac{2 \log t}{n_k(t)}}$, which is typically used in the UCB1 algorithm (\cite{auer2002finite}) for the objective of cumulative reward maximization. It has the property that $\Pr(I_\ell(t) < \mu_\ell) \leq Kt^{-3}$ [See \cref{lem:ucbindexmore1}], i.e., the probability of mean lying outside the pseudoUCB index $I_{\ell}(t)$ at round $t$ decays exponentially with the number of rounds $t$. This property allows us to show desirable sample complexity results for our proposed algorithm in \Cref{sec:regret}. We now present the C-LUCB algorithm, that makes use of the PseudoUCB, CrossUCB and LCB indices in its strategy for sampling arms, eliminating arms and stopping the algorithm.

\subsection{C-LUCB Algorithm}
The C-LUCB algorithm maintains a set of active arms $\mathcal{A}_t$, which is initialized to the set of all arms $\mathcal{K}=\{1, \dots, K\}$. At each round $t$, it samples arms, eliminates arms and then decides whether to stop as described below. 

\begin{enumerate}
\item \textbf{Sampling Strategy:} At each round $t$, the C-LUCB algorithm samples two arms $m_1(t)$ and $m_2(t)$, where $$m_1(t) = \argmax_{k \in \mathcal{A}_t} I_{k}(t), \quad m_2(t) = \argmax_{k \in \mathcal{A}_t \setminus \{m_1(t)\}} \min\left(\tilde{U}_{k,k}\left(t, \frac{\delta}{2K}\right), I_k(t)\right).$$

\item \textbf{Elimination Criteria:} The C-LUCB algorithm removes an arm $k$ from the set $\mathcal{A}_t$, if the CrossUCB index of arm $k$ is smaller than the LCB index of some other arm in $\mathcal{A}_t$, i.e., if $$\tilde{U}_k\left(t, \frac{\delta}{2K}\right) < \max_{\ell \in \mathcal{A}_t} L_\ell\left(t, \frac{\delta}{2K}\right).$$
Here, $\tilde{U}_\ell\left(t, \frac{\delta}{2K}\right) = \min_{k} \tilde{U}_{\ell,k}\left(t, \frac{\delta}{2K}\right).$

\item \textbf{Stopping Criteria:} If $|\mathcal{A}_t| = 1$, stop the algorithm and declare the arm in $\mathcal{A}_t$ as the optimal arm with $1 - \delta$ confidence.
\end{enumerate}

Both LUCB and C-LUCB sample the top two arms at round $t$ in $m_1(t)$ and $m_2(t)$ so as to resolve the ambiguity among them as fast as possible. However, C-LUCB uses the additional pseudo-reward information to modify its choice of $m_1(t)$ and $m_2(t)$. In particular, the use of $I_k(t)$ in definition of $m_2(t)$ avoids the sampling of an arm that appears sub-optimal from samples of other arms. Similarly, using the CrossUCB index $\tilde{U}_k\left(t, \delta/2K \right)$ instead of $\tilde{U}_{k,k}(t, \delta/2K)$, allows the C-LUCB to eliminate some arms earlier than the LUCB algorithm. A comparison of the operation of C-LUCB with LUCB and Racing based algorithms is presented in \Cref{tab:summaryAlgoClassic}. We show that the proposed C-LUCB algorithm is $1-\delta$ correct and analyze its sample complexity in the next section. As the key difference between C-LUCB and LUCB is in its sampling strategy, we explore some other variants of C-LUCB in \Cref{sec:variants}, where we study the effect of performance on altering the definitions of $m_1(t)$ and $m_2(t)$.

\section{Sample Complexity Results}
\label{sec:regret}
In this section, we analyze sample complexity of the proposed C-LUCB algorithm, that is, the number of samples required to identify the best arm with probability $1-\delta$. We show that some arms, referred to as \emph{non-competitive} arms, are explored implicitly through the samples of the optimal arm $k^*$ and contribute only an $\OO(1)$ term in the sample complexity, while other arms called \emph{competitive} arms have an $\OO\left(\log(1/\delta)\right)$ contribution in the sample complexity of the C-LUCB algorithm. The correlation information enables us to identify the non-competitive arms using samples from other arms and eliminate them early. For the sample complexity analysis, we assume that the rewards are bounded between $[0,1] \forall{k \in \mathcal{K}}$. Note that the algorithms do not require this condition and the analysis can also be generalized to any bounded rewards.

\subsection{Competitive and Non-competitive arms}
\label{sec:competitive}

We now define the notion of \emph{competitive} and \emph{non-competitive} arms, which are important to interpret our sample complexity results for the C-LUCB algorithm. Let $k^*$ denote the arm with the largest mean and $k^{(2)}$ denote the arm with the second largest mean. 

\begin{defn}[Non-Competitive and Competitive arms]
An arm $\ell$ is said to be non-competitive if the expected reward of the second best arm $k^{(2)}$ is strictly larger than the expected pseudo-reward of arm $\ell$ with respect to the optimal arm $\arm^*$, i.e,  $\optimistGap_{\ell} \triangleq (\mu_{k^{(2)}} - \phi_{\ell,k^*}) > 0$. Similarly, an arm $\ell$ is said to be competitive if $\optimistGap_{\ell} = (\mu_{k^{(2)}} - \phi_{\ell, k^*} ) \leq 0$. We refer to $\optimistGap_{\ell}$ as the pseudo-gap of arm $\ell$ in the rest of the paper. We denote the set of the competitive arms as $\mathcal{C}$ and the total number of competitive arms as $C$ in this paper.
\end{defn}

The best arm $k^*$ and second best arm $k^{(2)}$ have pseudo-gaps $\optimistGap_{k^*} = (\mu_{k^{(2)}} - \phi_{k^*,k^*}) < 0$ and $\optimistGap_{k^{(2)}} = (\mu_{k^{(2)}} - \phi_{k^{(2)}, k^*}) \leq 0$ respectively, and hence are counted in the set of competitive arms. As $\phi_{\ell,k^*} \geq \mu_\ell$, the pseudo-gap $\tilde{\Delta}_\ell \leq \Delta_\ell$. Due to this, we have $2 \leq C \leq K$.

The central idea behind our C-LUCB approach is that after sampling the optimal arm $k^*$ sufficiently large number of times, the non-competitive (and thus sub-optimal) arms will not be selected as $m_1(t)$ or $m_2(t)$ by the C-LUCB algorithm, and thus will not be explored explicitly. Furthermore, the non-competitive arms can be eliminated from the information obtained through arm $k^*$. As a result, the non-competitive arms contribute only an $\OO(1)$ term in the sample complexity, i.e., the contribution is independent of the confidence parameter $\delta$. However, the competitive arms cannot be discerned as sub-optimal by just using the rewards observed from the optimal arm, and have to be explored $\OO\left(\log\left(\frac{1}{\delta}\right)\right)$ times each. Thus, we are able to reduce a $K$-armed bandit to a $C$-armed bandit problem, where $C$ is the number of competitive arms. \footnote{Observe that $k^*$ and subsequently $C$ are both unknown to the algorithm. Before the start of the algorithm, it is not known which arm is optimal/competitive/non-competitive.}

\subsection{Analysis of C-LUCB}
We start by first proving the $(1- \delta)$-correctness of C-LUCB algorithm and then analyzing its sample complexity in terms of the number of samples obtained until the stopping criterion is satisfied.

\begin{thm}[$(1-\delta)$ correctness of C-LUCB]
Upon stopping, the C-LUCB algorithm declares arm $k^*$ as the best arm with probability $1 - \delta$.
\label{thm:correctness} 
\end{thm}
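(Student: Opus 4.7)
The strategy is the standard ``good event'' argument used for LUCB-type correctness proofs, adapted to accommodate the CrossUCB indices. Since the algorithm stops only when $|\mathcal{A}_t|=1$ and outputs the surviving arm, it suffices to prove that the optimal arm $k^*$ is never eliminated from $\mathcal{A}_t$ with probability at least $1-\delta$. Arm $k^*$ is eliminated at round $t$ precisely when $\tilde{U}_{k^*}(t,\delta/(2K)) < \max_{\ell \in \mathcal{A}_t} L_{\ell}(t,\delta/(2K))$, so I will prove that (a) the CrossUCB index of $k^*$ stays above $\mu_{k^*}$ uniformly in $t$, and (b) no other arm's LCB crosses its own mean from below.

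Concretely, I would define the two good events
\begin{align*}
\mathcal{E}_1 &= \bigl\{ \tilde{U}_{k^*}(t,\delta/(2K)) \geq \mu_{k^*}\ \text{for all } t \geq 1 \bigr\}, \\
\mathcal{E}_2 &= \bigl\{ L_{\ell}(t,\delta/(2K)) \leq \mu_{\ell}\ \text{for all } \ell \neq k^* \text{ and all } t \geq 1 \bigr\}.
\end{align*}
On $\mathcal{E}_1 \cap \mathcal{E}_2$, for every arm $\ell \in \mathcal{A}_t \setminus \{k^*\}$ we have $L_{\ell}(t,\delta/(2K)) \leq \mu_{\ell} \leq \mu_{k^*} \leq \tilde{U}_{k^*}(t,\delta/(2K))$, so the elimination criterion is never triggered for $k^*$. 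Hence under $\mathcal{E}_1 \cap \mathcal{E}_2$, the arm surviving when $|\mathcal{A}_t|=1$ must be $k^*$.

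It remains to bound $\Pr(\mathcal{E}_1^c) + \Pr(\mathcal{E}_2^c)$ by $\delta$. For $\mathcal{E}_2$: each arm $\ell$ uses the empirical mean $\hat{\mu}_{\ell}$, and by the anytime bound \eqref{eq:anytime} instantiated at confidence $\delta/(2K)$, the event $\{\exists t: L_{\ell}(t,\delta/(2K)) > \mu_{\ell}\}$ has probability at most $\delta/(2K)$; a union bound over the $K-1$ suboptimal arms gives $\Pr(\mathcal{E}_2^c) \leq \delta/2$. For $\mathcal{E}_1$: recall $\tilde{U}_{k^*}(t,\delta/(2K)) = \min_{k} \tilde{U}_{k^*,k}(t,\delta/(2K))$, so $\mathcal{E}_1^c$ is the event that for some $k$ and some $t$, $\tilde{U}_{k^*,k}(t,\delta/(2K)) < \mu_{k^*}$. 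The empirical pseudo-reward $\hat{\phi}_{k^*,k}(t)$ is a sample mean of i.i.d.\ bounded variables $s_{k^*,k}(R_k)$ with true mean $\phi_{k^*,k} = \E[s_{k^*,k}(R_k)]$, so the same anytime bound gives $\Pr(\exists t : \tilde{U}_{k^*,k}(t,\delta/(2K)) < \phi_{k^*,k}) \leq \delta/(2K)$. The key observation is that pseudo-rewards upper bound conditional expectations, so by the tower property
\begin{equation*}
\phi_{k^*,k} \;=\; \E[s_{k^*,k}(R_k)] \;\geq\; \E[\E[R_{k^*}\mid R_k]] \;=\; \mu_{k^*},
\end{equation*}
hence $\{\tilde{U}_{k^*,k}(t,\delta/(2K)) \geq \phi_{k^*,k}\} \subseteq \{\tilde{U}_{k^*,k}(t,\delta/(2K)) \geq \mu_{k^*}\}$. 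A union bound over the $K$ choices of $k$ yields $\Pr(\mathcal{E}_1^c) \leq \delta/2$, completing the proof.

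The technical subtleties to be careful about are minor: I must note that $\hat{\phi}_{k^*,k^*}(t)=\hat{\mu}_{k^*}(t)$ so the $k=k^*$ term in $\mathcal{E}_1$ coincides with the standard UCB event and is not double-counted problematically, and that the pseudo-reward variables $s_{\ell,k}(R_k)$ lie in $[0,\B]$ so the same bound $\Confidence(n_k,\delta)$ in \eqref{eq:Ckdefn} is valid; everything else is a routine union-bound calculation. There is no hard obstacle here — the argument is essentially LUCB's correctness proof, with the novelty residing solely in identifying $\phi_{k^*,k} \geq \mu_{k^*}$ via the tower property, which is what makes the CrossUCB index a valid upper confidence bound on $\mu_{k^*}$.
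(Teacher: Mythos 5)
Your proof is correct and follows essentially the same route as the paper: a good event on which the CrossUCB index of $k^*$ stays above $\mu_{k^*}$ (via the tower-property observation $\phi_{k^*,k}\geq\mu_{k^*}$) and the LCBs of the other arms stay below their means, a union bound giving total failure probability at most $\delta$, and the conclusion that $k^*$ is never eliminated. The only cosmetic difference is that the paper folds in an additional event $\mathcal{E}_3$ on the pseudo-rewards of suboptimal arms with respect to $k^*$ (unused for correctness, but needed so the same conditioning event serves the sample-complexity analysis of Theorem 2), and allocates the confidence budget as $\delta/2+\delta/4+\delta/4$ rather than your $\delta/2+\delta/2$.
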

\noindent
\textit{Proof Sketch.}  To prove theorem 1, we define three events $\mathcal{E}_1, \mathcal{E}_2$ and $\mathcal{E}_3$ below.  Let $\mathcal{E}_1$ be the event that empirical mean of all arm lie within their confidence intervals uniformly for all $t \geq 1$
\begin{equation}
\mathcal{E}_1 = \Bigg\{ \forall{t \geq 1}, \forall{k \in \mathcal{K}}, ~~~ \hat{\mu}_k(t) - \Confidence\left(n_k(t), \frac{\delta}{2K}\right) \leq \mu_k \leq \hat{\mu}_k + \Confidence\left(n_k(t), \frac{\delta}{2K}\right) \Bigg\}    
\end{equation} 

\noindent
Define $\mathcal{E}_2$ to be the event that empirical pseudo-reward of optimal arm with respect to all other arms lie within their CrossUCB indices uniformly for all $t \geq 1$, i.e., 
\begin{equation}
\mathcal{E}_2 = \Bigg\{ \forall{t \geq 1}, \forall{\ell \in \mathcal{K}}, ~~~ \phi_{k^*,\ell} \leq \hat{\phi}_{k^*,\ell}(t) + \Confidence\left(n_\ell(t), \frac{\delta}{2K}\right) \Bigg\}    
\end{equation} 

Similarly define $\mathcal{E}_3$ to be the event that the empirical pseudo-reward of the sub-optimal arms with respect to the optimal arm lies within their CrossUCB indices uniformly for all $t \geq 1$, i.e., 
\begin{equation}
\mathcal{E}_3 = \Bigg\{ \forall{t \geq 1}, \forall{\ell \in \mathcal{K}}, ~~~ \phi_{\ell, k^*} \leq \hat{\phi}_{\ell, k^*}(t) + \Confidence\left(n_{k^*}(t), \frac{\delta}{2K}\right) \Bigg\}    
\end{equation} 
\noindent
Furthermore, we define $\mathcal{E}$ to be the intersection of the three events, i.e., 
\begin{equation}
\mathcal{E} = \mathcal{E}_1 \cap \mathcal{E}_2 \cap \mathcal{E}_3.
\label{eq:defE}
\end{equation}

\noindent
Due to the nature of anytime confidence intervals (See \Cref{eq:anytime}) and union bound over the set of arms, we have $\Pr(\mathcal{E}^c_1) \leq \frac{\delta}{2}$, $\Pr(\mathcal{E}_2^c) \leq \frac{\delta}{4}$ and $\Pr(\mathcal{E}^c_3) \leq \frac{\delta}{4}$ giving us $\Pr(\mathcal{E}^{c}) \leq \delta$. Furthermore, we show that, when event $\mathcal{E}$ occurs, the C-LUCB algorithm always declares $k^*$ as the best arm. This gives us the desired result in \Cref{thm:correctness}. A detailed proof is given in the \Cref{subsec:proofthm1}.

\begin{thm}
Given event $\mathcal{E}$ (defined in \Cref{eq:defE}), the expected number of samples drawn by C-LUCB until stopping, is bounded as
\begin{align}
 \E{N^{\text{C-LUCB}} \mid \mathcal{E}} \leq \sum_{k \in \mathcal{C}} \frac{2\constant}{\Delta_k^2} \log \left(\frac{2K \log\left(\frac{1}{\Delta_k^2}\right)}{\delta}\right) +  \frac{3K + 2Kt_0}{1 - \delta} +  \frac{2}{1 - \delta}\left(\frac{(K+1)^3}{t_0} + \frac{2}{t_0^2}\right),   
\end{align}

where $\slot_0  = \inf \bigg\{\tau \geq 2: \Delta_{k^*}  \geq 4 \sqrt{\frac{2K\log \tau}{\tau}} ~ \forall{k \notin \mathcal{C}} \bigg\}$ and $\constant$ is a universal constant that depends on the type of confidence bound used to construct $\Confidence(n_k, \delta)$ (Section 3b) -- the tighter the bound, the smaller the $\constant$. The gap $\Delta_{k}$ is defined as $\Delta_k \triangleq \mu_{k^*} - \mu_k \quad \text{for } k \neq k^*$, i.e., the difference in mean reward of optimal arm $k^*$ and mean reward of arm $k$ and $\Delta_{k^*} \triangleq \min_{k \neq k^*} \Delta_k$, i.e., the gap between best and second best arm.
\label{thm:sampComp}
\end{thm}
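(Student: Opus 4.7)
The plan is to decompose $N^{\text{C-LUCB}} = \sum_{k \in \mathcal{K}} n_k(T)$ into contributions from the competitive set $\mathcal{C}$ and the non-competitive set $\mathcal{K}\setminus\mathcal{C}$, conditioning throughout on the good event $\mathcal{E}$ from \Cref{thm:correctness}. For each $k \in \mathcal{C}$ I would adapt the standard LUCB sample-complexity argument: on $\mathcal{E}$, if $k$ is selected as $m_1(t)$ or $m_2(t)$ then its confidence radius $\Confidence(n_k(t),\delta/(2K))$ must be of order $\Delta_k$, and inverting the closed form in \eqref{eq:Ckdefn} yields the per-arm pull count $\frac{2\constant}{\Delta_k^2}\log\!\left(\frac{2K\log(1/\Delta_k^2)}{\delta}\right)$. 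Summing over $\mathcal{C}$ produces the first term of the theorem; the argument $\delta/(2K)$ rather than $\delta/K$ (and hence the leading $2$) traces back to splitting the failure budget across the three sub-events $\mathcal{E}_1,\mathcal{E}_2,\mathcal{E}_3$ in the correctness proof.

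For a non-competitive arm $\ell$, the idea is that once the optimal arm has been sampled enough, $\ell$ is ruled out without ever being pulled explicitly. Concretely, on the intersection of $\mathcal{E}_3$ with $\{n_{k^*}(t) \geq t/K\}$, the PseudoUCB construction in \eqref{eq:pseudoUCBIndexdefn} together with Hoeffding-type concentration of $\hat{\phi}_{\ell,k^*}(t)$ around $\phi_{\ell,k^*}$ gives
\[
I_\ell(t) \;\leq\; I_{\ell,k^*}(t) \;\leq\; \phi_{\ell,k^*} + 2\sqrt{\tfrac{2K\log t}{t}}.
\]
For $t \geq t_0$, the definition of $t_0$ combined with the strict positivity of the pseudo-gap $\tilde{\Delta}_\ell$ forces the right-hand side to be strictly less than $\mu_{k^*}$, while Lemma~1 yields $I_{k^*}(t) \geq \mu_{k^*}$ outside an event of probability at most $K/t^3$. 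This rules out $m_1(t) = \ell$, and the inequality $\min(\tilde{U}_{\ell,\ell}(t,\delta/(2K)),I_\ell(t)) \leq I_\ell(t)$ combined with the same bound handles $m_2(t) = \ell$. Summing the resulting failure probabilities over $t \geq t_0$, with each $\Pr(\cdot)$ inflated to $\Pr(\cdot\mid\mathcal{E}) \leq \Pr(\cdot)/(1-\delta)$, produces the $\frac{2}{1-\delta}\!\left(\frac{(K+1)^3}{t_0}+\frac{2}{t_0^2}\right)$ contribution, while the trivial bound $n_\ell(t) \leq t$ for $t \leq t_0$ and the initial round-robin pulls account for the $(3K+2Kt_0)/(1-\delta)$ term.

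The technical bottleneck is establishing $\Pr(n_{k^*}(t) < t/K) = \OO(1/t^3)$ for $t \geq t_0$. The C-LUCB sampling rule does not automatically enforce proportional exploration of $k^*$, so one must exhibit a self-correcting mechanism: whenever $k^*$ is under-sampled, its confidence width $\sqrt{2\log t/n_{k^*}(t)}$ inflates $I_{k^*}(t)$ enough that $k^*$ itself is the argmax chosen as $m_1(t)$, thereby restoring the proportional sampling. Pinning down this inequality chain, using that $k^* \in \mathcal{A}_t$ throughout by the correctness proof and iterating Lemma~1 together with $\mathcal{E}_2$, is the main work; once it is in place, the remaining bounds reduce to routine summations of geometric-type tail series.
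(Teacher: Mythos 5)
Your overall architecture matches the paper's: split the samples into a competitive part that carries the $\sum_{k\in\mathcal{C}}\frac{2\constant}{\Delta_k^2}\log(\cdot)$ term via the standard LUCB GOOD/BAD case analysis (the paper's \Cref{lem:conditions} and \Cref{lem:countingBad}), and a non-competitive part that is $\OO(1)$ because, once $n_{k^*}(t)\gtrsim t/K$, the concentration of $\hat{\phi}_{\ell,k^*}$ around $\phi_{\ell,k^*}<\mu_{k^{(2)}}$ keeps a non-competitive $\ell$ from being chosen as $m_1$ or $m_2$ (the paper's \Cref{lem:suboptimalNotCompetitive} and \Cref{lem:probNonComp}). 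You also correctly identify the proportional-sampling claim $n_{k^*}(t)\geq t/(2K)$ as the crux. The only substantive differences are bookkeeping: the paper counts rounds by event type rather than pulls by arm, and the $3K/(1-\delta)$ term comes from the rounds on which $I_{k^*}(t)<\mu_{k^*}$ (there is no round-robin initialization in C-LUCB).

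However, the mechanism you propose for the crux step does not work as stated. You argue that when $k^*$ is under-sampled, the width $\sqrt{2\log t/n_{k^*}(t)}$ ``inflates $I_{k^*}(t)$ enough that $k^*$ itself is the argmax.'' But $I_{k^*}(t)=\min_{\ell}I_{k^*,\ell}(t)$ is a minimum over cross-indices built from \emph{every} arm's samples, so under-sampling $k^*$ does not inflate it: the minimum can be attained at a heavily sampled arm $\ell\neq k^*$ whose width is already tiny. The paper's argument is the dual one: by \Cref{lem:ucbindexmore1}, $I_{k^*}(t)\geq\mu_{k^*}$ except with probability $Kt^{-3}$, and any suboptimal arm $k$ that has already been selected as $m_1$ at least $t/(2K)$ times has $I_k(t)\leq I_{k,k}(t)=\hat{\mu}_k(t)+\sqrt{2\log t/n_k(t)}$ concentrating below $\mu_{k^*}$ for $t>t_0$, so it cannot beat $k^*$ in the argmax (\Cref{lem:noMorePulls}); a pigeonhole over the $m_1$-counts in \Cref{lem:suboptimalNotPulled} then forces $n_{k^*}(t)\geq t/K$ up to probability $K(K+1)^3/t^2$ (note: $t^{-2}$, not the $t^{-3}$ you target, which is why the theorem carries a $(K+1)^3/t_0$ rather than a $1/t_0^2$-type term). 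Unless you replace your ``self-correcting inflation'' step with this over-sampled-suboptimal-arm argument, the chain of inequalities you describe cannot be completed.
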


We present a brief proof outline below, while the detailed proof is available in the \Cref{subsec:proofthm2}. 

\noindent
\textit{Proof Sketch.} In order to bound the total number of samples drawn by C-LUCB, we bound the total number of rounds $T$ taken by C-LUCB before stopping. As C-LUCB algorithm pulls two arms $m_1(t)$ and $m_2(t)$ in each round $t$, the number of samples $N^{\text{C-LUCB}} = 2T$. We obtain an upper bound on the total number of rounds $T$, considering the following four counts of the number of rounds and obtain an upper bound for each of them under the event $\mathcal{E}$:
\begin{enumerate}
    \item \emph{$T^{(\mathcal{R})}$: } Let $T^{(\mathcal{R})}$ denote the number of rounds in which $I_{k^*}(t) < \mu_{k^*}$, i.e., the count of events in which the pseudoUCB index of arm $k^*$ is smaller than the mean of arm $k^*$ at round $t$.
    \item \emph{$T^{(C)}$: } Define $T^{(C)}$ to be the number of rounds in which $m_1(t), m_2(t) \in \mathcal{C}$ and event $I_{k^*}(t) < \mu_{k^*}$ does not occur. 
    \item \emph{$T^{(NC)}$: } Define $T^{(NC)}$ to be the number of rounds in which $m_1(t) \notin \mathcal{C}, m_2(t) \neq k^*$ or $m_2(t) \notin \mathcal{C}, m_1(t) \neq k^*$. 
    \item \emph{$T^{(*)}$:} Define $T^{(*)}$ to be the number of rounds in which $m_1(t) = k^*, m_2(t) \notin \mathcal{C}$ or $m_2(t) = k^*, m_1(t) \notin \mathcal{C}$ .
\end{enumerate}
We can now see that $T \leq T^{(\mathcal{R})} + T^{(C)} + T^{(NC)} + T^{(*)}$. We show that \newadd{$$\Pr(I_{k^*}(t) < \mu_{k^*}|\mathcal{E}) = \frac{\Pr(I_{k^*} < \mu_{k^*}, \mathcal{E})}{\Pr(\mathcal{E})} \leq \frac{\Pr(I_{k^*} < \mu_{k^*}, \mathcal{E})}{1 - \delta} \leq \frac{\Pr(I_{k^*} < \mu_{k^*})}{1 - \delta} \leq \frac{Kt^{-3}}{1 - \delta},$$} giving us $\E{T^{(\mathcal{R})}|\mathcal{E}} \leq \frac{1}{1 - \delta}\sum_{t = 1}^{\infty} Kt^{-3} \leq \frac{3K}{2(1-\delta)}$. Next we show that \\
$\Pr\left(T^{(C)} + T^{(*)} \geq \sum_{k \in \mathcal{C}} \frac{\constant}{\Delta_k^2} \log \left(\frac{2K \log\left(\frac{1}{\Delta_k^2}\right)}{\delta}\right) \Big|  \mathcal{E} \right) = 0$. Due to this, $$T^{(C)} + T^{(*)} \leq \sum_{k \in \mathcal{C}} \frac{\constant}{\Delta_k^2} \log \left(\frac{2K \log\left(\frac{1}{\Delta_k^2}\right)}{\delta}\right) \quad \text{w.p. } 1-\delta.$$ We then evaluate an upper bound on $\E{T^{(NC)}|\mathcal{E}}$ and show that it is upper bounded by a $\OO(1)$ constant, i.e., $$\E{T^{(NC)}|\mathcal{E}} \leq \frac{Kt_0}{1 - \delta} +  \frac{1}{1 - \delta}\left(\frac{(K+1)^3}{t_0} + \frac{2}{t_0^2}\right).$$

Putting these results together, we obtain the result of \Cref{thm:sampComp}.

Furthermore, as $\E{T^{(NC)} |\mathcal{E}}, \E{T^{(\mathcal{R})} |\mathcal{E}}$ is upper bounded by an $\OO(1)$ constant as $\delta \rightarrow 0$, \newadd{we have $\sum_{t=1}^{\infty} \Pr(\mathcal{E}^{\text{NC}}_t) < \infty$, where $\mathcal{E}^{\text{NC}}_t$ is the event that $m_1(t) \notin \mathcal{C}, m_2(t) \neq k^*$ or $m_2(t) \notin \mathcal{C}, m_1(t) \neq k^*$.  By Borel-Cantelli Lemma 1, this implies that  with probability 1, the event $\mathcal{E}^{\text{NC}}_t$ takes place only finitely many time  steps $t$.} As a result of this, $\exists d_1  : \Pr(T^{(NC)} > d_1 | \mathcal{E}) = 0 $ \emph{almost surely}. Similarly $\exists d_2: \Pr(T^{(\mathcal{R})} > d_2 | \mathcal{E}) = 0$ $a.s.$ As a consequence of this, we have the following result bounding the total number of samples drawn from the C-LUCB algorithm with probability $1 - \delta$.

\begin{coro}
The number of samples obtained by C-LUCB is upper bounded as
\begin{equation}
    N^{\text{C-LUCB}} \leq \sum_{k \in \mathcal{C}} \frac{2\constant}{\Delta_k^2} \log \left(\frac{2K \log\left(\frac{1}{\Delta_k^2}\right)}{\delta}\right) + d \quad \text{w.p. } 1 - \delta,
\end{equation}
\label{coro:mycoro}
\end{coro}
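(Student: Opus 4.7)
The plan is to combine three ingredients already assembled in the discussion preceding the corollary: (i) the decomposition $T \le T^{(\mathcal{R})} + T^{(C)} + T^{(NC)} + T^{(*)}$ used in the proof sketch of \Cref{thm:sampComp}, (ii) the high-probability event $\mathcal{E}$ from \eqref{eq:defE}, which satisfies $\Pr(\mathcal{E}) \ge 1-\delta$, and (iii) a Borel--Cantelli upgrade of the $\OO(1)$ expectation bounds on $T^{(NC)}$ and $T^{(\mathcal{R})}$ to $\OO(1)$ almost-sure bounds. Since $N^{\text{C-LUCB}} = 2T$, it suffices to bound $T$ on $\mathcal{E}$ and then invoke $\Pr(\mathcal{E}) \ge 1 - \delta$.

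First, I would recall from the proof sketch of \Cref{thm:sampComp} that, conditional on $\mathcal{E}$, the deterministic bound
$$T^{(C)} + T^{(*)} \ \le\ \sum_{k \in \mathcal{C}} \frac{\constant}{\Delta_k^2}\,\log\!\left(\frac{2K\log(1/\Delta_k^2)}{\delta}\right)$$
already holds (with probability one on $\mathcal{E}$). This is where the $\log(1/\delta)$ term that appears in the corollary comes from, and no further work is needed for it.

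Next, the key observation for $T^{(\mathcal{R})}$ and $T^{(NC)}$ is that the bounds derived in the proof sketch of \Cref{thm:sampComp} come from tail sums of the form $\sum_{t=1}^{\infty} \Pr\bigl(I_{k^*}(t) < \mu_{k^*}\bigr)\le \sum_t K t^{-3}$ and a similarly summable tail bound on $\Pr(\mathcal{E}^{\text{NC}}_t)$ used for $T^{(NC)}$. By the first Borel--Cantelli lemma, summability implies that the bad events $\{I_{k^*}(t) < \mu_{k^*}\}$ and $\mathcal{E}^{\text{NC}}_t$ each occur at only finitely many rounds almost surely, so there exist (random but a.s.\ finite) constants $d_1, d_2$ with $\Pr(T^{(\mathcal{R})} \le d_2 \mid \mathcal{E}) = 1$ and $\Pr(T^{(NC)} \le d_1 \mid \mathcal{E}) = 1$; note that $d_1, d_2$ do not depend on $\delta$ since the tail sums converge independently of $\delta$ up to the normalizing $1/(1-\delta)$ factor.

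Combining the three bounds on $\mathcal{E}$ gives
$$T \ \le\ \sum_{k \in \mathcal{C}} \frac{\constant}{\Delta_k^2}\,\log\!\left(\frac{2K\log(1/\Delta_k^2)}{\delta}\right) + (d_1 + d_2),$$
and multiplying by $2$ (two samples per round) yields the claim with $d := 2(d_1+d_2)$. The bound then holds with probability at least $\Pr(\mathcal{E}) \ge 1-\delta$. The only slightly delicate point is the Borel--Cantelli step, since it produces an existence-only constant $d$ whose law depends on the problem instance; this is acceptable because \Cref{coro:mycoro} only asserts existence of such a $d$. Making $d$ explicit would require Markov's inequality or a direct high-probability concentration argument on $T^{(NC)}$ and $T^{(\mathcal{R})}$, which is not needed here.
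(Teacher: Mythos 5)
Your proposal follows essentially the same route as the paper: the decomposition $T \le T^{(\mathcal{R})} + T^{(C)} + T^{(NC)} + T^{(*)}$, the deterministic bound on $T^{(C)} + T^{(*)}$ under $\mathcal{E}$ from \Cref{lem:countingBad}, and the Borel--Cantelli upgrade of the summable tail bounds on $T^{(\mathcal{R})}$ and $T^{(NC)}$ to almost-sure finiteness, followed by $\Pr(\mathcal{E}) \ge 1-\delta$. Your bookkeeping $d = 2(d_1+d_2)$ is in fact slightly more careful than the paper's $d = \max(d_1, d_2)$, and your caveat that Borel--Cantelli yields only an existence-type, non-explicit (indeed instance-dependent, not genuinely deterministic) constant is precisely the limitation the paper itself concedes when it says it ``can only characterize $d$ in expectation.''
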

\noindent
where $d = \max(d_1, d_2)$. Note that the $\OO\left(\log\left(\frac{1}{\delta}\right)\right)$ term is only summed for the set of competitive arms $\mathcal{C}$, in contrast to the LUCB algorithm where the sample complexity term involves summation of a $\OO\left(\log\left(\frac{1}{\delta}\right)\right)$ for all arms $k \in \mathcal{K}$. In this sense, our proposed algorithm reduces a $K$-armed bandit problem to a $C$-armed bandit problem.

The key intuition behind our sample complexity result is that the sampling of $m_1(t) = \argmax_{k \in \mathcal{A}_t} I_k(t)$ ensures that the optimal arm is sampled at least $t/K$ times till round $t$ with high-probability. This in turn ensures that the non-competitive arms are not selected as $m_1(t)$ or $m_2(t)$, due to which we see that their expected number of samples are bounded above by a $\OO(1)$ constant.

\subsection{Comparison with the LUCB algorithm}

The LUCB algorithm is known to stop after obtaining at most $\left(\sum_{k \in \mathcal{K}} \frac{2\constant}{\Delta_k^2} \log \left( \frac{K \log\left(\frac{1}{\Delta_k^2}\right)}{\delta}\right)\right)$ samples with probability at least $1-\delta$. More formally, $$N^{\text{LUCB}} \leq  \left(\sum_{k \in \mathcal{K}} \frac{2\constant}{\Delta_k^2} \log \left( \frac{K \log\left(\frac{1}{\Delta_k^2}\right)}{\delta}\right)\right), \quad \text{ w.p. } 1 - \delta.$$  We compare this result with the one that we prove for C-LUCB algorithm in \Cref{thm:sampComp}.

\noindent
\textbf{Reduction to a $C$-Armed Bandit problem: } As highlighted earlier, in the C-LUCB approach, the $\OO\left(\log\left(\frac{1}{\delta}\right)\right)$ term only comes from the set of competitive arms, as opposed to the LUCB algorithm which has $\OO(\left(\log\left(\frac{1}{\delta}\right)\right)$ contribution from all its arms. In this sense, C-LUCB algorithm reduces a $K$-armed bandit problem to a C-armed bandit problem. Depending on the problem instance, the value of $C$ can vary between $2$ and $K$.

\noindent
\textbf{Slightly larger number of samples from competitive arms: } We see that the contribution coming from a competitive arm in C-LUCB algorithm is $\frac{2\constant}{\Delta_k^2} \log \left( \frac{2K \log\left(\frac{1}{\Delta_k^2}\right)}{\delta}\right)$. This is slightly larger than the contribution coming from a sub-optimal arm in LUCB algorithm, where each arm contributes $\frac{2\constant}{\Delta_k^2} \log \left( \frac{K \log\left(\frac{1}{\Delta_k^2}\right)}{\delta}\right)$ in the sample complexity. This is due to the fact that we construct slightly wider confidence intervals, $\Confidence\left(n_k, \frac{\delta}{2K}\right)$ instead of $\Confidence\left(n_k, \frac{\delta}{K}\right)$, in C-LUCB to take advantage of the correlations present in the problem. We see in \Cref{sec:experiments} that this small increase in the width of confidence intervals does not have a significant impact on the empirical performance of the algorithm. 

\noindent
\textbf{\Cref{thm:sampComp}'s result is in conditional expectation: } While the sample complexity result of the LUCB algorithm bounds the total number of samples taken with probability $1 - \delta$, our sample complexity result bounds the expected samples taken by C-LUCB algorithm under the event $\mathcal{E}$ (\Cref{thm:sampComp}). This arises as the analysis of our algorithm requires a transient component, because it tries to avoid sampling non-competitive arm at each round with high probability. We have a result in \Cref{coro:mycoro} that evaluates an upper bound which holds with probability $1 - \delta$, but we are unable to quantify the constant $d$ in \Cref{coro:mycoro} and can only characterize $d$ in expectation as done in \Cref{thm:sampComp}. \newadd{An open problem is to evaluate the expected sample complexity of our C-LUCB algorithm for the cases where the event $\mathcal{E}$ does not occur. While such results are hard to obtain theoretically, in all our experiments we observed that the variance in the number of samples drawn by C-LUCB is not much, and is in fact similar to that of the LUCB algorithm in all the experiments performed. This indicates that even when algorithm stops with an incorrect arm, the number of samples obtained are similar to the samples obtained under the good event $\mathcal{E}$. }

\noindent
\textbf{The $\log(K)$ term in numerator: } Just like the sample complexity result of the LUCB algorithm \cite{jamieson2014best}, our sample complexity result also has a $\log(K)$ in its sample complexity result. This is avoidable in the classical MAB framework if one uses the lil'UCB algorithm, which is known to have the optimal theoretical sample complexity in the classical bandit setting as it avoids the $\log(K)$ term in its sample complexity expression. However the use of lil'UCB algorithm leads to worse empirical performance as seen in our experiments and prior work \cite{jamieson2014best}. Due to this reason, we focus only on the extension of LUCB to the correlated bandit setting. \newadd{The LUCB++ algorithm has a sample complexity of the form of $\left(\sum_{k \in \mathcal{K} \setminus \{k^*\}} \frac{2\constant_1}{\Delta_k^2} \log \left( \frac{ \log\left(\frac{1}{\Delta_k^2}\right)}{\delta}\right) + \frac{2\constant_2}{\Delta_{k^*}^2} \log \left( \frac{K \log\left(\frac{1}{\Delta_{k^*}^2}\right)}{\delta}\right)\right)$. The LUCB++ algorithm avoids the $\log(K)$ term in the sample complexity for the sub-optimal arms and has it only for the optimal arm $k^*$. Due to this, it is seen that LUCB++ slightly outperforms the LUCB algorithm empirically. In our next section, we propose the C-LUCB++ algorithm, which is a heuristic extension of LUCB++ to the correlated bandit setting and show that it finds the optimal arm with probability at least $1-\delta$. } 

\noindent
\textbf{Dependency with $K$: } \newadd{In our sample complexity results, the dependence with respect to $K$ is loose. For our theoretical results, we focus on studying the dependence of sample complexity on $\delta$ in this paper. In \Cref{sec:experiments}, we show that even when $\delta = 0.1$ (i.e., a moderate confidence regime), our proposed algorithms outperform the classical bandit algorithms (See \Cref{fig:introFig}).}

\section{Variants of C-LUCB}
\label{sec:variants}

\begin{table}[t]
\resizebox{\textwidth}{!}{
\centering
\begin{tabular}{l l l l }
\hline
\textbf{Algorithm} & \textbf{First arm $m_1(t)$} & \textbf{Second arm $m_2(t)$} & Samples drawn \\ \hline
\textbf{C-LUCB} &  $\argmax\limits_{k \in \mathcal{A}_t} I_k(t)$   & $\argmax\limits_{k \in \mathcal{A}_t \setminus \{ {m_1}\}} \min\left(\tilde{U}_{k,k}\left(\frac{\delta}{2K}\right), I_k(t)\right)$  &  $39277.8$     \\ \hline
\textbf{maxmin-LUCB} & $\argmax\limits_{k \in \mathcal{A}_t} \min_{\ell} \hat{\phi}_{k,\ell}(t)$   & $\argmax\limits_{k \in \mathcal{A}_t \setminus \{ {m_1}\}} \tilde{U}_k\left(\frac{\delta}{2K}\right)$    &  $36314.2$   \\ \hline
\textbf{2-LUCB} & $\argmax\limits_{k \in \mathcal{A}_t} I_k(t)$   & $\argmax\limits_{k \in \mathcal{A}_t \setminus \{ {m_1}\}} \tilde{U}_k\left(\frac{\delta}{2K}\right)$    &   $39385.8$   \\ \hline 
\end{tabular}

}
\caption{We study two intuitive variants of C-LUCB which differ in their sampling strategy of $m_1(t)$ and $m_2(t)$. Both of them have same elimination and stopping criteria as the C-LUCB algorithm. We report the number of samples needed to identify the best genre from the set of 18 movie genres in the Movielens dataset. While all of these are smaller than the samples drawn by LUCB (which is 61175.4 in this case), the difference between the variants of C-LUCB is minimal. Experimental details are described in detail in \Cref{sec:experiments}, we set the value of $p = 0.2$ (i.e., the fraction of pseudo-reward entries that are replaced by $5$) in this experiment. Such similarity in empirical performance has also been observed in our other experiments and we found no clear winner among the three when compared on their empirical performance. }
\label{tab:variantsCLUCB}
\vspace{-0.2cm}
\end{table}

In our proposed C-LUCB algorithm, at each round we sample two arms $m_1(t), m_2(t)$, where $m_1(t) = \argmax_{k \in \mathcal{A}_t} I_k(t) $ and $m_2(t) = \argmax_{k \in \mathcal{A}_t \setminus \{ {m_1}\}} \min(\tilde{U}_{k,k}(\delta/2K), I_k(t))$. A sampling such as this allowed us to show $1 - \delta$ correctness of the algorithm (\Cref{thm:correctness}) and analyse its sample complexity (\Cref{thm:sampComp}). In this section, we explore two other algorithms, that we call maxmin-LUCB and 2-LUCB, that sample different $m_1(t)$ and $m_2(t)$ at round $t$, but have the same elimination and stopping criteria as that of C-LUCB. In \Cref{tab:variantsCLUCB}, we contrast their sampling strategy with respect to C-LUCB. While we are able to show that both maxmin-LUCB and 2-LUCB algorithm will stop with the best-arm with probability at least $1-\delta$, we are unable to provide a sample complexity result for them. 

We also evaluated the empirical performance of maxmin-LUCB and 2-LUCB on a real-world recommendation dataset, and found their empirical performance to be similar to C-LUCB. We chose to use C-LUCB as our proposed algorithm as it is possible to provide theoretical guarantees as in \Cref{thm:correctness} and \Cref{thm:sampComp}. Moreover, we find its empirical performance to be superior than classical bandit algorithms in correlated bandit settings, as we illustrate through our experiments in the next section.

\subsection{C-LUCB++: Heuristic extension of LUCB++}

The LUCB++ algorithm as illustrated in \Cref{sec:priorWork}, is able to improve upon LUCB, by modifying its stopping criteria and in its sampling of $m_1(t)$ and $m_2(t)$. We propose an extension, C-LUCB++, that extends the LUCB++ algorithm to the correlated bandit setting. The comparison of C-LUCB++ and LUCB++ in its sampling, elimination and stopping criteria is presented in \Cref{tab:summaryAlgoClassic}. While we are able to show that the C-LUCB++ stops with the best arm with probability at least $1 - \delta$ in \Cref{subsec:plusplus}, analysing its sample complexity remains an open problem. We compare the performance of C-LUCB++, with C-LUCB, LUCB, Racing and lil'UCB algorithms extensively through our experiments on Movielens and Goodreads datasets in the next section.

\section{Experiments}
\label{sec:experiments}

We now evaluate the performance of our proposed C-LUCB and C-LUCB++ algorithms in a real-world setting. By comparing the performance against classical best-arm identification algorithms on the \textsc{movielens} and \textsc{goodreads} datasets, we show that our proposed algorithms are able to exploit correlation to identify the best-arm in fewer samples. All results reported in our paper are presented after conducting 10 independent trials and computing their average. Additionally, in all our plots we show the error bars of width 2$\sigma$, where $\sigma$ is the standard deviation in the number of samples drawn by an algorithm across the 10 independent trials.

\begin{figure}[t]
    \centering
    \includegraphics[width = 0.9\textwidth]{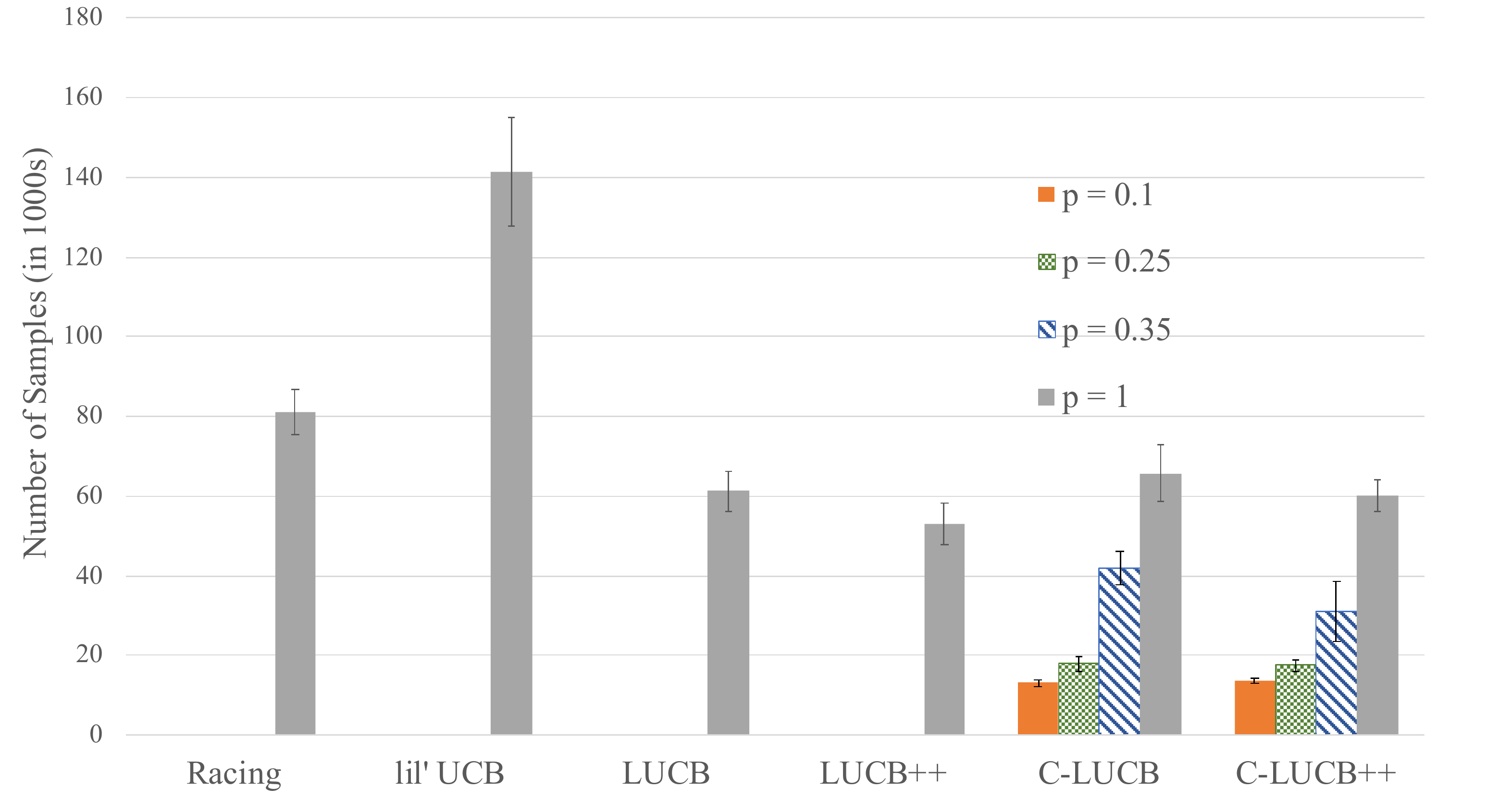}
    \caption{Number of samples drawn by Racing, lil'UCB, LUCB, LUCB++, C-LUCB and C-LUCB++ to identify the best movie genre out of 18 possible genres in the Movielens dataset. Here, $p$ represents the fraction of pseudo-reward entries that are replaced by the maximum possible reward (i.e., 5). When $p$ is small, there is more correlation information available that our proposed C-LUCB and C-LUCB++ algorithms exploit to reduce the number of samples needed to identify the best movie genre. When $p = 1$, there is no correlation information available, in which case our proposed C-LUCB and C-LUCB++ algorithms have a performance similar to LUCB and LUCB++ respectively.
    }
    \label{fig:movielensGenre}
\end{figure}

\subsection{Experiments on the \textsc{MovieLens} dataset}
The \textsc{MovieLens} dataset \cite{movielenspaper} contains a total of 1M ratings for a total of 3883 Movies rated by 6040 Users. Each movie is rated on a scale of 1-5 by the users. Moreover, each movie is associated with one (and in some cases, multiple) genres. For our experiments, of the possibly several genres associated with each movie, one is picked uniformly at random. To perform our experiments, we split the data into two parts, with the first half containing ratings of the users who  provided the most number of ratings. This half is used to learn the pseudo-reward entries, the other half is the test set which is used to evaluate the performance of the proposed algorithms. Doing such a split ensures that the rating distribution is different in the training and test data.

\noindent
\textbf{Best Genre identification.} In this experiment, our goal is to identify the most preferred genre among the 18 different genre in the test population in fewest possible samples. The pseudo-reward entry $s_{\ell,k}(r)$ is evaluated by taking the empirical average of the ratings of genre $\ell$ that are rated by the users who rated genre $k$ as $r$. As in practice, all such pseudo-reward entries might not be available, we randomly replace $p$-fraction of the pseudo-reward entries by maximum possible reward, i.e., $5$. We then run our best-arm identification algorithms on the test data to identify the best-arm with $99\%$ confidence.  \Cref{fig:movielensGenre} shows the average samples taken by C-LUCB and C-LUCB++ algorithm relative to the classical best-arm identification algorithms for different value of $p$ (the fraction of pseudo-reward entries that are removed). We see that C-LUCB and C-LUCB++ algorithms significantly outperform all Racing, lil'UCB, LUCB and LUCB++ algorithms for $p = 0.1, 0.25, 0.35$ as they are able to exploit the correlations present in the problem to identify the best arm in a faster manner.

In the scenario where all pseudo-reward entries are unknown, i.e., $p = 1$, we see that the performance of C-LUCB is only slightly worse than that of LUCB algorithm. This is due to the construction of slightly wide confidence interval $\Confidence(n_k, \delta/2K)$ for the C-LUCB algorithm relative to LUCB algorithm that uses $\Confidence(n_k, \delta/K)$. We also see that in this scenario, LUCB++ and C-LUCB++ algorithm (which is an extension of LUCB++) outperform C-LUCB, which is due to the known superiority of LUCB++ over LUCB \cite{simchowitz2017simulator, tanczos2017kl}.

\noindent
\textbf{Variation with $\delta$.} We then study the performance of the best-arm identification algorithms for different value of $\delta$. In \Cref{fig:introFig}, we plot the number of samples required by C-LUCB and C-LUCB++ to identify the best arm with $90\%, 94\%, 98\%$ and $99\%$ confidence, with $p = 0.2$ (i.e., $20\%$ of pseudo-reward entries are replaced by 5). As C-LUCB and C-LUCB++ are able to make use of the available correlation information, we see our proposed algorithms require fewer samples than the Racing, lil'UCB, LUCB and LUCB++ algorithms in each of the four settings.

\begin{figure}[t]
    \centering
    \includegraphics[width = 0.9\textwidth]{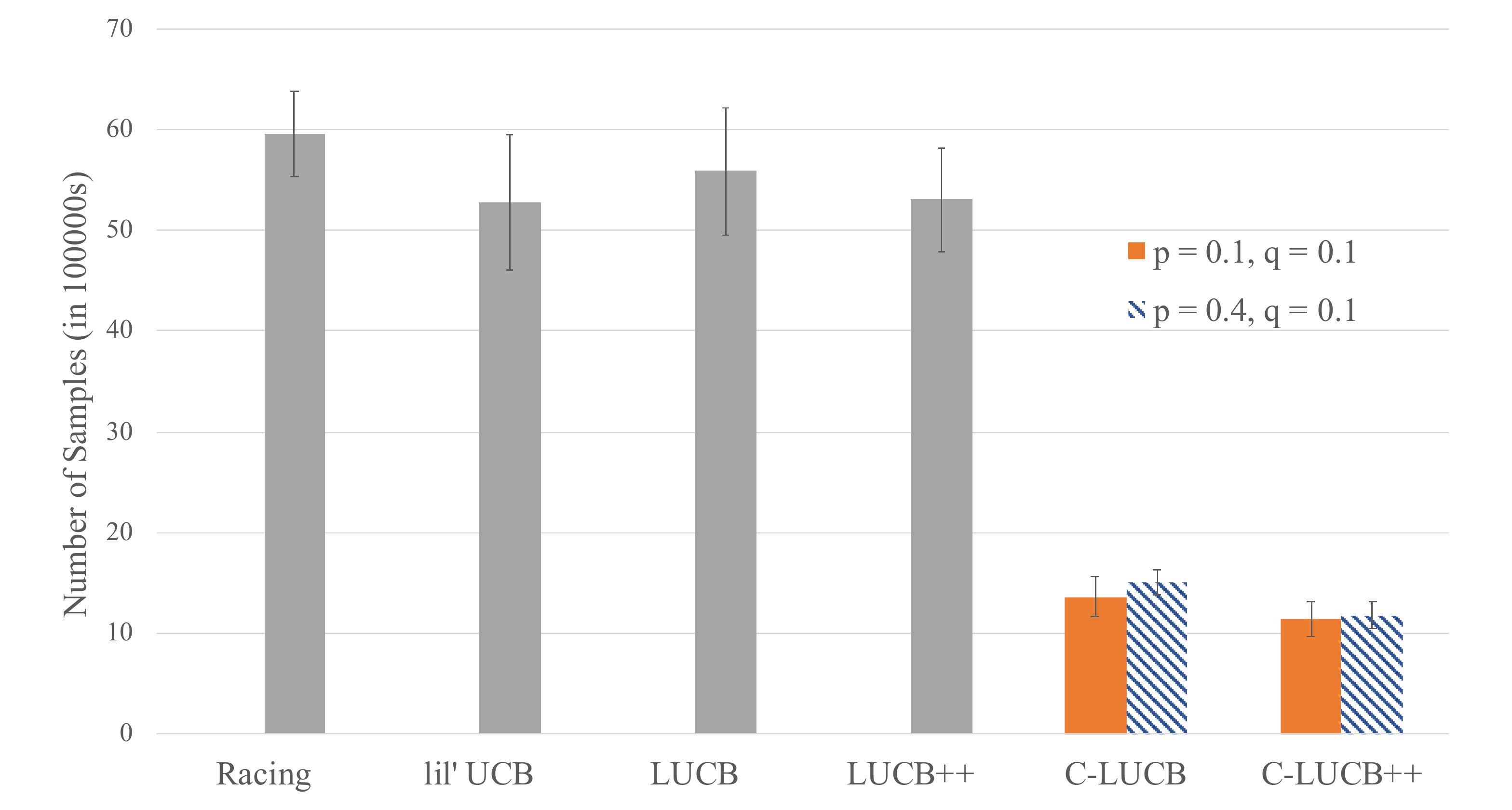}
    \caption{Number of samples needed by Racing, lil'UCB, LUCB, LUCB++, C-LUCB and C-LUCB++ to identify the best poem out of the set of 25 poem books in the Goodreads dataset. Here $p$ represents the fraction of pseudo-rewards that are replaced by maximum possible reward and $q = 0.1$ is added to each pseudo-reward entry to account for the fact that pseudo-reward entries may be noisy. Our proposed C-LUCB and C-LUCB++ utilize correlation information and require significantly less samples than the classical best-arm identification algorithms. 
    }
    \label{fig:goodreads}
\end{figure}

\subsection{Experiments on the {\sc Goodreads} dataset}
The {\sc Goodreads} dataset \cite{wan2018item} contains the ratings for 1,561,465 books by a total of 808,749 users. Each rating is on a scale of 1-5. For our experiments, we only consider the poetry section and focus on the goal of identify the most liked poem for the population. The poetry dataset has 36,182 different poems rated by 267,821 different users. We do the pre-processing of goodreads dataset in the same manner as that of the MovieLens dataset, by splitting the dataset into two halves, train and test. The train dataset contains the ratings of the users with most number of recommendations.

\noindent
\textbf{Best book identification.} We consider the 25 most rated poetry books in the dataset and aim to identify the best book in fewest possible samples with $99\%$ confidence. After obtaining the pseudo-reward entries from the training data, we replace $p$ fraction of the entries with the highest possible reward (i.e., $5$) as some pseudo-rewards may be unknown in practice. To account for the fact that these pseudo-reward entries may be noisy in practice, we add a safety buffer of $0.1$ to each of the pseudo-reward entry $s_{\ell,k}(r)$; i.e., we set the pseudo-reward to be empirical conditional mean (obtained from training data) {\em plus} the safety buffer $q = 0.1$. We perform experiment on the test data and compare the number of samples obtained for different algorithms in \Cref{fig:goodreads} for two different values of $p$. We see that in both the cases, our C-LUCB and C-LUCB++ algorithms outperform other algorithms as they are able to exploit the correlations in the rewards.

\section{Concluding Remarks}
In this work, we studied a new multi-armed bandit problem, where rewards corresponding to different arms are correlated to each other and this correlation is known and modeled through the knowledge of pseudo-rewards. These pseudo-rewards are \emph{loose} upper bounds on conditional expected rewards and can be evaluated in practical scenarios through controlled surveys or from domain expertise. We then extended an LUCB based approach to perform best-arm identification in the correlated bandit setting. Our approach makes use of the pseudo-rewards to reduce the number of samples taken before stopping. In particular, our approach avoids the sampling of non-competitive arms leading to a stark reduction in sample complexity. The theoretical superiority of our proposed approach is reflected in practical scenarios. Our experimental results on Movielens and Goodreads recommendation dataset show that the presence of correlation, when exploited by our C-LUCB approach, can lead to significant reduction in the number of samples required to identify the best-arm with probability $1 - \delta$.

This work opens up several interesting future directions, including but not limited to the following: \\
\noindent
\textbf{PAC-C-LUCB: } In this work, we explored the problem of identifying the best-arm with probability $1-\delta$. A closely related problem is to find a PAC (probably approximately correct) algorithm, that identifies an arm which is within $\epsilon$ from $\mu_{k^*}$ with probability at least $1 - \delta$. We believe such an algorithm can be constructed by modifying the elimination and stopping criteria of C-LUCB algorithm. More specifically, if one compares $U_{k}(n_k, \delta) + \epsilon $ v/s $\max_{k \in \mathcal{A}_t} L_k(n_k, \delta)$ in the C-LUCB's elimination criteria, it may be possible to design and analyse a PAC algorithm in the correlated multi-armed bandit setting. 

\noindent
\textbf{Using Pseudo-Lower bounds: } We assume in our work that only upper bounds on conditional expected rewards, in the form of pseudo-upper-bounds, are known to the player. In practical settings, it may also be possible to obtain pseudo-lower-bounds, that may allow us to know information about lower bound on conditional expected reward. In presence of such knowledge, we believe C-LUCB algorithm will need a modification in its definition of lower confidence bound $L_k(n_k, \delta)$. By defining a crossLCB index $L_{\ell, k}(n_k, \delta)$, equivalent to crossUCB index for upper bound, we can re-define $L_k = \max L_{\ell,k}$. This new definition of the lower confidence bound index can help us to incorporate cases where pseudo-lower bounds are also known.

\noindent
\textbf{Top $m$ arms identification: } Throughout this work, our focus was to identify just the optimal arm from the set of $K$ arms. Another similar problem is to come up with an approach to find the best $m$ arms from the set of $K$ arms. It is an interesting direction to explore in the correlated-multi armed bandit setting. We believe such a problem would be even more interesting if the pseudo-lower bounds are known. An open problem is to extend a C-LUCB like approach to identify the best $m$ arms from the set of $K$ arms.

\noindent
\textbf{Lower bound and optimal solution: } While our proposed approach shows promising empirical performance and has some theoretical guarantees, it may not be the optimal solution for the correlated bandit problem studied in this paper. Studying a lower bound and correspondingly an optimal solution to this problem remains an open problem.

\bibliographystyle{ieeetr}
\bibliography{multi_armed_bandit}

\newpage
\onecolumn
\appendix

\section{Standard Results from Previous Works}
\begin{fact}[Hoeffding's inequality]
Let $Z_1, Z_2 \ldots Z_n$ be i.i.d random variables bounded between $[a, b]: a \leq Z_i \leq b$, then for any $\delta>0$, we have
$$\Pr\left(\left| \frac{\sum_{i = 1}^{n} Z_i}{n} - \E{Z_i} \right| \geq \delta\right) \leq \exp \left( \frac{-2 n \delta^2}{(b - a)^2}\right).$$ 
\end{fact}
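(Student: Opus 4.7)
The plan is to prove Hoeffding's inequality via the classical Chernoff-bound method, which reduces the two-sided probability estimate to a moment generating function (MGF) bound and then optimizes an exponential parameter. First I would decompose the two-sided event via a union bound,
$$\Pr\left(\left|\tfrac{1}{n}\sum_{i=1}^n Z_i - \E{Z_1}\right| \geq \delta\right) \leq \Pr\left(\tfrac{1}{n}\sum_{i=1}^n Z_i - \E{Z_1} \geq \delta\right) + \Pr\left(\E{Z_1} - \tfrac{1}{n}\sum_{i=1}^n Z_i \geq \delta\right),$$
and note that by applying the same argument to $-Z_i$ it suffices to bound the upper tail; the lower tail then follows by symmetry.

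For the upper tail I would centre the variables, writing $W_i = Z_i - \E{Z_1}$, and apply the exponential Markov inequality with parameter $s > 0$. Using independence of the $Z_i$ (and hence of the $W_i$), the MGF factorises, yielding
$$\Pr\left(\sum_{i=1}^n W_i \geq n\delta\right) \leq e^{-sn\delta}\,\E{\exp\left(s\sum_i W_i\right)} = e^{-sn\delta}\prod_{i=1}^n \E{e^{sW_i}}.$$

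The main technical ingredient is Hoeffding's Lemma: for any zero-mean random variable $W$ supported on an interval of length $b-a$, $\E{e^{sW}} \leq \exp\left(s^2(b-a)^2/8\right)$. I would prove this by writing $W$ as a convex combination of the endpoints of its support, using convexity of $w \mapsto e^{sw}$ to produce a pointwise upper bound, taking expectations, and then analysing the resulting deterministic function $\psi(s)$. The cleanest route is to show $\log \psi(s) \leq s^2(b-a)^2/8$ by differentiating $\log \psi$ twice and bounding the resulting variance-like quantity by $(b-a)^2/4$, then integrating twice from $s = 0$. This is the step where care is required: getting the sharp constant $1/8$ rather than something looser is what ultimately yields the tight exponent $-2n\delta^2/(b-a)^2$.

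Plugging Hoeffding's Lemma into the Chernoff estimate gives, for every $s > 0$,
$$\Pr\left(\tfrac{1}{n}\sum_{i=1}^n Z_i - \E{Z_1} \geq \delta\right) \leq \exp\left(-sn\delta + ns^2(b-a)^2/8\right).$$
Optimising the exponent over $s > 0$ (solving the first-order condition yields $s^\star = 4\delta/(b-a)^2$) collapses the right-hand side to $\exp(-2n\delta^2/(b-a)^2)$. Applying the same argument to the lower tail and summing the two bounds gives the stated inequality (up to a harmless factor of $2$ that is often absorbed in statements of the fact). The main obstacle will be establishing the sharp constant in Hoeffding's Lemma; once that is in hand the remaining calculations are mechanical.
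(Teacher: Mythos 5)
The paper states this as a standard Fact imported from the literature and gives no proof of its own, so there is no in-paper argument to compare against; your Chernoff-plus-Hoeffding's-Lemma outline is the canonical proof and is correct as sketched. The one point worth flagging is the one you already noticed: the argument you describe yields $2\exp\left(-2n\delta^2/(b-a)^2\right)$ for the two-sided event, whereas the Fact as printed omits the factor of $2$, so either that factor should be restored or the statement should be read as the one-sided bound.
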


\begin{lem}[Standard result used in bandit literature]
If $\hat{\mu}_{k,n_k(t)}$ denotes the empirical mean of arm $k$ by sampling arm $k$ $n_k(t)$ times through any algorithm and $\mu_k$ denotes the mean reward of arm $k$, then we have 
$$\Pr\left(\hat{\mu}_{k,n_k(t)} - \mu_k \geq \epsilon, \tau_2 \geq n_k(t) \geq \tau_1 \right) \leq \sum_{s = \tau_1}^{\tau_2}\exp \left(- 2 s \epsilon^2\right).$$ 
\label{lem:UnionBoundTrickInt}
\end{lem}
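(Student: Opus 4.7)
The plan is to apply the standard union bound trick over the random sample count $n_k(t)$, coupled with Hoeffding's inequality for each fixed value. The inequality as stated is essentially a clean decomposition, but the subtlety is that $n_k(t)$ is a random variable determined by the algorithm's adaptive choices and therefore one cannot directly apply Fact 1 (which requires a deterministic number of samples).

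First, I would introduce the i.i.d. reward sequence $Z_1, Z_2, \ldots$ drawn from arm $k$, noting that the per-arm rewards are i.i.d. by the problem assumption, and let $\bar{Z}_s = \frac{1}{s}\sum_{i=1}^{s} Z_i$ be the sample mean of the first $s$ rewards from arm $k$. The key observation is that on the event $\{n_k(t) = s\}$ the empirical mean used by the algorithm coincides with $\bar{Z}_s$, so
\begin{equation*}
\{\hat{\mu}_{k,n_k(t)} - \mu_k \geq \epsilon,\ n_k(t) = s\} \subseteq \{\bar{Z}_s - \mu_k \geq \epsilon\}.
\end{equation*}

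Next, I would decompose the event on $\{\tau_1 \leq n_k(t) \leq \tau_2\}$ into the disjoint union over $s \in \{\tau_1, \ldots, \tau_2\}$ of $\{n_k(t) = s\}$, apply a union bound, and then drop the conjunct $\{n_k(t) = s\}$ using the subset relation above:
\begin{equation*}
\Pr\!\left(\hat{\mu}_{k,n_k(t)} - \mu_k \geq \epsilon,\ \tau_1 \leq n_k(t) \leq \tau_2\right)
\leq \sum_{s=\tau_1}^{\tau_2} \Pr\!\left(\bar{Z}_s - \mu_k \geq \epsilon\right).
\end{equation*}

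Finally, since $\bar{Z}_s$ is the empirical mean of $s$ i.i.d.\ rewards bounded in $[0,1]$, Hoeffding's inequality (Fact 1) yields $\Pr(\bar{Z}_s - \mu_k \geq \epsilon) \leq \exp(-2s\epsilon^2)$ for each $s$, and summing gives the claim. The only delicate point in this plan is the first step: justifying that replacing the adaptively drawn mean $\hat{\mu}_{k,n_k(t)}$ by the fixed-index mean $\bar{Z}_s$ on $\{n_k(t)=s\}$ is legitimate. This is standard and relies on the fact that the rewards from arm $k$ form an i.i.d.\ sequence whose first $s$ terms are what the algorithm has observed after $s$ pulls of arm $k$, regardless of which rounds those pulls occurred in; no martingale or ghost-sample machinery is needed beyond this observation.
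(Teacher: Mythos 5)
Your proposal is correct and follows essentially the same route as the paper's proof: both introduce the i.i.d.\ sequence $Z_1, Z_2, \ldots$ of rewards from arm $k$ indexed by pull count, observe that on $\{n_k(t)=s\}$ the algorithm's empirical mean equals the fixed-index mean $\bar{Z}_s$, enlarge the event to a union over $s \in \{\tau_1,\ldots,\tau_2\}$, drop the constraint on $n_k(t)$, and apply Hoeffding's inequality termwise. The only cosmetic difference is that you phrase the enlargement as a disjoint decomposition over $\{n_k(t)=s\}$ followed by a union bound, whereas the paper directly embeds the event into $\bigcup_{m=\tau_1}^{\tau_2}\{\bar{Z}_m - \mu_k \geq \epsilon\}$; these are the same argument.
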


\begin{proof}
Let $Z_1, Z_2, ... Z_t$ be the reward samples of arm $k$ drawn separately. If the algorithm chooses to sample arm $k$ for $m^{th}$ time, then it observes reward $Z_m$. Then the probability of observing the event $\hat{\mu}_{k,n_k(t)} - \mu_k \geq \epsilon, \tau_2 \geq n_k(t) \geq \tau_1$ can be upper bounded as follows,
\begin{align}
    \Pr\left(\hat{\mu}_{k,n_k(t)} - \mu_k \geq \epsilon, \tau_2 \geq n_k(t) \geq \tau_1 \right) &= \Pr\left( \left( \frac{\sum_{i=1}^{n_k(t)}Z_i}{n_k(t)} - \mu_k \geq \epsilon \right), \tau_2 \geq n_k(t) \geq \tau_1 \right) \\
    &\leq \Pr\left( \left(\bigcup_{m = \tau_1}^{\tau_2} \frac{\sum_{i=1}^{m}Z_i}{m} - \mu_k \geq \epsilon \right), \tau_2 \geq n_k(t) \geq \tau_1 \right) \label{upperBoundTrick}\\
    &\leq \Pr \left(\bigcup_{m = \tau_1}^{\tau_2} \frac{\sum_{i=1}^{m}Z_i}{m} - \mu_k \geq \epsilon \right) \\
    &\leq \sum_{s = \tau_1}^{\tau_2}\exp \left( - 2 s \epsilon^2\right).
\end{align}
\end{proof}

\begin{lem}[From Proof of Theorem 1 in \cite{auer2002finite}]
\label{lem:ucbindexmore}
The probability that the mean reward of arm $k$, i.e., $\mu_k$, is greater than the pseudoUCB index of arm $k$ with respect to arm $k$, i.e., $I_{k,k} = \hat{\mu}_k + \sqrt{\frac{2\log t}{n_k(t)}}$ is upper bounded by $t^{-3}$.
$$\Pr(\meanReward_\arm > \Index_{k,k}(\slot)) \leq \slot^{-3}.$$ 
\end{lem}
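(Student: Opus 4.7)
The plan is to use the standard Hoeffding-plus-union-bound argument that underpins the original UCB1 analysis. The key complication is that $n_k(t)$ is a data-dependent random variable, so I cannot apply Hoeffding's inequality to $\hat{\mu}_k(t)$ directly on the event $\{\mu_k > \hat{\mu}_k(t) + \sqrt{2\log t / n_k(t)}\}$. The workaround is exactly the trick packaged in \Cref{lem:UnionBoundTrickInt}: imagine the $t$ potential reward samples of arm $k$ as $Z_1, Z_2, \ldots, Z_t$, drawn in advance, and handle the randomness of $n_k(t)$ by a union bound over its possible values $s \in \{1, 2, \ldots, t\}$.

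First I would fix a deterministic $s \geq 1$ and denote by $\hat{\mu}_{k,s}$ the average of the first $s$ i.i.d.\ samples of arm $k$. Applying Hoeffding's inequality (Fact 1 with $a=0$, $b=1$) to $s$ such samples with deviation $\sqrt{2\log t / s}$ gives
$$\Pr\!\left(\mu_k - \hat{\mu}_{k,s} > \sqrt{\tfrac{2\log t}{s}}\right) \leq \exp\!\left(-2 s \cdot \tfrac{2 \log t}{s}\right) = \exp(-4 \log t) = t^{-4}.$$
The specific choice of confidence radius $\sqrt{2 \log t / s}$ is what causes the $s$ inside the exponent to cancel cleanly, leaving a pure power of $t$.

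Next I would invoke the union-bound reasoning of \Cref{lem:UnionBoundTrickInt} with $\tau_1 = 1$, $\tau_2 = t$, and $\epsilon_s = \sqrt{2\log t / s}$, thereby replacing the random $n_k(t)$ by a sum over its possible realizations:
$$\Pr\!\left(\mu_k > \hat{\mu}_k(t) + \sqrt{\tfrac{2 \log t}{n_k(t)}}\right) \leq \sum_{s=1}^{t} \Pr\!\left(\mu_k - \hat{\mu}_{k,s} > \sqrt{\tfrac{2 \log t}{s}}\right) \leq t \cdot t^{-4} = t^{-3},$$
which is the desired bound.

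I do not anticipate any substantive obstacle: the result is standard and, as the lemma statement itself notes, appears inside the proof of Theorem 1 of \cite{auer2002finite}. The only bookkeeping point is that the argument relies on rewards lying in $[0,1]$ so that Hoeffding's constant $(b-a)^2$ equals one; this is exactly the assumption stated at the beginning of \Cref{sec:regret}. If one instead worked with rewards in $[0,B]$, the confidence radius would need a factor of $B$, matching the definition of $I_{\ell,k}(t)$ in \eqref{eq:pseudoUCBIndexdefn}, and the bound $t^{-3}$ would still follow unchanged.
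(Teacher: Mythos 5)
Your proposal is correct and follows essentially the same route as the paper's own proof: fix a deterministic sample count $s$, apply the one-sided Hoeffding bound with radius $\sqrt{2\log t/s}$ to get $t^{-4}$, and then union-bound over the $t$ possible values of $n_k(t)$ (the trick packaged in \Cref{lem:UnionBoundTrickInt}) to obtain $t\cdot t^{-4}=t^{-3}$. Your closing remark about the $[0,1]$ normalization and the factor of $\B$ for $[0,\B]$ rewards is also consistent with the paper's definition in \eqref{eq:pseudoUCBIndexdefn}.
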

Observe that this bound does not depend on the number $\pulls_\arm(\slot)$ of times arm $\arm$ is sampled and only depends on $t$.
\begin{proof}
This proof follows directly from \cite{auer2002finite}. We present the proof here for completeness as we use this frequently in the paper.
\begin{align}
    \Pr(\meanReward_\arm > \Index_{\arm,\arm}(\slot)) &= \Pr\left(\meanReward_\arm > \hat{\meanReward}_{\arm,\pulls_\arm(\slot)} + \sqrt{\frac{2 \log \slot}{\pulls_\arm(\slot)}}\right) \\
    &\leq \sum_{m = 1}^{\slot} \Pr \left(\meanReward_\arm > \hat{\meanReward}_{\arm,m} + \sqrt{\frac{2 \log \slot}{m}} \right) \label{unionTrick}\\
    &= \sum_{m =1}^{\slot} \Pr \left(\hat{\meanReward}_{\arm,m} - \meanReward_\arm < - \sqrt{\frac{2 \log \slot}{m}}\right) \\ 
    &\leq \sum_{m = 1}^{\slot} \exp\left(- 2 m \frac{2 \log \slot}{m}\right) \label{eqn:ucbindex}\\
    &= \sum_{m = 1}^{\slot} \slot^{-4} \\
    &= \slot^{-3}.
\end{align}
where \eqref{unionTrick} follows from the union bound and is a standard approach (\Cref{lem:UnionBoundTrickInt}) to deal with random variable $\pulls_\arm(\slot)$. We use this approach repeatedly in the proofs. We have \eqref{eqn:ucbindex} from the Hoeffding's inequality. Note that if the empirical mean $\mu_k$ is replaced by the empirical pseudo reward of arm $k$ with respect to arm $\ell$, i.e., $\phi_{k,\ell}$ and $I_{k,k}(t)$ by the expected pseudo reward of arm $k$ with respect to arm $\ell$, i.e., $I_{k,\ell}(t) = \hat{\phi}_{k,\ell} + \sqrt{\frac{2\log t}{n_\ell(t)}}$. Then we get that $\Pr(\phi_{k,\ell} > I_{k,\ell}(t)) \leq t^{-3}$ using the same steps as presented above.
\end{proof}

\section{Intermediate lemmas for proving bounds on samples obtained through non-competitive arms}
\begin{lem}
\label{lem:ucbindexmore1}
Let $\Index_\arm(\slot)$ denote the pseudoUCB index of arm $\arm$ at round $\slot$, and $\meanReward_\arm$ denote the mean reward of that arm.  Then, we have
$$\Pr(\meanReward_{k} > \Index_{k}(\slot)) \leq K\slot^{-3}.$$ 
\end{lem}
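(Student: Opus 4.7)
The plan is to exploit the definition $I_k(t) = \min_{\ell \in \mathcal{K}} I_{k,\ell}(t)$ and reduce the statement to the already-established single-arm bound from \Cref{lem:ucbindexmore} by a union bound over $\ell$.

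First, I would observe that the event $\{\mu_k > I_k(t)\}$ is exactly $\{\mu_k > \min_\ell I_{k,\ell}(t)\} = \bigcup_{\ell \in \mathcal{K}} \{\mu_k > I_{k,\ell}(t)\}$, so a union bound gives
\begin{equation}
\Pr(\mu_k > I_k(t)) \leq \sum_{\ell \in \mathcal{K}} \Pr(\mu_k > I_{k,\ell}(t)).
\end{equation}
The term $\ell = k$ is handled directly by \Cref{lem:ucbindexmore}, which yields $\Pr(\mu_k > I_{k,k}(t)) \leq t^{-3}$.

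For the remaining $\ell \neq k$, the key ingredient is the fact that pseudo-rewards are upper bounds on conditional expected rewards: from the definition of the expected pseudo-reward, $\phi_{k,\ell} = \E{s_{k,\ell}(R_\ell)} \geq \E{\E{R_k \mid R_\ell}} = \mu_k$. Hence $\{\mu_k > I_{k,\ell}(t)\} \subseteq \{\phi_{k,\ell} > I_{k,\ell}(t)\}$, and so
\begin{equation}
\Pr(\mu_k > I_{k,\ell}(t)) \leq \Pr(\phi_{k,\ell} > I_{k,\ell}(t)).
\end{equation}
Now I apply the same argument as in \Cref{lem:ucbindexmore} (indeed, explicitly noted at the end of its proof): the empirical pseudo-reward $\hat{\phi}_{k,\ell}(t)$ is an average of $n_\ell(t)$ bounded i.i.d.\ terms $s_{k,\ell}(R_\ell)$ whose mean is $\phi_{k,\ell}$, so a union bound over the possible values of $n_\ell(t) \in \{1,\dots,t\}$ together with Hoeffding's inequality gives $\Pr(\phi_{k,\ell} > I_{k,\ell}(t)) \leq t^{-3}$.

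Summing the $K$ terms each bounded by $t^{-3}$ yields $\Pr(\mu_k > I_k(t)) \leq K t^{-3}$, completing the argument. No obstacle is anticipated: all nontrivial work has already been done in \Cref{lem:ucbindexmore}; the only subtlety is correctly invoking the pseudo-reward inequality $\phi_{k,\ell} \geq \mu_k$ so that the cross-arm confidence bound on $\phi_{k,\ell}$ translates into a confidence bound on $\mu_k$.
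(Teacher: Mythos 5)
Your proposal is correct and follows essentially the same route as the paper's proof: a union bound over the $K$ indices defining $I_k(t) = \min_\ell I_{k,\ell}(t)$, the inclusion $\{\mu_k > I_{k,\ell}(t)\} \subseteq \{\phi_{k,\ell} > I_{k,\ell}(t)\}$ via $\phi_{k,\ell} \geq \mu_k$, and then the Hoeffding-plus-union-bound-over-$n_\ell(t)$ argument from \Cref{lem:ucbindexmore} applied to each term. Separating out the $\ell = k$ case is a cosmetic difference only, since $\phi_{k,k} = \mu_k$ makes it a special case of the same bound.
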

Similar to \Cref{lem:ucbindexmore}, this bound does not depend on the number of times arm $k$ is sampled till round $t$ (i.e., $n_k(t)$) and only depends on the round $t$ and the total number of arms $K$. Recall that $I_\ell(t) = \min_k I_{\ell,k}(t)$, where $I_{\ell,k}(t)$ is PseudoUCB index of arm $\ell$ with respect to arm $k$ defined in \eqref{eq:pseudoUCBIndexdefn}.
\begin{proof}
This proof follows in the same way as that of \Cref{lem:ucbindexmore}. 
\begin{align}
    \Pr(\meanReward_{k} > \Index_{k}(\slot)) &= \Pr\left(\mu_{k} > \min_{\ell} \hat{\phi}_{k, \ell} + \sqrt{\frac{2 \log t}{n_{\ell}(t)}}\right) \label{eq:stepBydefn} \\  
    &\leq \sum_{\ell \in \mathcal{K}} \Pr\left(\mu_{k} > \hat{\phi}_{k, \ell} + \sqrt{\frac{2 \log t}{n_{\ell}(t)}}\right) \\
    &\leq \sum_{\ell \in \mathcal{K}} \Pr\left(\phi_{k, \ell} > \hat{\phi}_{k, \ell} + \sqrt{\frac{2 \log t}{n_{\ell}(t)}}\right) \label{eq:stepUB}\\
    &\leq \sum_{\ell \in \mathcal{K}} t^{-3} \label{eq:bdd}\\
    &= Kt^{-3}.
\end{align}
\end{proof}
We have \eqref{eq:stepBydefn} from the definition of $I_k(t)$. Inequality \eqref{eq:stepUB} follows from the fact that $\phi_{k, \ell} \geq \mu_k$. We get \eqref{eq:bdd} follows from the hoeffding's inequality combined with the union bound (\Cref{lem:ucbindexmore}).

\begin{lem}
If $k \neq k^*$ is a non-competitive arm i.e., $k \notin \mathcal{C}$ and has a pseudo-gap $\optimistGap_{\arm,\arm^*} > 0$, then,
$$\Pr((m_1(t) = \arm \cup m_2(t) = \arm), \pulls_{\arm^*}(\slot) \geq t/2K, \mathcal{W}, \mathcal{E}) \leq 2(K+1)t^{-3}. \quad \forall{t > t_0}, $$

where $t_0 = \inf\left\{\tau \geq 2: \Delta_{\text{min}} \geq 4\sqrt{\frac{2K\log \tau}{\tau}}\right\}$ and $\mathcal{W}$ denotes the event that $m_1(t), m_2(t) \neq k^*$.
\label{lem:suboptimalNotCompetitive}
\end{lem}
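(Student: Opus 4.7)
The plan is to show that on the good event $\mathcal{E}$, every round in which a non-competitive arm $k$ is selected as $m_1(t)$ or $m_2(t)$ forces the PseudoUCB index $I_k(t)$ to exceed $\mu_{k^*}$, and then bound the resulting concentration probability using the positive pseudo-gap $\tilde{\Delta}_k$ together with the slack built into $t_0$. Two auxiliary facts will be used along the way: that $k^* \in \mathcal{A}_t$ under $\mathcal{E}$, and that $I_{k^*}(t) \geq \mu_{k^*}$ with probability at least $1 - K t^{-3}$ (the latter is exactly \Cref{lem:ucbindexmore1}).

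First, I would verify that $k^* \in \mathcal{A}_t$ under $\mathcal{E}$: from $\mathcal{E}_2$ one has $\tilde{U}_{k^*,\ell}(t,\delta/2K) \geq \phi_{k^*,\ell} \geq \mu_{k^*}$ for every $\ell$ (the second inequality follows from $\phi_{k^*,\ell} = \E{s_{k^*,\ell}(R_\ell)} \geq \E{\E{R_{k^*} \mid R_\ell}} = \mu_{k^*}$), so $\tilde{U}_{k^*}(t,\delta/2K) \geq \mu_{k^*}$; and under $\mathcal{E}_1$ the LCB of every arm is at most its own mean, hence at most $\mu_{k^*}$. So the elimination criterion is never triggered for $k^*$. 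Next I would invoke the selection rule of C-LUCB to show that whenever $\mathcal{W}$ holds and $k \in \{m_1(t), m_2(t)\}$, one has $I_k(t) \geq \min\bigl(\tilde{U}_{k^*,k^*}(t,\delta/2K), I_{k^*}(t)\bigr)$. For $k = m_1$ this is immediate because $I_k = \max_{\ell \in \mathcal{A}_t} I_\ell \geq I_{k^*}$. For $k = m_2$, the event $\mathcal{W}$ gives $m_1 \neq k^*$, so $k^* \in \mathcal{A}_t \setminus \{m_1\}$ is a candidate in the argmax defining $m_2$; the selection rule then forces $\min(\tilde{U}_{k,k}, I_k) \geq \min(\tilde{U}_{k^*,k^*}, I_{k^*})$, and in particular $I_k \geq \min(\tilde{U}_{k^*,k^*}, I_{k^*})$.

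Under $\mathcal{E}_1$, already $\tilde{U}_{k^*,k^*}(t,\delta/2K) \geq \mu_{k^*}$, and by \Cref{lem:ucbindexmore1} the event $\{I_{k^*}(t) < \mu_{k^*}\}$ has probability at most $K t^{-3}$. Off this small event, the previous paragraph yields $I_k(t) \geq \mu_{k^*}$, reducing the lemma to bounding $\Pr(I_k(t) \geq \mu_{k^*}, \ n_{k^*}(t) \geq t/2K)$. Dominating $I_k(t) \leq I_{k,k^*}(t) = \hat{\phi}_{k,k^*}(t) + \sqrt{2\log t / n_{k^*}(t)}$, and using that $k \notin \mathcal{C}$ gives $\mu_{k^*} - \phi_{k,k^*} = \Delta_{k^*} + \tilde{\Delta}_k \geq \Delta_{\text{min}}$; together with $n_{k^*}(t) \geq t/2K$ and the definition of $t_0$, for $t \geq t_0$ the displacement $\mu_{k^*} - \phi_{k,k^*} - \sqrt{2\log t / n_{k^*}(t)}$ is itself at least $\sqrt{2\log t / n_{k^*}(t)}$ (the factor $4$ in $t_0$ is exactly what leaves a margin of $2$ after worst-casing $n_{k^*}(t) = \lceil t/2K \rceil$). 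Hoeffding combined with the union-bound trick of \Cref{lem:UnionBoundTrickInt} applied over $m = \lceil t/2K \rceil, \ldots, t$ then gives $\Pr(\hat{\phi}_{k,k^*}(t) - \phi_{k,k^*} \geq \sqrt{2\log t / n_{k^*}(t)}, \ n_{k^*}(t) \geq t/2K) \leq \sum_m e^{-4\log t} \leq t \cdot t^{-4} = t^{-3}$. Summing the two contributions yields $(K+1)t^{-3} \leq 2(K+1)t^{-3}$, as claimed.

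The one delicate step is the joint handling of the random sample count $n_{k^*}(t)$ with the concentration of $\hat{\phi}_{k,k^*}(t)$: Hoeffding does not apply directly at a random number of samples, and the union-bound device of \Cref{lem:UnionBoundTrickInt} is precisely what resolves this. One also has to verify arithmetically that the definition of $t_0$ leaves enough headroom in the Hoeffding exponent after substituting the worst case $n_{k^*}(t) = \lceil t/2K \rceil$; this is exactly where the specific constant $4$ in $t_0$ is used rather than a smaller one. Everything else is a routine combination of the C-LUCB selection rule with the already-established properties of $\mathcal{E}_1, \mathcal{E}_2, \mathcal{E}_3$.
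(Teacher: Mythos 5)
Your proof is correct and follows essentially the same route as the paper's: condition away the event $\{I_{k^*}(t) < \mu_{k^*}\}$ at cost $Kt^{-3}$ via \Cref{lem:ucbindexmore1}, use the selection rule and $k^* \in \mathcal{A}_t$ under $\mathcal{E}$ to force $\hat{\phi}_{k,k^*}(t) + \sqrt{2\log t/n_{k^*}(t)} \geq \mu_{k^*}$, and then apply Hoeffding with the union-bound device of \Cref{lem:UnionBoundTrickInt} over $n_{k^*}(t) \in [t/2K, t]$, using $\mu_{k^*} - \phi_{k,k^*} > \Delta_{\min}$ and the definition of $t_0$. The only difference is that you treat $m_1(t)=k$ and $m_2(t)=k$ in a single step via $I_k(t) \geq \min\bigl(\tilde{U}_{k^*,k^*}, I_{k^*}\bigr)$ rather than union-bounding the two cases separately as the paper does, which yields the slightly sharper constant $(K+1)t^{-3}$ in place of $2(K+1)t^{-3}$.
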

\begin{proof}
We now bound this probability as,
\begin{align}
     &\Pr\left((m_1(t) = k \cup m_2(t) = k), n_{k^*}(t) \geq \frac{t}{2K}, \mathcal{E}, \mathcal{W} \right)  \nonumber \\
     &\leq \Pr\left(m_1(t) = k, n_{k^*}(t) \geq \frac{t}{2K}, \mathcal{W}, \mathcal{E} \right) + \Pr\left(m_2(t) = k, n_{k^*}(t) \geq \frac{t}{2K}, \mathcal{W}, \mathcal{E} \right) \\
     &\leq \Pr\left(k = \argmax_{k \in \mathcal{A}_t} I_\ell(t), n_{k^*} \geq \frac{t}{2K}, \mathcal{W}, \mathcal{E}\right) + \Pr\left(m_2(t) = k, n_{k^*} \geq \frac{t}{2K}, \mathcal{W}, \mathcal{E} \right)  \label{eq:hastohappen} \\
    &\leq \Pr\left(\hat{\phi}_{k,k^*} + \sqrt{\frac{2 \log t}{n_{k^*}(t)}} \geq I_{k^*}(t), n_{k^*} \geq \frac{t}{2K}\right) + \Pr\left(m_2(t) = k, n_{k^*} \geq \frac{t}{2K}, \mathcal{W}, \mathcal{E} \right) \\
    &\leq \Pr\left(\hat{\phi}_{k,k^*} + \sqrt{\frac{2 \log t}{n_{k^*}(t)}} \geq I_{k^*}(t), \mu_{k^*} < I_{k^*} n_{k^*} \geq \frac{t}{2K}\right) + \Pr\left(\mu_{k^*} > I_{k^*}(t)\right) + \nonumber \\
    &\quad ~ \quad \Pr\left(m_2(t) = k, n_{k^*} \geq \frac{t}{2K}, \mathcal{W}, \mathcal{E} \right) 
\end{align}
\begin{align}
    &\leq \Pr\left(\hat{\phi}_{k,k^*} + \sqrt{\frac{2 \log t}{n_{k^*}(t)}} \geq \mu_{k^*}, n_{k^*} \geq \frac{t}{2K}\right) + Kt^{-3} + \Pr\left(m_2(t) = k, n_{k^*} \geq \frac{t}{2K}, \mathcal{W}, \mathcal{E} \right) \label{eq:firstHoef}\\
    &= \Pr\left(\hat{\phi}_{k,k^*} - \phi_{k,k^*} \geq \mu_{k^*} - \sqrt{\frac{2 \log t}{n_{k^*}(t)}} , n_{k^*} \geq \frac{t}{2K}\right) + Kt^{-3} + \nonumber \\
    &\quad ~ \quad \Pr\left(m_2(t) = k, n_{k^*} \geq \frac{t}{2K}, \mathcal{W}, \mathcal{E} \right) \\
    &\leq t\exp\left(-2 \frac{t}{2K} \left(\mu_{k^*} - \phi_{k,k^*} - \sqrt{\frac{4K\log t}{t}} \right)^2\right) + Kt^{-3} +  \Pr\left(m_2(t) = k, \mathcal{E} \right) \label{eq:hoeffplusun} \\
    &\leq t^{-3} \exp\left(\Delta_{\text{min}}^2 - 2\Delta_{\text{min}}\sqrt{\frac{4K\log t}{t}} \right) + Kt^{-3} +  \Pr\left(m_2(t) = k, n_{k^*} \geq \frac{t}{2K}, \mathcal{W}, \mathcal{E} \right) \label{eq:invokeNonComp} \\
    &\leq t^{-3} + Kt^{-3} + \Pr\left(m_2(t) = k, \mathcal{W}, \mathcal{E} \right) \quad \forall{t > t_0} \label{eq:firstSubResult}
\end{align}
Here \eqref{eq:firstHoef} follows from \Cref{lem:ucbindexmore1}. Inequality \eqref{eq:hoeffplusun} follows as a result of hoeffding bound and the union bound, as $n_{k^*}$ can take any value between $\frac{t}{2K}$ and $t$ (\Cref{lem:UnionBoundTrickInt}). We get \eqref{eq:invokeNonComp} as $\phi_{k,k^*} < \mu_{k^{(2)}}$ as the arm $k$ is non-competitive.

We now bound $\Pr(\left(m_2(t) = k, \mathcal{E}\right)$ separately. Under $\mathcal{E}$, the crossUCB index $\tilde{U}_{k^*, k^*}\left(n_{k^*}, \frac{\delta}{2K}\right)$ is larger than $\mu_{k^*}$. Using similar steps as done for the first term we now evaluate the upper bound on the probability that arm $k$ to be selected as $m_2(t)$ at round $t$, \begin{align}
    &\Pr\left(m_2(t) = k, n_{k^*} \geq \frac{t}{2K}, \mathcal{W}, \mathcal{E} \right) \leq \\
    &\leq \Pr\left(\hat{\phi}_{k,k^*} + \sqrt{\frac{2 \log t}{n_{k^*}(t)}} \geq \mu_{k^*}, I_{k^*}(t) > \mu_{k^*}, n_{k^*} \geq \frac{t}{2K} \right) + \Pr\left(\mu_{k^*} > I_{k^*}(t)\right) \\
    &\leq t^{-3} + Kt^{-3} \\
\end{align}

Combining this with \eqref{eq:firstSubResult}, we get the result of \Cref{lem:suboptimalNotCompetitive}.
\end{proof}

\begin{lem}
If $\gap_{\text{min}} \geq 4\sqrt{\frac{2\numArms \log \slot_0}{\slot_0}}$ for some constant $\slot_0 > 0$, then,
$$\Pr(m_1(t) = \arm , \pulls_\arm(\slot) \geq s, \mathcal{E}) \leq 2(K+1) \slot^{-3} \quad \text{for } s > \frac{\slot}{2 \numArms}, \forall \slot > \slot_0.$$
\label{lem:noMorePulls}
\end{lem}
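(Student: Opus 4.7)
The plan is to mirror the argument of \Cref{lem:suboptimalNotCompetitive}, except that now the hypothesis $n_k(t) \geq s > t/(2K)$ plays the role previously filled by the non-competitive pseudo-gap condition. The key observation is that once arm $k$ has been pulled at least $t/(2K)$ times, its empirical mean concentrates around $\mu_k$ with a confidence radius of order $\sqrt{K\log t/t}$, which by hypothesis is strictly smaller than $\Delta_{\min}/2$; hence the pseudo-UCB index $I_k(t)$ cannot sustain the inequality $I_k(t) \geq I_{k^*}(t)$ that must hold whenever $m_1(t) = k$.

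First I would note that under $\mathcal{E}$ the optimal arm $k^*$ is never eliminated, so $k^* \in \mathcal{A}_t$ at every round, which means $m_1(t) = k$ forces $I_k(t) \geq I_{k^*}(t)$. Splitting on whether $I_{k^*}(t)$ exceeds $\mu_{k^*}$ gives
\begin{align*}
\Pr(m_1(t) = k,\, n_k(t) \geq s,\, \mathcal{E})
&\leq \Pr(I_k(t) \geq \mu_{k^*},\, n_k(t) \geq s) + \Pr(I_{k^*}(t) < \mu_{k^*}),
\end{align*}
and the second summand is bounded by $K t^{-3}$ via \Cref{lem:ucbindexmore1}. To handle the first, I would upper bound $I_k(t)$ by $I_{k,k}(t) = \hat{\mu}_k(t) + \sqrt{2\log t / n_k(t)}$ and rewrite the event as a one-sided deviation of $\hat{\mu}_k(t) - \mu_k$ from $\Delta_k - \sqrt{2\log t / n_k(t)}$. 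Because $n_k(t) \geq t/(2K)$ the bonus is at most $\sqrt{4K\log t/t}$, while the hypothesis $\Delta_{\min} \geq 4\sqrt{2K\log t_0/t_0}$ combined with the monotonicity of $\log t/t$ for $t \geq t_0$ gives $\Delta_k \geq 4\sqrt{2K\log t/t}$, so the resulting gap is on the order of $\sqrt{K\log t/t}$. Invoking the union-bound trick of \Cref{lem:UnionBoundTrickInt} together with Hoeffding's inequality yields
\begin{align*}
\Pr(I_{k,k}(t) \geq \mu_{k^*},\, n_k(t) \geq s)
\leq \sum_{m=s}^{t} \exp\!\left(-2m\bigl(\Delta_k - \sqrt{2\log t/m}\bigr)^2\right),
\end{align*}
and each summand is at most $t^{-4}$ because the exponent at any $m \geq t/(2K)$ exceeds $4\log t$; summing the at most $t$ terms gives $t^{-3}$, so the overall bound is $(K+1)t^{-3} \leq 2(K+1)t^{-3}$.

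The main obstacle is handling the random sample count $n_k(t)$, which blocks a direct application of Hoeffding's inequality; the union bound of \Cref{lem:UnionBoundTrickInt} pays for this, and the threshold $t_0$ is calibrated precisely so that the slack between $\Delta_{\min}$ and the exploration bonus is large enough to convert the resulting sum of at most $t$ exponentials into a $t^{-3}$ rate. A secondary subtlety is that $m_1(t)$ is defined only on $\mathcal{A}_t$, but under $\mathcal{E}$ the arm $k^*$ remains active for every $t$, so $I_{k^*}(t)$ is always a valid comparator and the decomposition above is justified throughout the proof.
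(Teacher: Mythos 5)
Your proposal is correct and follows essentially the same route as the paper's proof: condition on $k^*$ remaining in $\mathcal{A}_t$ under $\mathcal{E}$, split on whether $I_{k^*}(t) \geq \mu_{k^*}$ (paying $Kt^{-3}$ via \Cref{lem:ucbindexmore1}), reduce the remaining event to the deviation $\hat{\mu}_k(t) - \mu_k > \Delta_k - \sqrt{2\log t/n_k(t)}$, and control the random $n_k(t)$ with the union-bound trick of \Cref{lem:UnionBoundTrickInt} plus Hoeffding, using $s > t/(2K)$ and the $t_0$ condition to make the exponent exceed $4\log t$. The only cosmetic difference is that you bound each summand by $t^{-4}$ before summing, whereas the paper bounds the whole sum by $t$ times the term at $m=s$; both yield $(K+1)t^{-3} \leq 2(K+1)t^{-3}$.
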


\begin{proof}
By noting that $m_1(t) = \arm$ corresponds to arm $\arm$ having the highest pseudoUCB index among the set of active arms at round $t$ (denoted by $\mathcal{A}_t$), we have,  
\begin{align}
    \Pr(m_1(t) = \arm , \pulls_\arm(\slot) \geq s, \mathcal{E}) &= \Pr(\Index_\arm(\slot) = \arg \max_{\arm' \in \mathcal{A}_t} \Index_{\arm'}(\slot) , \pulls_\arm(\slot) \geq s, \mathcal{E})  \\
    &\leq  \Pr\left(\Index_\arm(\slot) > \Index_{\arm^*}(\slot) , \pulls_\arm(\slot) \geq s\right) . \label{bestArmThere}
\end{align}
Here \eqref{bestArmThere} follows from the fact that under $\mathcal{E}$, $k^*$ is always in $\mathcal{A}_t$ (\Cref{subsec:proofthm1}).

\begin{align}
    &\Pr(\Index_\arm(\slot) > \Index_{\arm^*}(\slot) , \pulls_\arm(\slot) \geq s) = \nonumber \\
    &\Pr\left(\Index_\arm(\slot) > \Index_{\arm^*}(\slot) ,  \pulls_\arm(\slot) \geq s, \meanReward_{\arm^*} \leq \Index_{\arm^*}(\slot)\right)  + \nonumber \\
    &\quad \Pr\left(\Index_\arm(\slot) > \Index_{\arm^*}(\slot), \pulls_\arm(\slot) \geq s | \meanReward_{\arm^*} > \Index_{\arm^*}(\slot) \right) \times \Pr\left(\meanReward_{\arm^*} > \Index_{\arm^*}(\slot) \right) \label{conditionTerm} \\
    &\leq  \Pr\left(\Index_\arm(\slot) > \Index_{\arm^*}(\slot), \pulls_\arm(\slot) \geq s, \meanReward_{\arm^*} \leq \Index_{\arm^*}(\slot)\right) + \Pr\left(\meanReward_{\arm^*} > \Index_{\arm^*}(\slot)\right) \label{droppingTerms}\\
    &\leq \Pr\left(\Index_{k,k}(\slot) > \Index_{\arm^*}(\slot), \pulls_\arm(\slot) \geq s, \meanReward_{\arm^*} \leq \Index_{\arm^*}(\slot)\right) + K\slot^{-3} \label{usingHoeffdingAgain}\\
    &= \Pr\left(\Index_{k,k}(\slot) > \meanReward_{\arm^*} ,  \pulls_\arm(\slot) \geq s\right) + K\slot^{-3} \label{usingConditioning} \\
    &= \Pr\left(\hat{\meanReward}_\arm(\slot) + \sqrt{\frac{2 \log \slot}{\pulls_\arm(\slot)}} > \meanReward_{\arm^*} , \pulls_\arm(\slot) \geq s \right) + K\slot^{-3} \label{expandingIndex}\\
    &= \Pr\left(\hat{\meanReward}_\arm(\slot) - \meanReward_\arm > \meanReward_{\arm^*} - \meanReward_\arm - \sqrt{\frac{2 \log \slot}{\pulls_\arm(\slot)}} , \pulls_\arm(\slot) \geq s \right) + K\slot^{-3} \\ 
    &= \Pr\left( \frac{\sum_{\tau = 1}^{\slot} \indicator_{\{\arm_\tau = \arm\}}\reward_\tau}{\pulls_\arm(\slot)} - \meanReward_\arm > \gap_\arm - \sqrt{\frac{2 \log \slot}{\pulls_\arm(\slot)}} , \pulls_\arm(\slot) \geq s\right) + K\slot^{-3} \\
    &\leq \slot \exp\left(-2 s \left(\gap_\arm - \sqrt{\frac{2 \log \slot}{s}}\right)^2\right) + K\slot^{-3} \label{eqn:chernoffagain}\\
    &\leq \slot^{-3}\exp\left(-2 s \left(\gap_\arm^2 - 2 \gap_\arm \sqrt{\frac{2 \log \slot}{s}}\right)\right) + K\slot^{-3} \\
    &\leq 2 (K+1) \slot^{-3} \quad \text{ for all  } \slot > \slot_0. \label{finalCondn}
\end{align}
We have \eqref{conditionTerm} holds because of the fact that $P(A) = P(A|B)P(B) + P(A|B^c)P(B^c)$, Inequality \eqref{usingHoeffdingAgain} follows from \Cref{lem:ucbindexmore1} and from the fact that $I_{k}(t) = \min_{\ell}I_{k,\ell}(t)$. From the definition of $\Index_{k,k}(\slot)$ we have \eqref{expandingIndex}. Inequality \eqref{eqn:chernoffagain} follows from Hoeffding's inequality and the term $\slot$ before the exponent in \eqref{eqn:chernoffagain} arises as the random variable $\pulls_\arm(\slot)$ can take values from $s$ to $\slot$ (\Cref{lem:UnionBoundTrickInt}). Inequality \eqref{finalCondn} follows from the fact that $s > \frac{\slot}{2 \numArms}$ and $\gap_\arm \geq 4\sqrt{\frac{2\numArms \log \slot_0}{\slot_0}}$ for some constant $\slot_0 > 0.$

\end{proof}

\begin{lem}
Let $n^{m_1}_k(t)$ denote the number of times arm $k$ has been sampled as $m_1(t)$ till round $t$. If $\gap_{\text{min}} \geq 4\sqrt{\frac{2 \numArms \log \slot_0}{\slot_0}}$ for some constant $\slot_0 > 0$, then, $$\Pr\left(\pulls_\arm^{m_1}(\slot) > \frac{t}{ \numArms}, \mathcal{E} \right) \leq  \frac{(K+1)^3}{t^2} \quad \forall \slot > \numArms \slot_0.$$
\label{lem:suboptimalNotPulled}
\end{lem}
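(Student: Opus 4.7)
The plan is to convert the statement about the cumulative count $n_k^{m_1}(t)$ into a union bound over per-round events that are already controlled by \Cref{lem:noMorePulls}.

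The crucial reduction is a set inclusion. Suppose $n_k^{m_1}(t) > t/K$ and let $\tau^\star \in \{1,\dots,t\}$ be the latest round at which $m_1(\tau^\star) = k$; then $n_k^{m_1}(\tau^\star) = n_k^{m_1}(t) > t/K$, so in particular $n_k(\tau^\star) \geq n_k^{m_1}(\tau^\star) > t/K$, and necessarily $\tau^\star > t/K$ since $n_k^{m_1}(\tau^\star) \leq \tau^\star$. This yields the inclusion
\[
\{n_k^{m_1}(t) > t/K\} \;\subseteq\; \bigcup_{\tau = \lceil t/K\rceil + 1}^{t} \{m_1(\tau) = k,\ n_k(\tau) > t/K\}.
\]

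Next I would apply the union bound and invoke \Cref{lem:noMorePulls} at each $\tau$ in this range with threshold $s = t/K$. Both preconditions of that lemma hold throughout the summation: $s = t/K > \tau/(2K)$ amounts to $\tau < 2t$ which is automatic for $\tau \leq t$, and $\tau > t_0$ follows from $\tau > t/K$ combined with the hypothesis $t > K t_0$. Each term is therefore bounded by $2(K+1)\tau^{-3}$.

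Finally, I would sum these per-round bounds by comparing $\sum_{\tau > t/K}^{t} \tau^{-3}$ with the tail integral $\int_{t/K}^{\infty} x^{-3}\,dx = K^2/(2t^2)$. This yields an overall bound of $K^2(K+1)/t^2$, which is at most $(K+1)^3/t^2$ by the crude estimate $K^2 \leq (K+1)^2$. The only real subtlety is the set-inclusion step — pinning down the right witness round $\tau^\star$ so that \Cref{lem:noMorePulls} is legitimately applied there; the rest is a routine union bound and $p$-series comparison.
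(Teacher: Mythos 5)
Your proposal is correct and matches the paper's argument in substance: both reduce the event $\{n_k^{m_1}(t) > t/K\}$ to per-round events of the form $\{m_1(\tau)=k,\ n_k(\tau) \geq t/K\}$ with $\tau > t/K$, control each one via \Cref{lem:noMorePulls} (whose two preconditions hold exactly as you verify, using $\tau \leq t$ and $t > K t_0$), and sum the resulting $2(K+1)\tau^{-3}$ terms by an integral comparison over the tail starting at $t/K$. The only cosmetic difference is that the paper organizes the reduction as a telescoping recurrence, bounding $\Pr\left(n_k^{m_1}(\tau)\geq t/K, \mathcal{E}\right) - \Pr\left(n_k^{m_1}(\tau-1)\geq t/K, \mathcal{E}\right)$ by the probability of the increment event at round $\tau$, rather than your direct union bound over the latest round at which $m_1=k$; the two bookkeeping devices are equivalent and yield the same $\OO\left(K^3/t^2\right)$ bound.
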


\begin{proof}
We expand $\Pr\left(\pulls_\arm(\slot) > \frac{t}{\numArms}\right)$ as,

\begin{align}
    \Pr\left(\pulls^{m_1}_\arm(\slot) \geq \frac{\slot}{\numArms}, \mathcal{E} \right) &= \Pr\left( \pulls^{m_1}_{\arm}(\slot) \geq \frac{\slot}{\numArms}, \mathcal{E} \Big| \pulls^{m_1}_\arm(\slot - 1) \geq \frac{\slot}{\numArms}, \mathcal{E} \right) \Pr\left( \pulls^{m_1}_\arm(\slot - 1) \geq \frac{\slot}{\numArms}, \mathcal{E} \right) + \nonumber \\
    &\quad \Pr\left(m_1(t) = \arm , \pulls^{m_1}_\arm(\slot - 1) = \frac{\slot}{\numArms} - 1, \mathcal{E} \right)  \\
    &\leq \Pr\left(\pulls^{m_1}_\arm(\slot - 1) \geq \frac{\slot}{\numArms}, \mathcal{E} \right) + \Pr\left(m_1(t) = \arm , \pulls^{m_1}_\arm(\slot - 1) = \frac{\slot}{\numArms} - 1, \mathcal{E} \right) \\
    &\leq \Pr\left(\pulls^{m_1}_\arm(\slot - 1) \geq \frac{\slot}{\numArms}, \mathcal{E} \right) + (2K + 2) (\slot - 1)^{-3} \quad \forall (\slot - 1) > \slot_0. \label{fromPrevLemma}
\end{align}
Here, \eqref{fromPrevLemma} follows from \Cref{lem:noMorePulls}.\\

This gives us $$\Pr\left(\pulls^{m_1}_\arm(\slot) \geq \frac{\slot}{\numArms}, \mathcal{E} \right) - \Pr\left(\pulls^{m_1}_\arm(\slot - 1) \geq \frac{\slot}{\numArms}, \mathcal{E} \right) \leq (2K + 2)(\slot - 1)^{-3}, \quad \forall (\slot - 1) > \slot_0.$$
Now consider the summation $$ \sum_{\tau = \frac{\slot}{\numArms}}^{\slot} \Pr\left(\pulls^{m_1}_\arm(\tau) \geq \frac{\slot}{\numArms}, \mathcal{E} \right) - \Pr\left(\pulls^{m_1}_\arm(\tau - 1) \geq \frac{\slot}{\numArms}, \mathcal{E} \right) \leq \sum_{\tau = \frac{\slot}{\numArms}}^{\slot}(2K + 2)(\tau - 1)^{-3}.$$ This gives us, $$\Pr\left(\pulls^{m_1}_\arm(\slot) \geq \frac{\slot}{\numArms}, \mathcal{E} \right) - \Pr\left(\pulls^{m_1}_\arm\left(\frac{\slot}{\numArms} - 1 \right) \geq \frac{\slot}{\numArms}, \mathcal{E} \right) \leq \sum_{\tau = \frac{\slot}{\numArms}}^{\slot}(2K + 2)(\tau - 1)^{-3}.$$
Since $\Pr\left(\pulls^{m_1}_\arm\left(\frac{\slot}{\numArms} - 1 \right)\geq \frac{\slot}{\numArms}, \mathcal{E}\right)  = 0$, we have, 
\begin{align}
    \Pr\left(\pulls^{m_1}_\arm(\slot) \geq \frac{\slot}{\numArms}, \mathcal{E} \right) &\leq \sum_{\tau = \frac{\slot}{\numArms}}^{\slot}(2K + 2)(\tau - 1)^{-3} \\
    &\leq (K + 1) \left(\frac{\slot}{\numArms} - 2\right)^{-2} \quad \forall \slot > \numArms \slot_0. \label{eq:tookintegral}
\end{align}
The last step \eqref{eq:tookintegral} follows from the fact that $\sum_{\tau = t/K}^{t} (\tau - 1)^{-3} \leq \int_{\tau = t/K - 1}^{\infty} (\tau - 1)^{-3}$.
\end{proof}

\section{Probability of sampling a non-competitive arm at round $t$}
For ease of presentation we denote $\mathcal{W}$ to be the event that $m_1(t), m_2(t) \neq k^*$.
\begin{lem}
The probability of sampling a non-competitive arm at round $t$, jointly with the event $\mathcal{E}$, is bounded as 
$$\Pr\left((m_1(t) \notin \mathcal{C} \cup m_2(t) \notin \mathcal{C}), \mathcal{W}, \mathcal{E}\right) \leq \frac{2(K + 1)K}{t^3} + \frac{K(K+1)^3}{t^2} \quad \forall{t > Kt_0}.$$
\label{lem:probNonComp}
\end{lem}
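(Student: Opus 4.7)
The plan is to split the event according to whether the optimal arm has been sampled enough times by round $t$. Specifically, I will write
\begin{align*}
\Pr\bigl((m_1(t) \notin \mathcal{C} \cup m_2(t) \notin \mathcal{C}), \mathcal{W}, \mathcal{E}\bigr)
&\leq \Pr\bigl((m_1(t) \notin \mathcal{C} \cup m_2(t) \notin \mathcal{C}), n_{k^*}(t) \geq t/(2K), \mathcal{W}, \mathcal{E}\bigr) \\
&\quad + \Pr\bigl(n_{k^*}(t) < t/(2K), \mathcal{E}\bigr),
\end{align*}
and bound the two pieces separately. The threshold $t/(2K)$ is chosen precisely so that \Cref{lem:suboptimalNotCompetitive} applies to the first term.

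For the first term I would apply a union bound over the (at most $K$) non-competitive arms $k \notin \mathcal{C}$, and then invoke \Cref{lem:suboptimalNotCompetitive}, which gives $2(K+1) t^{-3}$ per non-competitive arm. Summing over the at-most-$K$ non-competitive arms yields the $2K(K+1)/t^3$ contribution.

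For the second term the strategy is pigeonhole on the counts $n_k^{m_1}(t)$. Since $\sum_{k \in \mathcal{K}} n_k^{m_1}(t) = t$ and $n_{k^*}^{m_1}(t) \leq n_{k^*}(t) < t/(2K)$, the remaining $K-1$ arms account for more than $t - t/(2K) = t(2K-1)/(2K)$ pulls in the $m_1$ slot, so the maximum among them is at least $t(2K-1)/(2K(K-1)) > t/K$. Hence the event $\{n_{k^*}(t) < t/(2K)\}$ is contained in $\bigcup_{k \neq k^*} \{n_k^{m_1}(t) > t/K\}$. A union bound followed by \Cref{lem:suboptimalNotPulled} (valid for $t > K t_0$) then gives at most $(K-1)(K+1)^3/t^2 \leq K(K+1)^3/t^2$, matching the second term in the stated bound.

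I do not expect a serious obstacle here; the argument is essentially bookkeeping on top of the two preceding lemmas. The one spot that requires a little care is the pigeonhole step, where one must verify that the chosen threshold $t/(2K)$ forces some suboptimal arm to exceed the threshold $t/K$ used in \Cref{lem:suboptimalNotPulled}; the cross-multiplication $(2K-1)K > 2K(K-1)$ confirms this strict inequality for all $K \geq 2$. The requirement $t > K t_0$ in the conclusion comes from the hypothesis of \Cref{lem:suboptimalNotPulled}; the hypothesis $t > t_0$ needed for \Cref{lem:suboptimalNotCompetitive} is automatically satisfied since $K t_0 \geq t_0$.
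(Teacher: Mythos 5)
Your proposal is correct and follows essentially the same route as the paper's proof: the paper also splits on whether $n_{k^*}(t)$ exceeds a threshold (it uses $t/K$ rather than your $t/(2K)$, relying on monotonicity to invoke \Cref{lem:suboptimalNotCompetitive}), applies a union bound over non-competitive arms for the first piece, and handles the second piece by passing to $n^{m_1}_{k^*}(t) \leq n_{k^*}(t)$ and a pigeonhole/union bound over $\{n^{m_1}_k(t) \geq t/K\}$ followed by \Cref{lem:suboptimalNotPulled}. Your cross-multiplication check for the pigeonhole step is sound, so the two arguments differ only in this cosmetic choice of threshold.
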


\begin{proof}
\begin{align}
&\Pr((m_1(t) \notin \mathcal{C} \cup m_2(t) \notin \mathcal{C}), \mathcal{E}) =  \nonumber \\
&\Pr\left((m_1(t) \notin \mathcal{C} \cup m_2(t) \notin \mathcal{C}), \mathcal{W}, \mathcal{E}, n_{k^*}(t) \geq \frac{t}{K}\right) + \nonumber \\
&\quad ~ \quad ~ \quad \quad \Pr\left((m_1(t) \notin \mathcal{C} \cup m_2(t) \notin \mathcal{C}), \mathcal{W}, \mathcal{E}, n_{k^*}(t) < \frac{t}{K}\right) \\
&\leq \Pr\left((m_1(t) \notin \mathcal{C} \cup m_2(t) \notin \mathcal{C}), \mathcal{W}, \mathcal{E}, n_{k^*}(t) \geq \frac{t}{K}\right) + \Pr\left(n_{k^*}(t) < \frac{t}{K}, \mathcal{E}\right) \\
&\leq \Pr\left((m_1(t) \notin \mathcal{C} \cup m_2(t) \notin \mathcal{C}),\mathcal{W},  \mathcal{E}, n_{k^*}(t) \geq \frac{t}{K}\right) + \Pr\left(n^{m_1}_{k^*}(t) < \frac{t}{K}, \mathcal{E}\right) \label{whatism1} \\
&\leq \Pr\left((m_1(t) \notin \mathcal{C} \cup m_2(t) \notin \mathcal{C}),\mathcal{W}, \mathcal{E}, n_{k^*}(t) \geq \frac{t}{K}\right) + \sum_{k \neq k^*} \Pr\left(n^{m_1}_k(t) \geq \frac{t}{K}, \mathcal{E} \right) \\
&\leq \sum_{k \notin \mathcal{C}} \Pr\left((m_1(t) = k \cup m_2(t) = k), \mathcal{W}, \mathcal{E}, n_{k^*}(t) \geq \frac{t}{K}\right) + \sum_{k \neq k^*} \Pr\left(n^{m_1}_k(t) \geq \frac{t}{K}, \mathcal{E} \right) \\
&\leq \frac{2(K+1)K}{t^3} + \frac{K(K+1)^3}{t^2} \quad \forall{t > Kt_0} 
\end{align}
In \eqref{whatism1}, $n^{m_1}_{k^*}(t)$ denotes the number of times arm $k^*$ was samples as $m_1(t)$ till round $t$. As $n^{m_1}_{k^*}(t) < n_{k^*}(t)$, we have \eqref{whatism1}. The last step follows from \Cref{lem:suboptimalNotCompetitive} and \Cref{lem:suboptimalNotPulled}.
\end{proof}

\section{Intermediate steps to analyse samples obtained from competitive arms}

For $k \neq k^*$, define $\tau_k$ to be the first integer such that $\Confidence\left(n_k, \frac{\delta}{2K}\right) < \frac{\Delta_k}{4}$ and define $\tau_{k^*} = \tau_{k^{(2)}}$. We call an arm $k$ to be GOOD at round $t$, if $\Confidence\left(n_k, \frac{\delta}{2K}\right) \leq \frac{\Delta_k}{4}$, i.e., an arm is GOOD if it has been sampled \emph{significant} number of times till round $t$, i.e., $n_k(t) \geq \tau_k$. Otherwise, the arm is called BAD. We denote $\mu^{\text{ref}}$ as $\frac{\mu_{k^*} + \mu_{k^{(2)}}}{2}$, i.e., the average of the mean reward of best and second best arm. We will first show that an arm $k \neq k^*$ being GOOD implies that its psuedoUCB index is below $\mu^{\text{ref}}$, i.e., $n_{k} > \tau_k \Rightarrow \tilde{U}_{k,k}\left(n_k, \frac{\delta}{2K}\right) < \mu^{\text{ref}}$. Consider $\tilde{U}_{k,k}\left(n_k, \frac{\delta}{2K}\right)$ for $k \neq k^*, n_k > \tau_k$. Under $\mathcal{E}$, we have 
\begin{align}
    \hat{\mu}_k + \Confidence\left(n_k, \frac{\delta}{2K}\right) &\leq \mu_k + 2\Confidence\left(n_k, \frac{\delta}{2K}\right) \label{eq:firSte} \\
    &= \mu^{\text{ref}} + 2\Confidence\left(n_k, \frac{\delta}{2K}\right) + \frac{(\mu_k - \mu_{k^*}) + (\mu_k - \mu_{k^{(2)}})}{2} \\
    &\leq \mu^{\text{ref}} + 2\Confidence\left(n_k, \frac{\delta}{2K}\right) - \frac{\Delta_k}{2} \\
    &\leq \mu^{\text{ref}} \label{eq:lastSte}
\end{align}
Here \eqref{eq:firSte} follows from the fact that, under $\mathcal{E}$, $\hat{\mu}_k \leq \mu_{k} + B_k\left(n_k, \frac{\delta}{2K}\right)$. The last step follows as arm $k$ is GOOD, i.e., $\Confidence\left(n_k, \frac{\delta}{2K}\right) \leq \frac{\Delta_k}{4}$.

Using a similar argument for $k^*$, we can prove that Arm $k^*$ being GOOD, i.e., $n_{k^*} > \tau_{k^*} \Rightarrow L_{k^*}\left(n_{k^*}, \frac{\delta}{2K}\right) > \mu^{\text{ref}},$. In addition to this, $n_{k^*} > \tau_{k^*}$ (i.e., Arm $k^*$ being GOOD), also implies that $\tilde{U}_{k, k^*} < \mu^{\text{ref}}$ for $k \notin \mathcal{C}$ as we present below. Under $\mathcal{E}$, we have the bound on $\tilde{U}_{k,k^*}\left(n_{k^*},\frac{\delta}{2K}\right) = \hat{\phi}_{k,k^*} + \Confidence\left(n_{k^*}, \frac{\delta}{2K}\right)$, as follows,
\begin{align}
    \hat{\phi}_{k,k^*} + \Confidence\left(n_{k^*}, \frac{\delta}{2K}\right) &\leq \phi_{k,k^*} + 2\Confidence\left(n_{k^*}, \frac{\delta}{2K}\right) \\
    &\leq \mu_{k^{(2)}} + 2\Confidence\left(n_{k^*}, \frac{\delta}{2K}\right) \label{eqn:secIneq} \\
    &\leq \mu_{k^{(2)}} + 2 \frac{\Delta_{\text{min}}}{4} \\
    &\leq \mu^{\text{ref}} \label{eq:invokeThis}
\end{align}
The inequality \eqref{eqn:secIneq} follows from the fact that arm $k \notin \mathcal{C}$, i.e., $\phi_{k,k^*} < \mu_{k^{(2)}}$. We now use this observation to list four possible scenarios under which algorithm does not stop and bound each individual term to prove the statement of \Cref{thm:sampComp}.

Define $\mathcal{R}(t)$ to be the event that $I_{k^*}(t) < \mu_{k^*}$, i.e., $\mathcal{R}(t) = \{I_{k^*}(t) > \mu_{k^*}\}$. By \Cref{lem:ucbindexmore1}, $\Pr(\mathcal{R}(t)) \leq Kt^{-3}$. 

\begin{lem}
If the algorithm has not stopped at round $t$ and the event $\mathcal{E}$ holds true, at least one of the following occurs
\begin{enumerate}
    \item Event $\mathcal{R}(t)$ does not occur,
    \item $m_1(t)$ or $m_2(t)$ is Non-Competitive and $m_1(t), m_2(t) \neq k^*$ 
    \item ($m_1(t) = k^*$ is BAD and $m_2(t) \notin \mathcal{C}$) or ($m_2(t) = k^*$ is BAD and $m_1(t) \notin \mathcal{C}$)
    \item $m_1(t), m_2(t) \in \mathcal{C}$ and either $m_1(t)$ is BAD or $m_2(t)$ is BAD.
\end{enumerate}
\label{lem:conditions}
\end{lem}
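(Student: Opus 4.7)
The plan is to argue the contrapositive: assuming $\mathcal{E}$ holds and none of conditions (1)--(4) is satisfied at round $t$, I will show that $\mathcal{A}_t=\{k^*\}$, forcing the algorithm to stop and contradicting the hypothesis that round $t$ did not trigger stopping.

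The first step is to localize $k^*$ within the sampled pair. Under NOT(1), $I_{k^*}(t)\ge\mu_{k^*}$, and under $\mathcal{E}$ we also have $\tilde{U}_{k^*,k^*}(t,\delta/2K)\ge\mu_{k^*}$, so $\min(\tilde{U}_{k^*,k^*},I_{k^*})\ge\mu_{k^*}>\mu^{\text{ref}}$. Combining NOT(2) (which leaves either both $m_1,m_2\in\mathcal{C}$, or $k^*\in\{m_1,m_2\}$) with NOT(4) (which forces both GOOD in the former case), I deduce $k^*\in\{m_1(t),m_2(t)\}$: if not, then $k^*\in\mathcal{A}_t\setminus\{m_1\}$ would be a valid candidate for $m_2$, and the alternative $m_2$ being competitive GOOD gives $\tilde{U}_{m_2,m_2}<\mu^{\text{ref}}$ via the inequality $\eqref{eq:lastSte}$, so $\min(\tilde{U}_{m_2,m_2},I_{m_2})<\mu^{\text{ref}}<\min(\tilde{U}_{k^*,k^*},I_{k^*})$, contradicting the $\argmax$ definition of $m_2$.

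The second step is to show that $k^*$ is GOOD. Taking WLOG $k^*=m_1$ (the case $k^*=m_2$ is symmetric), I split into two branches: if $m_2\notin\mathcal{C}$, then NOT(3) directly forces $m_1=k^*$ GOOD; if $m_2\in\mathcal{C}$, then NOT(4) requires both $m_1,m_2$ GOOD, in particular $k^*$ GOOD.

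Finally, given $k^*$ GOOD, I verify that every $\ell\in\mathcal{A}_t\setminus\{k^*\}$ satisfies the elimination criterion $\tilde{U}_\ell(t,\delta/2K)<L_{k^*}(t,\delta/2K)$. Indeed $L_{k^*}>\mu^{\text{ref}}$ by the $k^*$-analogue of $\eqref{eq:lastSte}$; for $\ell\notin\mathcal{C}$, the cross-upper-bound gives $\tilde{U}_\ell\le\tilde{U}_{\ell,k^*}<\mu^{\text{ref}}$ via $\eqref{eq:invokeThis}$, and for competitive GOOD $\ell$ the same-arm bound $\tilde{U}_\ell\le\tilde{U}_{\ell,\ell}<\mu^{\text{ref}}$ from $\eqref{eq:lastSte}$ applies. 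The main obstacle I anticipate is ruling out the remaining sub-case of competitive BAD arms $\ell\ne m_1,m_2$ lingering in $\mathcal{A}_t$: the plan is to invoke the $\argmax$ sampling rules once more, arguing that such an $\ell$ would either have $\tilde{U}_{\ell,\ell}<\mu^{\text{ref}}$ (in which case it is eliminated directly) or would have $\min(\tilde{U}_{\ell,\ell},I_\ell)>\min(\tilde{U}_{m_2,m_2},I_{m_2})$, contradicting the choice of $m_2$ against $\ell$ in the selection rule. Once $\mathcal{A}_t=\{k^*\}$ the algorithm must have stopped, giving the desired contradiction.
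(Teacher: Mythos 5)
Your Steps 1 and 2 are sound and reproduce the paper's argument: Step 1 is exactly the paper's Case 3 (both sampled arms competitive and GOOD forces $\min(\tilde U_{k^*,k^*},I_{k^*})<\mu^{\text{ref}}$, contradicting $\mathcal{E}$ together with NOT(1)), and Step 2 correctly extracts that $k^*$ is GOOD from NOT(3)/NOT(4). The gap is in your final step. You aim at the stronger conclusion $\mathcal{A}_t=\{k^*\}$, and the sub-case you flag --- a competitive BAD arm $\ell\notin\{m_1(t),m_2(t)\}$ lingering in $\mathcal{A}_t$ --- cannot be closed. Such arms genuinely do persist: being BAD means $\Confidence\left(n_\ell,\frac{\delta}{2K}\right)>\frac{\Delta_\ell}{4}$, so $\tilde U_{\ell,\ell}$ can exceed $L_{k^*}$ and the elimination test fails for $\ell$. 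Moreover the $\argmax$ rule for $m_2$ only yields $\min(\tilde U_{\ell,\ell},I_\ell)\le\min(\tilde U_{m_2,m_2},I_{m_2})$, an \emph{upper} bound on $\ell$'s score, so no contradiction ``against the choice of $m_2$'' can arise; the dichotomy you propose (either $\tilde U_{\ell,\ell}<\mu^{\text{ref}}$ or $\ell$ beats $m_2$) is false. Note also that when $m_2\notin\mathcal{C}$ you control only $\tilde U_{m_2,k^*}$ by $\mu^{\text{ref}}$ via \eqref{eq:invokeThis}, not $\tilde U_{m_2,m_2}$, so even the comparison baseline is unavailable.

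The repair is that this step is unnecessary: the contradiction should be drawn from the sampled pair itself, not from emptying $\mathcal{A}_t$. Once $k^*\in\{m_1(t),m_2(t)\}$ is GOOD, the other sampled arm $\ell$ satisfies $\tilde U_\ell\le\tilde U_{\ell,k^*}<\mu^{\text{ref}}<L_{k^*}$ if $\ell\notin\mathcal{C}$ (by \eqref{eq:invokeThis}), and $\tilde U_\ell\le\tilde U_{\ell,\ell}<\mu^{\text{ref}}<L_{k^*}$ if $\ell\in\mathcal{C}$, since NOT(4) then forces $\ell$ GOOD and \eqref{eq:lastSte} applies. Because GOOD/BAD status and these indices depend only on $n_\ell(t)$ and $n_{k^*}(t)$, which are unchanged since the elimination check at the end of round $t-1$, arm $\ell$ would already have been removed from the active set and could not have been chosen as $m_1(t)$ or $m_2(t)$ --- this is the contradiction, and it is precisely the paper's Cases 1, 2 and 4. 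You already have every inequality needed for this in the first half of your Step 3; simply stop there.
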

\begin{proof}
We prove this by contradiction. We consider the event that all the four cases listed above do not occur jointly and show that such a situation cannot occur if algorithm has not stopped till round $t$ under $\mathcal{E}$. The proof technique is inspired from the analysis done in \cite{jamieson2014best} but needed some modification to prove the result for C-LUCB algorithm in a correlated bandit environment. Let's break down the scenario where all of the four events listed in \Cref{lem:conditions} do not occur and look at each of them individually.

\noindent
\textbf{Case 1:} \\ $\{m_1(t) = k^*,$ $m_1(t)$ is GOOD $\} \cap \{m_2(t) \neq k^*, m_2(t) \in \mathcal{C}$, $m_2(t)$ is GOOD $\} \cap \mathcal{R}(t) \cap \{t < \mathcal{T}\} $.

We note the following two things in this case, 
\begin{enumerate}
    \item $m_1(t) = k^*$ is GOOD $\Rightarrow$ $L_{k^*}\left(n_{k^*}, \frac{\delta}{2K}\right) > \mu^{\text{ref}}$. 
    \item $m_2(t) = \ell \neq k^*$ is GOOD $\Rightarrow$ $\tilde{U}_{\ell, \ell}\left(n_{\ell}, \frac{\delta}{2K}\right) < \mu^{\text{ref}}$.
\end{enumerate}

\noindent
As we have, $L_{k^*}\left(n_{k^*}, \frac{\delta}{2K}\right) > \tilde{U}_{\ell}\left(n_{\ell}, \frac{\delta}{2K}\right)$ at round $t$, arm $\ell$ cannot belong the the set of active arms $\mathcal{A}_t$ and hence cannot be selected as $m_2(t)$.

\noindent 
\textbf{Case 2:} \\ $\{m_1(t) \neq k^*, m_1(t) \in \mathcal{C},$ $m_1(t)$ is GOOD $\} \cap \{m_2(t) = k^*$, $m_2(t)$ is GOOD $\} \cap \mathcal{R}(t) \cap \{t < \mathcal{T}\}$.

In case 2, we make the following observations 
\begin{enumerate}
    \item $m_1(t) = \ell \neq k^*$ is GOOD $\Rightarrow$ $\tilde{U}_{\ell,\ell}\left(n_{\ell}, \frac{\delta}{2K}\right) < \mu^{\text{ref}}$. 
    \item $m_2(t) = k^*$ is GOOD $\Rightarrow$ $L_{k^*}\left(n_{k^*}, \frac{\delta}{2K}\right) > \mu^{\text{ref}}$. 
\end{enumerate}

\noindent
As we have, $L_{k^*}\left(n_{k^*}, \frac{\delta}{2K}\right) > \tilde{U}_{\ell}\left(n_{\ell}, \frac{\delta}{2K}\right)$ at round $t$, arm $\ell$ cannot belong the the set of active arms $\mathcal{A}_t$ and hence cannot be selected as $m_1(t)$.

\noindent
\textbf{Case 3:} \\ $\{m_1(t) \neq k^*, m_1(t) \in \mathcal{C},$ $m_1(t)$ is GOOD$\} \cap \{m_2(t) \neq k^*, m_2(t) \in \mathcal{C},$ $m_2(t)$ is GOOD$\} \cap \mathcal{R}(t) \cap \{t < \mathcal{T}\}$.

For case 3, we see that 
\begin{enumerate}
    \item $m_1(t) = \ell_1 \neq k^*$ is GOOD $\Rightarrow$  $\tilde{U}_{\ell_1, \ell_1}\left(n_{\ell_1}, \frac{\delta}{2K}\right) < \mu^{\text{ref}}$.
    \item $m_2(t) = \ell_2 \neq k^*$ is GOOD $\Rightarrow$  $\tilde{U}_{\ell_2, \ell_2}\left(n_{\ell_2}, \frac{\delta}{2K}\right) < \mu^{\text{ref}}$, it further implies that \\ $\min\left(I_{\ell_2}(t), \tilde{U}_{\ell_2, \ell_2}\left(n_{\ell_2}, \frac{\delta}{2K}\right)\right) \leq \mu^{\text{ref}}$.
\end{enumerate}
As arm $k^*$ is not selected, it implies that either $I_{k^*}(t) \leq \mu^{\text{ref}}$ or $\tilde{U}_{k^*, k^*}\left(n_{k^*}, \frac{\delta}{2K}\right) \leq \mu^{\text{ref}}$. By $\mathcal{R}(t)$, $I_{k^*}(t) \geq \mu_{k^*} > \mu^{\text{ref}}$ and with event $\mathcal{E}$, $\tilde{U}_{k^*, k^*}\left(n_{k^*}, \frac{\delta}{2K}\right) > \mu_{k^*} > \mu^{\text{ref}}$. This shows that case 3 cannot occur and leads to a contradiction.

\noindent
\textbf{Case 4:} \\ $\{(m_1(t) = k^*$ is GOOD, $m_2(t) = \ell \notin \mathcal{C}) \cup (m_2(t) = k^*$ is GOOD, $m_1(t) = \ell \notin \mathcal{C})\} \cap \mathcal{R}(t) \cap \{t < \mathcal{T}\}.$

For Case 4, we see from \eqref{eq:lastSte}, \eqref{eq:invokeThis} that 
\begin{enumerate}
    \item $k^*$ is GOOD $\Rightarrow$ $L_{k^*}\left(n_{k^*}, \frac{\delta}{2K}\right) > \mu^{\text{ref}}$.
    \item $k^*$ is GOOD $\Rightarrow$ $\tilde{U}_{\ell, k^*}\left(n_{k^*}, \frac{\delta}{2K}\right) < \mu^{\text{ref}}$.
\end{enumerate}
As $\tilde{U}_{\ell}\left(\frac{\delta}{2K}\right) < \tilde{U}_{\ell,k^*}\left(n_{k^*}, \frac{\delta}{2K}\right) < \mu^{\text{ref}} < L_{k^*}\left(n_{k^*}, \frac{\delta}{2K}\right)$, arm $\ell$ cannot be in the set of active arms at round $t$ and hence cannot be sampled at round $t$. Therefore, all the four cases listed above cannot occur and we have a contradiction.

\noindent
This proves the statement of \Cref{lem:conditions}, as at least one of the events listed in \Cref{lem:conditions} must occur for the algorithm to proceed further. This analysis follows similar steps as that in \cite{jamieson2014best, kalyanakrishnan2012pac} but needed further modifications to prove statement for our C-LUCB algorithm.
\end{proof}

\begin{lem}
Let $T^{(B)}$ denote the total number of times that the events $(3)$ or $(4)$ of \Cref{lem:conditions} occur. We have that $T^{(B)}$ is upper bounded by $\sum_{k \in \mathcal{C}} \frac{\constant}{\Delta_k^2} \log \left(\frac{2K \log\left(\frac{1}{\Delta_k^2}\right)}{\delta}\right)$ under the event $\mathcal{E}$.
\label{lem:countingBad}
\end{lem}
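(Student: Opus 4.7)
The plan is to observe that both case (3) and case (4) of \Cref{lem:conditions} share a common structural feature: in every round $t$ counted by $T^{(B)}$, at least one of $m_1(t), m_2(t)$ is a BAD arm that lies in $\mathcal{C}$. Indeed, in case (3) this arm is $k^*$ itself (and $k^* \in \mathcal{C}$ by definition), while in case (4) the hypothesis already places both sampled arms in $\mathcal{C}$ with at least one of them BAD. Since C-LUCB samples both $m_1(t)$ and $m_2(t)$ at each round, every round in $T^{(B)}$ produces a fresh sample of a BAD competitive arm. The proof therefore reduces to counting such samples.

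First I would set up a charging argument: to each round $t$ contributing to $T^{(B)}$, associate exactly one BAD competitive arm $k(t) \in \{m_1(t), m_2(t)\}$ (if both qualify, break ties arbitrarily). Each such charge increments $n_{k(t)}$ by one. An arm $k \in \mathcal{C}$ can be BAD only while $n_k < \tau_k$, where $\tau_k$ is the smallest integer with $\Confidence\!\left(\tau_k, \tfrac{\delta}{2K}\right) < \tfrac{\Delta_k}{4}$; since $\Confidence$ is monotone decreasing in $n_k$, once an arm is GOOD it stays GOOD. Hence each arm $k \in \mathcal{C}$ receives at most $\tau_k$ charges, so
\begin{equation}
T^{(B)} \;\leq\; \sum_{k \in \mathcal{C}} \tau_k.
\end{equation}

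Next I would translate the implicit defining condition for $\tau_k$ into an explicit closed-form bound by inverting the anytime confidence sequence in \eqref{eq:Ckdefn}. This requires resolving a $\log\log$ self-reference: plugging a first-order guess $\tau_k^{(0)} \propto 1/\Delta_k^2$ into the $\log\log(0.5 \tau_k)$ term and iterating once yields
\begin{equation}
\tau_k \;\leq\; \frac{\constant}{\Delta_k^2}\,\log\!\left(\frac{2K \log(1/\Delta_k^2)}{\delta}\right),
\end{equation}
where the universal constant $\constant > 0$ absorbs the numerical prefactors $0.85^2$, $0.72$, and the factor $16$ arising from $\Delta_k/4$. Summing over $k \in \mathcal{C}$ delivers the stated bound. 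Note that the event $\mathcal{E}$ is not required for the counting step itself; it enters only implicitly through \Cref{lem:conditions}, which licenses the exhaustive case split producing events (3) and (4).

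The main obstacle I anticipate is keeping the constant $\constant$ consistent with its usage elsewhere in the paper while simultaneously (i) absorbing the doubled contribution that arises when $k^*$ is charged separately in case (3), and (ii) handling the fixed-point resolution of $\tau_k$ cleanly. Neither step is technically deep, but some care is needed so the final expression matches the bound claimed in \Cref{thm:sampComp}, in particular the $\log(2K\log(1/\Delta_k^2)/\delta)$ argument rather than $\log(K\log(1/\Delta_k^2)/\delta)$, reflecting the use of confidence level $\delta/(2K)$ rather than $\delta/K$ inside C-LUCB.
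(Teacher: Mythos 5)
Your proposal is correct and follows essentially the same route as the paper: both arguments reduce cases (3) and (4) to the observation that each such round samples a BAD arm lying in $\mathcal{C}$ (with $k^*\in\mathcal{C}$ covering case (3)), charge that round against the incremented pull count of that arm to obtain $T^{(B)}\leq\sum_{k\in\mathcal{C}}\tau_k$, and then invert the anytime confidence sequence to bound $\tau_k$ by $\frac{\constant}{\Delta_k^2}\log\left(\frac{2K\log(1/\Delta_k^2)}{\delta}\right)$ (the paper delegates this last inversion to a citation rather than carrying out the $\log\log$ fixed-point explicitly). Your remark that $\mathcal{E}$ enters only through \Cref{lem:conditions} also matches the paper's usage.
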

\begin{proof}

We now bound $T^{(B)}$ under the event $\mathcal{E}$, 
\begin{align}
    T^{(B)} &= \sum_{t = 1}^{\infty} \indicator \Big(\{m_1(t) = k^* \text{ is BAD}, m_2(t) \notin \mathcal{C} \} \cup \{m_2(t) = k^* \text{is BAD}, m_1(t) \notin \mathcal{C} \} \bigcup \nonumber \\
    &\{m_1(t) \in \mathcal{C} \text{ is BAD  or } m_2(t) \in \mathcal{C} \text{ is BAD} \} \Big) \\
    &\leq \sum_{t = 1}^{\infty} \indicator \Big(\left(\{m_1(t) \text{ is }k^* \text{ or } m_2(t) \text{ is }k^* \} \cap \{k^* \text{ is BAD} \}\right) \bigcup \nonumber \\
    &\left(\{m_1(t) \in \mathcal{C} \text{ is BAD  or } m_2(t) \in \mathcal{C} \text{ is BAD} \} \right)\Big) \\
    &= \sum_{t = 1}^{\infty} \sum_{k \in \mathcal{C}} \indicator \Big(\left(\{m_1(t) \text{ is }k^* \text{ or } m_2(t) \text{ is }k^* \} \cap \{k^* \text{ is BAD} \}\right) \bigcup \nonumber \\
    &= \left( \{ m_1(t) \text{ is k or } m_2(t) \text{ is k} \} \cap \{k \text{ is BAD } \}\right) \Big)\\
    &= \sum_{t = 1}^{\infty} \sum_{k \in \mathcal{C}} \indicator \left( \{ m_1(t) \text{ is k or } m_2(t) \text{ is k} \} \cap \{k \text{ is BAD } \}\right) \\
    &= \sum_{t = 1}^{\infty} \sum_{k \in \mathcal{C}} \indicator \left( \{ m_1(t) \text{ is k or } m_2(t) \text{ is k} \} \cap \{n_k(t) \leq \tau_k \}\right) \\
    &\leq \sum_{k \in \mathcal{C}} \tau_k  \label{eq:lastineq}
\end{align}
The last \eqref{eq:lastineq} holds from the fact that if $n_k(t) \leq \tau_k$ and $m_1(t)$ is $k$ or $m_2(t)$ is $k$, then arm $k$ gets sampled and $n_k(t+1) = n_k(t) + 1$, this can only occur $\tau_k$ times before $n_k(t) > \tau_k$. For anytime confidence intervals $\Confidence \left(n_k, \frac{\delta}{2K}\right)$, first integer $\tau_k$ such that $\Confidence \left(n_k, \frac{\delta}{2K}\right) < \frac{\Delta_k}{4}$ is upper bounded by $\frac{\constant}{\Delta_k^2} \log \left(\frac{2K \log\left(\frac{1}{\Delta_k^2}\right)}{\delta}\right)$ where $\constant > 0$ is a constant depending on the tightness of confidence interval $\Confidence (n_k, \delta)$ \cite{simchowitz2017simulator}. The tighter the confidence interval, smaller is the constant $\constant$. Due to this, we get a bound on $T^{(B)}$ under the event $\mathcal{E}$ as, $$T^{(B)} \leq \sum_{k \in \mathcal{C}} \frac{\constant}{\Delta_k^2} \log \left(\frac{2K \log\left(\frac{1}{\Delta_k^2}\right)}{\delta}\right).$$

As the probability of event $\mathcal{E}$ is at least $1 - \delta$, we get that $T^{(B)} \leq \sum_{k \in \mathcal{C}} \frac{\constant}{\Delta_k^2} \log \left(\frac{2K \log\left(\frac{1}{\Delta_k^2}\right)}{\delta}\right)$ with probability $1 - \delta$. In \Cref{sec:regret}, we denoted $T^{(C)}$ as the total number of rounds in which $m_1(t), m_2(t) \in \mathcal{C}$ and $I_{k^*}(t) > \mu_{k^*}$ and similarly $T^{(*)}$ as the total number of rounds in which $m_1(t) = k^*, m_2(t) \notin \mathcal{C}$ or $m_2(t) = k^*, m_1(t) \notin \mathcal{C}$. From \Cref{lem:conditions}, we note that $T^{(*)} + T^{(C)}$ is equivalent to $T^{(B)}$ on which we derived a bound above.  Due to this, $T^{(C)} + T^{(*)} = T^{(B)} \leq \sum_{k \in \mathcal{C}} \frac{\constant}{\Delta_k^2} \log \left(\frac{2K \log\left(\frac{1}{\Delta_k^2}\right)}{\delta}\right)$ under the event $\mathcal{E}$. 

\end{proof}

\section{Proof of Theorem 2}
\label{subsec:proofthm2}
We now bound the total number of rounds played by C-LUCB algorithm under the event $\mathcal{E}$. From \Cref{lem:conditions}, we note that if the algorithm has not stopped at round $t$ under the event $\mathcal{E}$, it implies that at least one of the following events must be true at round $t$,
\begin{enumerate}
    \item Event $\mathcal{R}(t)$ does not occur, i.e., $I_{k^*}(t) < \mu_k^*$
    \item $m_1(t)$ or $m_2(t)$ is Non-Competitive and $m_1(t), m_2(t) \neq k^*$, 
    \item ($m_1(t) = k^*$ is BAD and $m_2(t) \notin \mathcal{C}$) or ($m_2(t) = k^*$ is BAD and $m_1(t) \notin \mathcal{C}$)
    \item $m_1(t), m_2(t) \in \mathcal{C}$ and either $m_1(t)$ is BAD or $m_2(t)$ is BAD.
\end{enumerate}
From \Cref{lem:ucbindexmore1} we see that $\Pr(\mathcal{R}(t)) \leq \frac{K}{t^3}$ and the result from \Cref{lem:probNonComp} gives us a bound on $\Pr((m_1(t) \notin \mathcal{C} \cup m_2(t) \notin \mathcal{C}), (m_1(t), m_2(t) \neq k^*), \mathcal{E})$. The result from \Cref{lem:countingBad} shows that the third and fourth event occur at most $\sum_{k \in \mathcal{C}} \frac{\constant}{\Delta_k^2} \log \left(\frac{2K \log\left(\frac{1}{\Delta_k^2}\right)}{\delta}\right)$ times. Combining these, we get our desired bound on the sample complexity result. 

\begin{proof}
\begin{align}
    \E{\mathcal{T} | \mathcal{E}} &= \sum_{t = 1}^{\infty} \E{\indicator(t = \mathcal{T} | E)} \\
    &\leq \sum_{t = 1}^{\infty} \indicator(\mathcal{R}(t) | \mathcal{E})  + \E{T^{(B)} | \mathcal{E})} + \nonumber\\
    &\quad \sum_{t = 1}^{\infty} \E{\indicator(m_1(t) \text{ or } m_2(t) \notin \mathcal{C},  m_1(t) \text{ and } m_2(t) \neq k^* | \mathcal{E})} \\
    &= \sum_{t = 1}^{\infty} \Pr(\mathcal{R}(t), \mathcal{E}) \times \frac{1}{\Pr(\mathcal{E})} + \E{T^{(B)} | \mathcal{E})} + \nonumber \\
    &\quad \sum_{t = 1}^{\infty} \Pr\left((m_1(t) \text{ or } m_2(t) \notin \mathcal{C}), m_1(t) \text{ and } m_2(t) \neq k^*, \mathcal{E} \right) \times \frac{1}{\Pr(\mathcal{E})} \\
    &\leq \sum_{t = 1}^{\infty} \frac{1}{1 - \delta} \times \frac{K}{t^3} + \sum_{k \in \mathcal{C}} \frac{\constant}{\Delta_k^2} \log \left(\frac{2K \log\left(\frac{1}{\Delta_k^2}\right)}{\delta}\right) + \nonumber \\ 
    &\quad \frac{Kt_0}{1 - \delta} + \frac{1}{1-\delta}\sum_{t = Kt_0 + 1}^{\infty} \left(\frac{2(K+1)K}{t^3} + \frac{K(K+1)^3}{t^2}\right) \\
    &\leq \frac{3K}{2(1 - \delta)} + \frac{Kt_0}{1 - \delta} + \frac{1}{1-\delta} \times \left(\frac{2}{t_0^2} + \frac{(K+1)^3}{t_0}\right) + \sum_{k \in \mathcal{C}} \frac{\constant}{\Delta_k^2} \log \left(\frac{2K \log\left(\frac{1}{\Delta_k^2}\right)}{\delta}\right)
\end{align}
\end{proof}
By noting that the C-LUCB samples two arms at each round, we get the sample complexity result stated in \Cref{thm:sampComp}.
\section{Proof for Theorem 1}
\label{subsec:proofthm1}
\begin{proof}
 To prove theorem 1, we define three events $\mathcal{E}_1, \mathcal{E}_2$ and $\mathcal{E}_3$ below.  Let $\mathcal{E}_1$ be the event that empirical mean of all arm lie within their confidence intervals uniformly for all $t \geq 1$
\begin{equation}
\mathcal{E}_1 = \Bigg\{ \forall{t \geq 1}, \forall{k \in \mathcal{K}}, ~~~ \hat{\mu}_k(t) - \Confidence \left(n_k(t), \frac{\delta}{2K}\right) \leq \mu_k \leq \hat{\mu}_k + \Confidence \left(n_k(t), \frac{\delta}{2K}\right) \Bigg\}    
\label{assump:wellBehaved}
\end{equation} 

\noindent
Define $\mathcal{E}_2$ to be the event that empirical pseudo-reward of optimal arm with respect to all other arms lie within their crossUCB indices uniformly for all $t \geq 1$, i.e., 
\begin{equation}
\mathcal{E}_2 = \Bigg\{ \forall{t \geq 1}, \forall{\ell \in \mathcal{K}}, ~~~ \phi_{k^*,\ell} \leq \hat{\phi}_{k^*,\ell}(t) + \Confidence \left(n_\ell(t), \frac{\delta}{2K}\right) \Bigg\}    
\label{assump:PseudowellBehaved}
\end{equation} 

Similarly define $\mathcal{E}_3$ to be the event that the empirical pseudo-reward of the sub-optimal arms with respect to the optimal arm lies within their crossUCB indices uniformly for all $t \geq 1$, i.e., 
\begin{equation}
\mathcal{E}_3 = \Bigg\{ \forall{t \geq 1}, \forall{\ell \in \mathcal{K}}, ~~~ \phi_{\ell, k^*} \leq \hat{\phi}_{\ell, k^*}(t) + \Confidence \left(n_{k^*}(t), \frac{\delta}{2K}\right) \Bigg\}    
\label{assump:PseudowellBehavedsecondbest}
\end{equation} 
\noindent
Furthermore, we define $\mathcal{E}$ to be the intersection of the three events, i.e., 
\begin{equation}
\mathcal{E} = \mathcal{E}_1 \cap \mathcal{E}_2 \cap \mathcal{E}_3.
\label{eq:defEappendix}
\end{equation}

\noindent
Due to the nature of anytime confidence intervals (See \Cref{eq:anytime}) and union bound over the set of arms, we have $\Pr(\mathcal{E}^c_1) \leq \frac{\delta}{2}$, $\Pr(\mathcal{E}_2^c) \leq \frac{\delta}{4}$ and $\Pr(\mathcal{E}^c_3) \leq \frac{\delta}{4}$ giving us $\Pr(\mathcal{E}^{c}) \leq \delta$. We now show that under the event $\mathcal{E}$, the C-LUCB algorithm cannot stop with an arm $k \neq k^*$. We do that through a proof by contradiction.

Suppose, the algorithm stops with arm $k \neq k^*$, i.e., arm $k$ is the only arm in set $\mathcal{A}_t$. In such a scenario, $\exists \tau, k \neq k^*: \tilde{U}_{k^*}\left(\tau, \frac{\delta}{2K}\right) < L_k\left(\tau, \frac{\delta}{2K}\right)$. This can only occur if one of the following events occur, 
\begin{enumerate}
    \item $\tilde{U}_{k^*,k^*}\left(\tau, \frac{\delta}{2K}\right) <L_k\left(\tau, \frac{\delta}{2K}\right))$
    \item $\tilde{U}_{k^*,\ell}\left(\tau, \frac{\delta}{2K}\right) < L_k\left(\tau, \frac{\delta}{2K}\right) \quad \ell \neq k^*$
\end{enumerate}

See that under the event $\mathcal{E}$, $\tilde{U}_{k^*,\ell}\left(\tau, \frac{\delta}{2K}\right) > \mu_{k^*}$ and $L_k\left(\tau, \frac{\delta}{2K}\right) < \mu_k \quad \forall{\tau,k}$. This shows that under the event $\mathcal{E}$, $\tilde{U}_{k^*}\left(\tau, \frac{\delta}{2K}\right) > L_k\left(\tau, \frac{\delta}{2K}\right) \quad \forall{k, \tau}$ as $\mu_{k^*} > \mu_{k} \quad \forall{k \neq k^*}$. This implies that the algorithm returns the best arm with probability at least $1 - \delta$ as $\Pr(\mathcal{E}^{c}) \leq \delta$.
\end{proof}

\section{$1 - \delta$ Correctness of C-LUCB++}
\label{subsec:plusplus}
We now show that the C-LUCB++ algorithm declares the arm $k^*$ as the best arm with probability at least $1 - \delta$. 
\begin{proof}
 To prove the correctness of C-LUCB++, we use similar arguments as done in the proof of Theorem 1 for the C-LUCB algorithm. In particular, we define an event $\mathcal{E}^+$ that holds true with at least $1-\delta$ probability and show that the C-LUCB++ algorithm always stops with the best arm under the event $\mathcal{E}^+$. 
 
 We define three events $\mathcal{E}^+_1, \mathcal{E}^+_2$ and $\mathcal{E}^+_3$ below.  Let $\mathcal{E}^+_1$ be the event that empirical mean of all arm $k \neq k^*$ lie within their confidence intervals uniformly for all $t \geq 1$
\begin{equation}
\mathcal{E}^+_1 = \Bigg\{ \forall{t \geq 1}, \forall{k \in \mathcal{K}}, ~~~ \hat{\mu}_k(t) - \Confidence \left(n_k(t), \frac{\delta}{3K}\right) \leq \mu_k \leq \hat{\mu}_k + \Confidence \left(n_k(t), \frac{\delta}{3K}\right) \Bigg\}    
\label{assump:wellBehaved1}
\end{equation} 

\noindent
Define $\mathcal{E}^+_2$ to be the event that empirical pseudo-reward of optimal arm with respect to all other arms lie within their confidence intervals uniformly for all $t \geq 1$, i.e., 
\begin{equation}
\mathcal{E}^+_2 = \Bigg\{ \forall{t \geq 1}, \forall{\ell \in \mathcal{K}}, ~~~ \phi_{k^*,\ell} \leq \hat{\phi}_{k^*,\ell}(t) + \Confidence \left(n_\ell(t), \frac{\delta}{3K}\right) \Bigg\}    
\label{assump:PseudowellBehaved1}
\end{equation} 
\noindent

Additionally, define $\mathcal{E}^+_3$ as the event where empirical mean of arm $k^*$ lies below the upper confidence index of arm $k^*$ (constructed with width $\delta/4$) uniformly for all $t \geq 1$, i.e.,
\begin{equation}
\mathcal{E}^+_3 = \Bigg\{ \forall{t \geq 1}, ~~~ \mu_{k^*} \leq \hat{\mu}_{k^*}(t) + \Confidence \left(n_{k^*}(t), \frac{\delta}{4}\right) \Bigg\}    
\label{assump:optimalwellBehaved1}
\end{equation} 
Furthermore, we define $\mathcal{E}^+$ to be the intersection of the three events, i.e., 
\begin{equation}
\mathcal{E}^+ = \mathcal{E}^+_1 \cap \mathcal{E}^+_2 \cap \mathcal{E}^+_3
\label{eq:defEappendix1}
\end{equation}

\noindent
Due to the nature of anytime confidence intervals (See \Cref{eq:anytime}) and union bound over the set of arms, we have $\Pr(\mathcal{E}^+_1) \geq 1 - \frac{\delta}{3}$, $\Pr(\mathcal{E}^+_2) \geq 1 - \frac{\delta}{6}$ and $\Pr(\mathcal{E}^+_3) \geq 1 - \frac{\delta}{2}$, giving us $\Pr(\mathcal{E}^{+}) \geq 1 - \delta$. We now show that under the event $\mathcal{E}^+$, the C-LUCB++ algorithm cannot stop with an arm $k \neq k^*$. We do that through a proof by contradiction.

Suppose, the algorithm stops with arm $k \neq k^*$, i.e., arm $k$ is the only arm in set $\mathcal{A}_t$ or $\max_{\ell \neq k} \tilde{U}_{\ell,\ell}\left(\tau, \frac{\delta}{4}\right) < L_k\left(\frac{\delta}{4K}\right)$. In such a scenario, $\exists \tau, k \neq k^*: \tilde{U}_{k^*}\left(\tau, \frac{\delta}{3K}\right) < L_k\left(\tau, \frac{\delta}{3K}\right)$ or $\tilde{U}_{k^*,k^*}\left(\tau, \frac{\delta}{4}\right) < L_k\left(\frac{\delta}{4K}\right)$ This can only occur if one of the following events occur, 
\begin{enumerate}
    \item $\tilde{U}_{k^*,k^*}\left(\tau, \frac{\delta}{4}\right) < L_k\left(\tau, \frac{\delta}{4K}\right) < L_k\left(\tau, \frac{\delta}{4K}\right) $
    \item $\tilde{U}_{k^*,\ell}\left(\tau, \frac{\delta}{3K}\right) < L_k\left(\tau, \frac{\delta}{3K}\right) \quad \ell \neq k^*$
\end{enumerate}

See that under the event $\mathcal{E}^+$, $\tilde{U}_{k^*,\ell}\left(\tau, \frac{\delta}{3K}\right) > \mu_{k^*}$, $\tilde{U}_{k^*,k^*}\left(\tau, \frac{\delta}{4}\right) < \mu_{k^*}$ and $L_k\left(\tau, \frac{\delta}{3K}\right) < \mu_k \quad \forall{\tau,k}$. This shows that under the event $\mathcal{E}^+$, $\tilde{U}_{k^*, k^*}\left(\tau, \frac{\delta}{4}\right) > L_k\left(\tau, \frac{\delta}{3}\right) \quad \forall{k, \tau}$ and $\tilde{U}_{k^*, \ell}\left(\tau, \frac{\delta}{3K}\right) > L_k\left(\tau, \frac{\delta}{3K}\right) \quad \forall{\ell \neq k^*}$ as $\mu_{k^*} > \mu_{k} \quad \forall{k \neq k^*}$. This implies that the algorithm returns the best arm with probability at least $1 - \delta$ as $\Pr(\mathcal{E}^+) \geq 1 - \delta$.
\end{proof}

\end{document}